\def\eqref#1{equation~\ref{#1}}
\def\1{\bm{1}}
\DeclareMathAlphabet{\mathsfit}{\encodingdefault}{\sfdefault}{m}{sl}
\SetMathAlphabet{\mathsfit}{bold}{\encodingdefault}{\sfdefault}{bx}{n}
\newcommand{\Var}{\mathrm{Var}}
\definecolor{navyblue}{rgb}{0.0,0.0,0.5}
\crefname{prop}{Proposition}{Propositions}
\crefname{thm}{Theorem}{Theorems}
\crefname{lem}{Lemma}{Lemmas}
\crefname{algorithm}{Algorithm}{Algorithms}
\newtheorem{thm}{Theorem}
\newtheorem{corollary}{Corollary}
\newtheorem{assum}{Assumption}
\newtheorem{prop}{Proposition}
\newcommand{\figwidthtwo}{0.48\textwidth}
\newcommand{\figwidththree}{0.32\textwidth}
\newcommand{\figwidthfour}{0.235\textwidth}
\newcommand{\figwidthfive}{0.18\textwidth}
\newcommand{\Actions}{\mathcal{A}}
\newcommand{\States}{\mathcal{S}}
\newcommand{\RR}{\mathbb{R}}
\newcommand{\Wmat}{\mathbf{W}}
\newcommand{\defeq}{\mathrel{\overset{\makebox[0pt]{\mbox{\normalfont\tiny\sffamily def}}}{=}}}
\newcommand{\cvec}{\mathbf{c}}
\newcommand{\evec}{\mathbf{e}}
\newcommand{\hvec}{\mathbf{h}}
\newcommand{\wvec}{\mathbf{w}}
\newcommand{\xvec}{\mathbf{x}}
\newcommand{\zvec}{\mathbf{z}}
\newcommand{\EE}{\mathbb{E}}
\newcommand{\onehot}{\boldsymbol{\phi}}
\newcommand{\onehotapp}{{\boldsymbol{\phi}_\eta}}
\newcommand{\indicatorplus}{I_+}
\newcommand{\indicatorplusapp}{I_{\eta,+}}
\title{Fuzzy Tiling Activations: A Simple Approach to Learning Sparse Representations Online}
\author{%
	Yangchen Pan 
	\\
	University of Alberta \\
	\texttt{pan6@ualberta.ca} \\
	\And
	 Kirby Banman \\
	University of Alberta \\
	\texttt{kdbanman@ualberta.ca} \\
	 \And
    Martha White \\
	University of Alberta \\
	\texttt{whitem@ualberta.ca} \\
}
\begin{document}
\maketitle
\begin{abstract}  
Recent work has shown that sparse representations---where only a small percentage of units are active---can significantly reduce interference. Those works, however, relied on relatively complex regularization or meta-learning approaches, that have only been used offline in a pre-training phase. In this work, we pursue a direction that achieves sparsity by design, rather than by learning. Specifically, we design an activation function that produces sparse representations deterministically by construction, and so is more amenable to online training. The idea relies on the simple approach of binning, but overcomes the two key limitations of binning: zero gradients for the flat regions almost everywhere,  and lost precision---reduced discrimination---due to coarse aggregation. We introduce a Fuzzy Tiling Activation (FTA) that provides non-negligible gradients and produces overlap between bins that improves discrimination. We first show that FTA is robust under covariate shift in a synthetic online supervised learning problem, where we can vary the level of correlation and drift. Then we move to the deep reinforcement learning setting and investigate both value-based and policy gradient algorithms that use neural networks with FTAs, in classic discrete control and Mujoco continuous control environments. We show that algorithms equipped with FTAs are able to learn a stable policy faster without needing target networks on most domains. \footnote{Code is available at \url{https://github.com/yannickycpan/reproduceRL.git}}
\end{abstract}


\section{Introduction}
Representation learning in online learning systems can strongly impact learning efficiency, both positively due to generalization but also negatively due to interference~\citep{liang2016shallowatari,heravi2019learning,le2017learning,vincent2018sparse,chandak2019actionreps,hugo2018reps,mad2019reps}. Neural networks particularly suffer from interference---where updates for some inputs degrade accuracy for others---when training on temporally correlated data~\citep{mccloskey1989catastrophic,french1999catastrophic, kemker2018measuring}. 



Recent work~\citep{vincent2018sparse,sina2020geometric,javed2019meta,rafati2019sparse,fern2019learning}, as well as older work~\citep{mccloskey1989catastrophic,french1991using}, have shown that sparse representation can reduce interference in training parameter updates. A sparse representation is one where only a small number of features are active, for each input~\citep{cheng2013sparse}. Each update only impacts a small number of weights and so is less likely to interfere with many state values. Further, when constrained to learn sparse features, the feature vectors are more likely to be orthogonal~\citep{cover1965geometrical}, which further mitigates interference. The learned features can still be highly expressive, and even more interpretable, as only a small number of attributes are active for a given input.  

However, learning sparse representations online remains relatively open. Some previous work has relied on representations pre-trained before learning, either with regularizers that encourage sparsity \citep{robert1996lasso,zhen2011sparse,vincent2018sparse} or with meta-learning~\citep{javed2019meta}. Other work has trained the sparse-representation neural network online, by using sparsity regularizers online with replay buffers \citep{fern2019learning} or using a winner-take-all strategy where all but the top activations are set to zero~\citep{rafati2019sparse}. \citet{fern2019learning} found that many of these sparsity regularizers were ineffective for obtaining sparse representations without high levels of dead neurons, though the regularizers did still often improve learning. The Winner-Take-All (WTA) approach is non-differentiable, and there are mixed results on it's efficacy, some positive~\citep{rafati2019sparse} and some negative \citep{vincent2018sparse}. Finally, kernel representations can be used online, and when combined with a WTA approach, provide sparse representations. There is some evidence that using only the closest prototypes---and setting kernel values to zero for the further prototypes---may not hurt approximation quality~\citep{schlegel2017adapting}. However, kernel-based methods can be difficult to scale to large problems, due to computation and difficulties in finding a suitable distance metric. 
Providing a simpler approach to obtain sparse representations, that are easy to train online, would make it easier for researchers from the broad online learning community to adopt sparse representations and further explore their utility. 

In this work, we pursue a strategy for what we call \emph{natural sparsity}---an approach where we achieve sparsity by design rather than by encoding sparsity in the loss.
We introduce an activation function that facilitates sparse representation learning in an end-to-end manner without the need of additional losses, pre-training or manual truncation. Specifically, we introduce a Fuzzy Tiling Activation (FTA) function that naturally produce sparse representation with controllable sparsity and can be conveniently used like other activation functions in a neural network. 
FTA relies on the idea of designing a differentiable approximate binning operation---where inputs are aggregated into intervals. We prove that the FTA guarantees sparsity by construction.
We empirically investigate the properties of FTA in an online supervised learning problem, where we can carefully control the level of correlation. We then empirically show FTA's practical utility in a more challenging online learning setting---the deep Reinforcement Learning (RL) setting. On a variety of discrete and continuous control domains, deep RL algorithms using FTA can learn more quickly and stably compared to both those using ReLU activations and several online sparse representation learning approaches. 


\section{Problem Formulation}\label{Sec:bg}

FTA is a generic activation that can be applied in a variety of settings. A distinct property of FTA is that it does not need to learn to ensure sparsity; instead, it provides an immediate, deterministic sparsity guarantee. We hypothesize that this property is suitable for handling highly nonstationary data in an online learning setting, where there is highly correlated data stream and a strong need for interference reduction. We therefore explicitly formalize two motivating problems: the online supervised learning problem and the reinforcement learning (RL) problem. 


\textbf{Online Supervised Learning problem setting}. The agent observes a temporally correlated stream of data, generated by a stochastic process $\{(X_t, Y_t)\}_{t \in \mathbb{N}}$, where the observations $X_t$ depend on the past $\{X_{t-i}\}_{i \in \mathbb{N}}$. In our supervised setting, $X_t$ depends only on $X_{t-1}$, and the target $Y_t$ depends only on $X_t$ according to a stationary underlying mean function $f(x) = \mathbb{E}[Y_t | X_t = x]$. On each time step, the agent observes $X_t$, makes a prediction $f_\theta(X_t)$ with its parameterized function $f_\theta$, receives target $Y_t$ and incurs a prediction error.
The goal of the agent is to approximate function $f$---the ideal predictor---by learning from correlated data in an online manner, unlike standard supervised learning where data is independent and identically distributed (iid). 

\textbf{RL problem setting}. We formalize the interaction using Markov decision processes (MDPs). An MDP consists of $(\States, \Actions, \mathbb{P}, R,  \gamma)$, where
$\States$ is the state space, $\Actions$ is the action space, $\mathbb{P}$ is the transition probability kernel, $R$ is the reward function, and $\gamma \in [0,1]$ is the discount factor.
At each time step $t = 1, 2, \dotsc$, the agent observes a state $s_t \in \States$ and takes an action $a_t \in \Actions$.
Then the environment transits to the next state according to the transition probability distribution, i.e., $s_{t+1} \sim \mathbb{P}(\cdot| s_t, a_t)$, and the agent receives a scalar reward $r_{t+1} \in \RR$ according to the reward function $R: \States \times \Actions \times \States \rightarrow \RR$.
A policy is a mapping from a state to an action (distribution) $\pi:\mathcal{S}\times\mathcal{A}\rightarrow[0,1]$.
For a given state-action pair $(s, a)$, the action-value function under policy $\pi$ is defined as $Q_\pi(s,a) = \mathbb{E}[G_t| S_t=s, A_t=a; A_{t+1:\infty}\sim\pi]$ where $G_t \defeq \sum_{t=0}^{\infty} \gamma^t R(s_t, a_t, s_{t+1})$ is the return of a sequence of transitions $s_0, a_0, s_1, a_1, ... $ by following the policy $\pi$. 

The goal of an agent is to find an optimal policy that obtains maximal expected return from each state. The policy is either directly learned, as in policy gradient methods~\citep{sutton1990pg,sutton2018intro}, or the action-values are learned and the policy inferred by acting greedily with respect to the action-values, as in Q-learning~\citep{watkins1992q}. In either setting, we often parameterize the policy/value function by a neural network (NN). 
For example, Deep $Q$ Networks (DQN)~\citep{mnih2015human} parameterizes the action-value function $Q_\theta: \States \times \Actions \mapsto \RR$ by a NN. The bootstrap target for updating a state-action value is computed by using a separate target NN $Q_{\theta^-}: \States \times \Actions \mapsto \RR$ parameterized by $\theta^-$: $y_t = r_{t+1} + \gamma \max_{a'} Q_{\theta^-}(s_{t+1}, a')$. The target NN parameter $\theta^-$ is updated by copying from $\theta$ every certain number of time steps. 

Online deep RL control problems can be highly nonstationary, for two primary reasons. First,  the environment itself could be highly nonstationary, or alternatively, partially observable. Second, the data distribution is constantly shifting because of both the changing policy and shifting training targets. The latter can be mitigated by using a target NN as described above, which has become critical in successfully training many deep RL algorithms. However, it potentially slows learning as the new information is not immediately used to update action-values; instead, the slower moving and potentially out-dated target NN is used. Several works reported that successful training without a target NN can improve sample efficiency of RL algorithms~\citep{vincent2018sparse,hasselt2018deadtriad,yang2019dqnconverge,kim2019deepmellow,rafati2019sparse,yang2019dqnconverge,sina2020geometric}. We show that deep RL algorithms using our activation is able to achieve superior performance without using a target NN, indicating the benefit of applying our method to nonstationary, online problems. 

\section{Binning with Non-negligible Gradients}\label{Sec:main_approach}

In this section, we develop the \emph{Fuzzy Tiling Activation} (FTA), as a new modular component for neural networks that provides sparse representations. We first introduce a new way to compute the binning of an input, using indicator functions. This activation provides guaranteed sparsity but has a gradient of zero almost everywhere. Then, we provide a smoothed version, resulting in non-negligible gradients that make it compatible with back-propagation algorithms. We then prove that the fuzzy version is still guaranteed to provide sparse representation and the sparsity can be easily tuned. 

\subsection{Tiling Activation}

The tiling activation inputs a scalar $z$ and outputs a binned vector. This vector is one-hot, with a 1 in the bin corresponding to the value of $z$, and zeros elsewhere. Note that a standard activation typically maps a scalar to a scalar. However, the tiling activation maps a scalar to a vector, as depicted in Figure \ref{fig:activations_tiling}. This resembles tile coding, which inspires the name Tiling Activation; to see this connection, we include a brief review of tile coding in the Appendix~\ref{sec-tc-review}. In this section, we show how to write the tiling activation compactly, using element-wise max and indicator functions.

\begin{figure*}[!htbp]
		\vspace{-0.3cm}
	\centering
	\subfigure[TA, $k = 4$]{%
	  \includegraphics[width=\figwidththree]{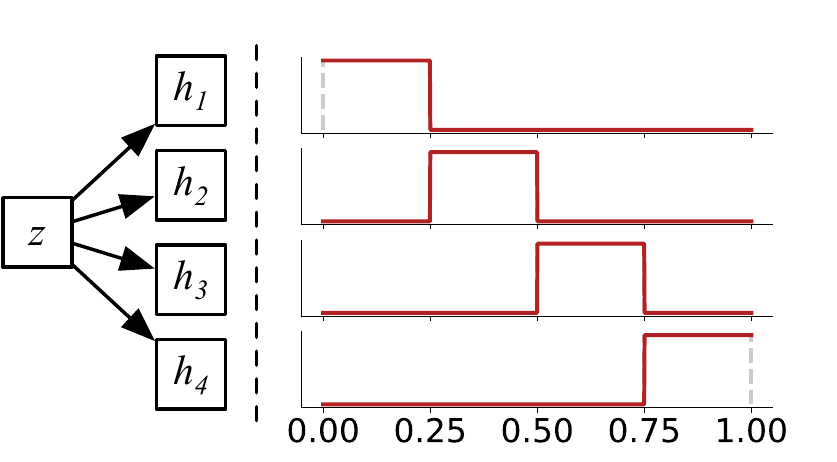}%
	  \label{fig:activations_tiling}%
	}
	\subfigure[FTA, $k = 4, \eta = 0.1$]{%
	  \includegraphics[width=\figwidththree]{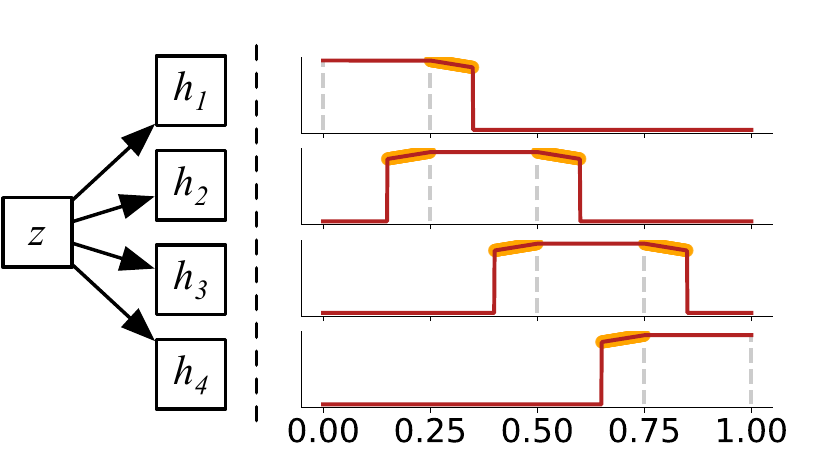}%
	  \label{fig:activations_fuzzy_tiling}%
	}
	\subfigure[FTA, $k = 4, \eta = 0.25$]{%
	  \includegraphics[width=\figwidththree]{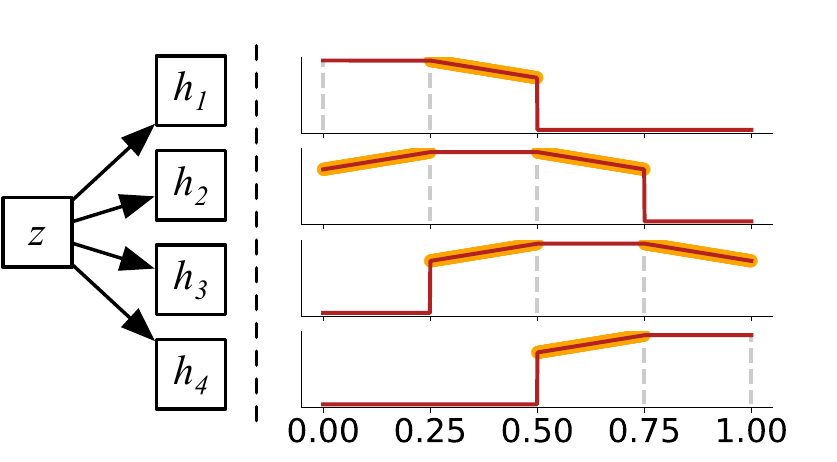}%
	  \label{fig:activations_fuzzy_tiling_wider}%
	}
	\caption{
		\small
		a) The regular TA mapping $\mathbb{R} \rightarrow \mathbb{R}^k$, with each output element $h_i$ corresponds to a different bin.  b) The FTA with $\eta>0$, permitting both overlap in activation, and nonzero gradient between the vertical red and gray lines.  c) Larger values for $\eta$ extends the sloped lines further from either side of each plateau, increasing the region that has non-negligible gradients. 
	}
	\vspace{-0.3cm}
\end{figure*}\label{fig:lta-vary-eta}

Assume we are given a range $[l,u]$ for constants $l,u \in \RR$, where we expect the input $z \in [l,u]$. The goal is to convert the input, to a one-hot encoding, with evenly spaced bins of size $\delta \in \RR^+$. Without loss of generality, we assume that $u-l$ is evenly divisible by $\delta$; if it is not, the range $[l,u]$ could be slightly expanded, evenly on each side, to ensure divisibility. Define the $k$-dimensional tiling vector
 \begin{equation}\label{def-tiling}
\cvec \defeq (l, l+\delta, l+2\delta, ...,  u - 2\delta, u - \delta).
\end{equation}
where $k = (u-l)/\delta$. The \textbf{tiling activation} is defined as 
\begin{equation}\label{onehot-op}
	\onehot(z) \defeq 1 - \indicatorplus(\max(\cvec-z, 0) + \max(z-\delta-\cvec, 0))
\end{equation}
where $\indicatorplus(\cdot)$ is an indicator function, which returns 1 if the input is positive, and zero otherwise. The indicator function for vectors is applied element-wise. 
In Proposition~\ref{thm-onehot-guarantee}, we prove that $\onehot$ returns a binned encoding: if $\cvec_i < z < \cvec_{i+1}$, then $\onehot(z)$ returns $\evec_i$ the one-hot (standard basis) vector with a 1 in the $i$-th entry and zero elsewhere. For values of $z$ that fall on the boundary, $z = \cvec_i$, the encoding returns a vector with ones in both the $i-1$th and $i$th entries. Consider the below example for intuition. 

\textbf{Example}. Assume $[l,u] = [0,1]$ and set the tile width to $\delta=0.25$. Then the tiling vector $\cvec$ has four tiles ($k=4$): $\cvec = (0, 0.25, 0.5, 0.75)$. If we apply the tiling activation to $z = 0.3$, because $0.25 < 0.3 < 0.5$, the output should be $(0, 1, 0, 0)$. To see $\onehot(z)$ does in fact return this vector, we compute each max term
\begin{equation*}
	\max(\cvec-z, 0) = (0, 0, 0.2, 0.45) \ \ \ \ \text{ and } \ \ \
	\max(z-\delta-\cvec, 0) = \max(0.05-\cvec, 0) = (0.05, 0, 0, 0).
\end{equation*}
The addition of the two is
$(0.05, 0, 0.2, 0.45)$ and so 
$1 - \indicatorplus(0.05, 0, 0.2, 0.45) = 1 - (1, 0, 1, 1) = (0, 1, 0, 0)$.
%
%
The first max extracts those components in $\cvec$ that are strictly greater than $z$, and the second max extracts those strictly less than $z$. The addition gives the bins that are strictly greater and strictly less than the bin for $z$, leaving only the entry corresponding to that activated bin as $0$, with all others positive. The indicator function sets all nonzero entries to one and then using one minus this indicator function's output provides us the desired binary encoding. We rigorously characterize the possible output cases for the activation in the Appendix~\ref{sec-proof-thm-onehot}.


\subsection{Fuzzy Tiling Activation (FTA)}

The Tiling Activation provides a way to obtain sparse, binary encodings for features learned within a NN. 
Unfortunately, the tiling activation has a zero derivative almost everywhere as visualized in Figure \ref{fig:activations_tiling}. 
In this section, we provide a fuzzy tiling activation, that has non-zero derivatives and so is amenable to use with backpropagation. 

To design the FTA, we define the fuzzy indicator function\footnote{The word {fuzzy} reflects that an input can partially activate a tile, with a lower activation than 1, as an analogy to the concept of partial inclusion and degrees of membership from fuzzy sets.} 
\begin{equation}\label{indicator-app}
\indicatorplusapp(x) \defeq \indicatorplus(\eta - x) x + \indicatorplus(x - \eta)
\end{equation}
where $\eta$ is a small constant for controlling the sparsity. The first term $\indicatorplus(\eta - x)$ is 1 if $x < \eta$, and 0 otherwise. The second term $\indicatorplus(x - \eta)$ is 1 if $x > \eta$, and 0 otherwise. If $x < \eta$, then $\indicatorplusapp(x) = x$, and else $\indicatorplusapp(x) = 1$. The original indicator function $\indicatorplus$ can be acquired by setting $\eta=0$. When $\eta>0$, the derivative is non-zero for $x < \eta$, and zero otherwise. Hence the derivative can be propagated backwards through those nonzero entries. Using this fuzzy indicator function, we define the following \textbf{Fuzzy Tiling Activation} (FTA)
\begin{equation}\label{onehot-app}
	\onehotapp(z) \defeq 1 - \indicatorplusapp(\max(\cvec-z, 0) + \max(z-\delta-\cvec, 0))
\end{equation}
where again $\indicatorplusapp$ is applied elementwise. 

We depict FTA with different $\eta$s in Figure~\ref{fig:lta-vary-eta}. For the smaller $\eta$, the FTA extends the activation to the neighbouring bins. The activation in these neighbouring bins is sloped, resulting in non-zero derivatives. For this smaller $\eta$, however, there are still regions where the derivative is zero (e.g., $z = 0.3$ in Figure \ref{fig:activations_fuzzy_tiling}). The regions where derivatives are non-zero can be expanded by increasing $\eta$ as shown in Figure~\ref{fig:activations_fuzzy_tiling_wider}. Hence we can adjust $\eta$ to control the sparsity level as we demonstrate in Section~\ref{sec-additional-experiments}. 
\begin{wrapfigure}[10]{r}{0.53\textwidth}
\vspace{-0.3cm}
	\includegraphics[width=0.52\textwidth]{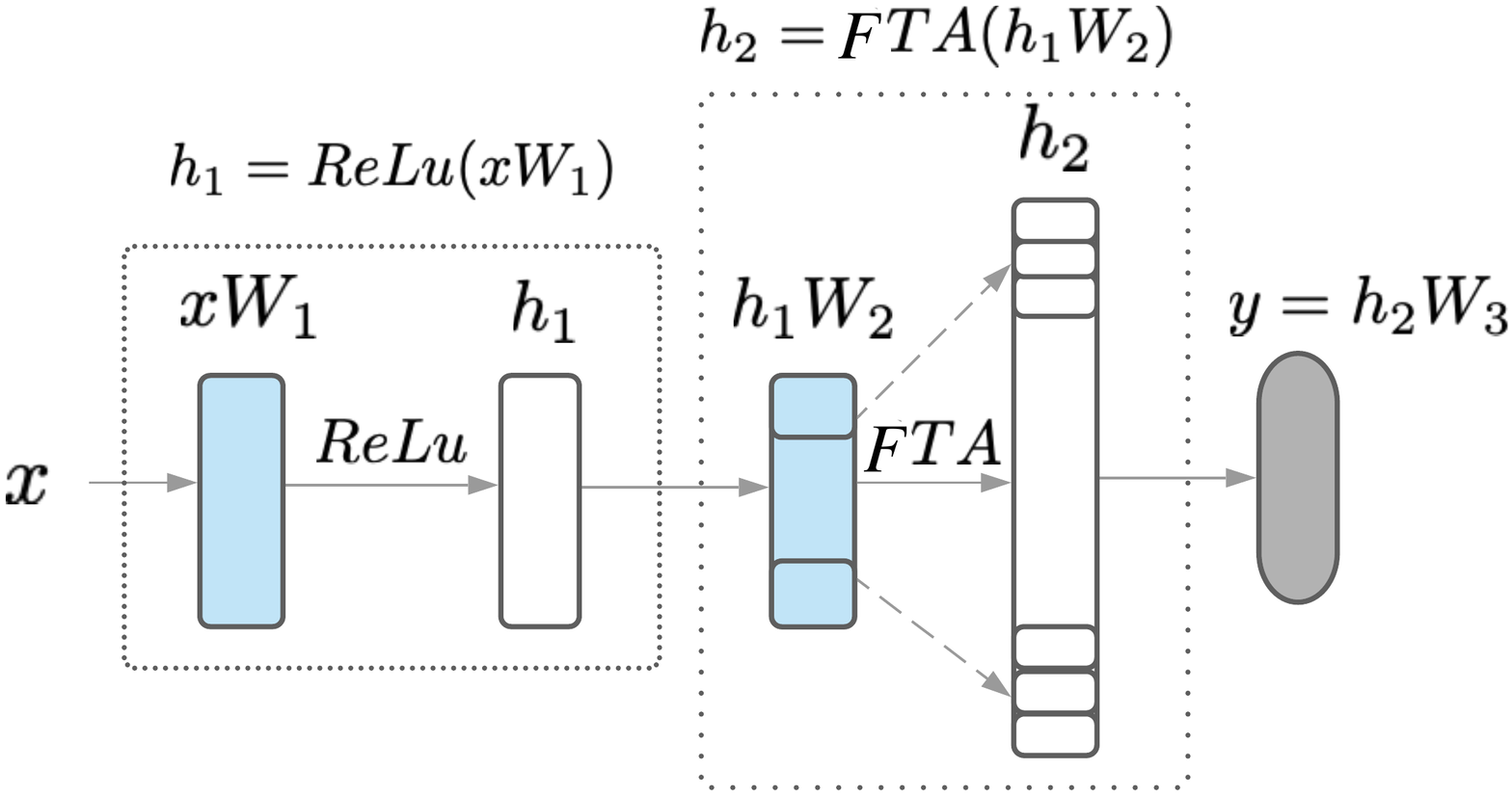}
\vspace{-0.3cm}
	\caption{\small A visualization of an FTA layer}\label{fig:lta-nn-demo}
\end{wrapfigure}
Figure~\ref{fig:lta-nn-demo} shows a neural network with FTA applied to the second hidden layer and its output $y$ is linear in the sparse representation. FTA itself does not introduce any new training parameters, just like other activation functions. 
For input $\xvec$, after computing first layer $\hvec_1 = \xvec \Wmat_1$, we apply $\onehotapp(\zvec)$ to $\hvec_1\Wmat_2 \in \RR^d$ to get the layer $\hvec_2$ of size $kd$. This layer consists of stacking the $k$-dimensional sparse encodings, for each element in $\hvec_1\Wmat_2$.


\subsection{Guaranteed Sparsity from the FTA}

We now show that the FTA maintains one of the key properties of the tiling activation: sparsity. The distinction with many existing approaches is that our sparsity is guaranteed by design and hence is not a probabilistic guarantee. We first characterize the vectors produced by the FTA, in Proposition \ref{thm-apponehot} with proof in Appendix~\ref{sec-proof-thm-apponehot}. Then, we provide an upper bound on the proportion of nonzero entries in the generated vector in Theorem \ref{thm-sparsity-guarantee}, with in Appendix~\ref{sec-proof-thm-sparsity-guarantee}.
 \begin{assum}\label{assum-k-delta}
	$\delta < u-l$, where $k \delta = u-l$ for $k \in \mathbb{N}$.
\end{assum}
\begin{thm}[Sparsity guarantee for FTA.]\label{thm-sparsity-guarantee}
For any $z \in [l, u], \eta>0$, $\onehotapp(z)$ outputs a vector whose number of nonzero entries $\|\onehotapp(z)\|_0$ satisfies:
\[
\|\onehotapp(z)\|_0 \le 2\left\lfloor\frac{\eta}{\delta}\right\rfloor + 3
\]
\end{thm}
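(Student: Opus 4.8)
The plan is to reduce the global sparsity claim to a one-dimensional counting problem on the tiling vector $\cvec$. First I would analyze the FTA componentwise. For each index $i$, write $a_i(z) \defeq \max(\cvec_i - z, 0) + \max(z - \delta - \cvec_i, 0)$ for the scalar fed into $\indicatorplusapp$. A short case split on the signs of $\cvec_i - z$ and $z - \delta - \cvec_i$ shows that $a_i(z) = 0$ exactly when $z \in [\cvec_i, \cvec_i + \delta]$, that $a_i(z) = \cvec_i - z$ when $z < \cvec_i$, and that $a_i(z) = z - \delta - \cvec_i$ when $z > \cvec_i + \delta$. In other words $a_i(z)$ is precisely the distance from $z$ to the interval $[\cvec_i, \cvec_i + \delta]$, being $0$ on that interval and growing with unit slope on either side.

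Next I would translate ``nonzero entry'' into a condition on $a_i(z)$. From the definition of the fuzzy indicator, $\indicatorplusapp(x) = 1$ whenever $x > \eta$, so $a_i(z) > \eta$ forces $\onehotapp(z)_i = 1 - \indicatorplusapp(a_i(z)) = 0$. Contrapositively, every nonzero entry satisfies $a_i(z) \le \eta$, hence $\|\onehotapp(z)\|_0 \le \lvert\{\, i : a_i(z) \le \eta \,\}\rvert$. Combining this with the distance characterization, $a_i(z) \le \eta$ holds iff $z \in [\cvec_i - \eta, \cvec_i + \delta + \eta]$, equivalently iff $\cvec_i$ lies in the interval $[z - \delta - \eta,\, z + \eta]$, whose length is $\delta + 2\eta$. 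So the number of nonzero entries is at most the number of tiling points $\cvec_i$ inside a fixed closed interval of length $\delta + 2\eta$.

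Finally I would count lattice points. Since the $\cvec_i$ form an arithmetic progression with common spacing $\delta$ (Assumption~\ref{assum-k-delta}), any closed interval of length $L$ contains at most $\floor{L/\delta} + 1$ of them; restricting to the valid index range $1, \dots, k$ can only decrease this count, so the infinite-lattice bound is safe. With $L = \delta + 2\eta$ this gives $\floor{(\delta + 2\eta)/\delta} + 1 = \floor{2\eta/\delta} + 2$. To reach the stated form I would then apply the elementary inequality $\floor{2x} \le 2\floor{x} + 1$ with $x = \eta/\delta$, yielding $\floor{2\eta/\delta} + 2 \le 2\floor{\eta/\delta} + 3$, which is the claim. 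The main obstacle is bookkeeping at the edges rather than any genuine difficulty: I must treat boundary values of $z$ (where $z$ sits exactly on a tile edge, so two cores are simultaneously active) and the degenerate case $a_i(z) = \eta$ consistently, and I must state the lattice-point count for a closed interval with the correct $+1$ so that the tightest configurations (e.g.\ small $\eta$, where at most two tiles fire) are not undercounted.
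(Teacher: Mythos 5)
Your proof is correct, and it takes a genuinely different route from the paper's. The paper's proof mirrors the three-case structure of its Proposition~\ref{thm-onehot-guarantee}: it splits on whether $z$ lies strictly inside a bin, exactly on an interior knot $\cvec_i$, or at $\cvec_1$, writes out the vector $\max(\cvec-z,0)+\max(z-\delta-\cvec,0)$ explicitly in each case, and counts separately how many entries on the left and on the right of the active position can be $\le \eta$ (at most $\lfloor \eta/\delta\rfloor + 1$ per side, plus the active entries). Your argument instead identifies each coordinate $a_i(z)$ as the distance from $z$ to the tile $[\cvec_i, \cvec_i+\delta]$, converts activation into the membership condition $\cvec_i \in [z-\delta-\eta,\, z+\eta]$, and invokes a single lattice-counting lemma: an arithmetic progression with spacing $\delta$ meets a closed interval of length $\delta+2\eta$ in at most $\lfloor (\delta+2\eta)/\delta\rfloor + 1 = \lfloor 2\eta/\delta\rfloor + 2$ points. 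This is case-free --- the distance characterization handles boundary values of $z$ uniformly, with no special treatment of knots --- and it yields a strictly sharper intermediate bound, since $\lfloor 2\eta/\delta\rfloor + 2 \le 2\lfloor \eta/\delta\rfloor + 3$ always, with strict inequality whenever the fractional part of $\eta/\delta$ is below $1/2$; for instance at $\eta \to 0$ your count gives the exact worst case of $2$ active entries where the paper's bound gives $3$. What the paper's longer enumeration buys is an explicit description of \emph{which} entries are active in each configuration, which it reuses in Proposition~\ref{thm-apponehot} and in the Remark arguing that the two-sided worst case only occurs for $z$ at special positions, so typical sparsity is well below the bound. Your handling of the edge cases is also sound: entries with $a_i(z)=\eta$ are active (indeed $\indicatorplusapp(\eta)=0$, so the output entry equals $1$), and your inclusive condition $a_i(z)\le\eta$ counts them; and the $+1$ in the closed-interval lattice count is exactly what is needed so that, e.g., both endpoints of an interval of length $\delta$ are counted.
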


\begin{corollary}\label{cor-sparsity}
	Let $\rho \in [0,1)$ be the desired sparsity level: the maximum proportion of nonzero entries of $\onehotapp(z), \forall z \in [l, u]$. Assume $\rho k \ge 3$, i.e., some inputs have three active indices or more (even with $\eta = 0$, this minimal active number is 2). Then $\eta$ should be chosen such that 
\begin{equation}
\left\lfloor\frac{\eta}{\delta}\right\rfloor \le \frac{k\rho -3}{2} \ \ \ \text{or equivalently} \ \ \ \eta \le \frac{\delta}{2} \left(\left\lfloor k\rho\right\rfloor - 1\right) 
\end{equation}
\end{corollary}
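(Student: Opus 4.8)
The plan is to read the corollary off Theorem~\ref{thm-sparsity-guarantee} directly, treating it as a design rule for $\eta$ given a target sparsity $\rho$. First I would make the notion of ``sparsity level'' precise: since $\onehotapp(z)\in\RR^k$, demanding that the proportion of nonzero entries be at most $\rho$ for every $z\in[l,u]$ is exactly the requirement $\|\onehotapp(z)\|_0 \le \rho k$, uniformly in $z$. By Theorem~\ref{thm-sparsity-guarantee} the left-hand side is controlled by $2\lfloor \eta/\delta\rfloor + 3$, so it suffices to enforce $2\lfloor \eta/\delta\rfloor + 3 \le \rho k$. Rearranging this single inequality yields the first stated condition $\lfloor \eta/\delta\rfloor \le (k\rho-3)/2$ with no further work.

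Next I would obtain the ``equivalent'' bound stated directly on $\eta$. The key step is an integrality argument: because $2\lfloor \eta/\delta\rfloor + 3$ is a nonnegative integer (and indeed $\|\onehotapp(z)\|_0$ is an integer count of active units), the constraint $2\lfloor \eta/\delta\rfloor + 3 \le \rho k$ is equivalent to $2\lfloor \eta/\delta\rfloor + 3 \le \lfloor \rho k\rfloor$, using that an integer is at most a real number exactly when it is at most that number's floor. I would then solve this for $\eta$, collapsing the inner floor $\lfloor \eta/\delta\rfloor$ into a single threshold on $\eta$ itself, arriving at $\eta \le \tfrac{\delta}{2}(\lfloor k\rho\rfloor - 1)$.

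I would also verify feasibility and pin down the role of the hypothesis $\rho k \ge 3$. Substituting $\eta = 0$ (so $\lfloor \eta/\delta\rfloor = 0$) into Theorem~\ref{thm-sparsity-guarantee} gives the bound $3$, so the target $\|\onehotapp(z)\|_0 \le \rho k$ can be met by shrinking $\eta$ only when $\rho k \ge 3$; this is precisely the stated assumption, and it is exactly what makes $(k\rho-3)/2$ nonnegative so that a genuinely positive $\eta$ is permitted. This also matches the in-text remark that even the non-fuzzy tiling activation keeps at least two active indices, so no target with $\rho k < 3$ could be guaranteed by the bound by merely tuning $\eta$.

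The main obstacle I anticipate is the floor-function bookkeeping in passing from the condition on $\lfloor \eta/\delta\rfloor$ to the clean threshold on $\eta$: the two displayed forms are not literally identical for every value of $k\rho$, since they can disagree at the integer boundaries of $\eta/\delta$. I would therefore be careful to state the equivalence only up to the intended rounding of $\rho k$, or alternatively to present the $\eta$-threshold as the clean sufficient condition it is meant to be, verifying at which step $\rho k$ is rounded and where the inner floor is dropped. Everything else reduces to a one-line substitution into Theorem~\ref{thm-sparsity-guarantee}.
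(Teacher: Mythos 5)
Your derivation of the first inequality is correct and is exactly the paper's: impose $2\lfloor\eta/\delta\rfloor + 3 \le \lfloor k\rho\rfloor \le k\rho$ via Theorem~\ref{thm-sparsity-guarantee} and rearrange. The gap is in how you obtain the threshold on $\eta$ itself. Your plan is to get it by ``solving'' the floor-condition for $\eta$ (an equivalence), with the fallback of ``presenting the $\eta$-threshold as the clean sufficient condition it is meant to be.'' Neither route can work, because the bound $\eta \le \frac{\delta}{2}\left(\lfloor k\rho\rfloor - 1\right)$ is strictly \emph{weaker} than the floor-condition and is not sufficient for the sparsity guarantee. Concretely, take $k\rho = 5$: the $\eta$-bound permits $\eta = 2\delta$, for which $\lfloor\eta/\delta\rfloor = 2 > 1 = (k\rho-3)/2$, and Theorem~\ref{thm-sparsity-guarantee} then only yields $\|\onehotapp(z)\|_0 \le 7 > 5$. (Even the paper's own worked example has this feature: $\eta = 0.225$, $\delta=0.05$, $k\rho = 10$ gives $\lfloor\eta/\delta\rfloor = 4$ and a guarantee of $2\cdot 4 + 3 = 11 > 10$ active entries.) Exact solving, which you gesture at, produces a genuinely different threshold --- $\eta < \frac{\delta}{2}\left(\lfloor k\rho\rfloor - 1\right)$ when $\lfloor k\rho\rfloor$ is odd, and $\eta < \frac{\delta}{2}\left(\lfloor k\rho\rfloor - 2\right)$ when it is even --- not the displayed one.

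What the paper actually does is a one-line move in the opposite direction, which your proposal never invokes: starting from the sufficient condition $\lfloor\eta/\delta\rfloor \le (\lfloor k\rho\rfloor - 3)/2$ and the elementary inequality $\eta/\delta - 1 \le \lfloor\eta/\delta\rfloor$, it concludes $\eta/\delta \le (\lfloor k\rho\rfloor - 3)/2 + 1$, i.e.\ $\eta \le \frac{\delta}{2}\left(\lfloor k\rho\rfloor - 1\right)$. So the second display is derived as a \emph{necessary consequence} (a clean relaxation) of the first, not as an equivalent reformulation. You were right to flag that the two forms disagree at integer boundaries --- the word ``equivalently'' in the statement is loose --- but the repair is to fix the direction of the implication, not the rounding convention: any attempt to establish the $\eta$-threshold as sufficient must fail, as the counterexample above shows.
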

As an example, for $k = 100$, $\delta = 0.05$ and a desired sparsity of at least 10\% ($\rho = 0.1$), we can use $\eta = \frac{0.05}{2} (\lfloor 100 \times 0.1 \rfloor - 1) = 0.225$. 
Note that the bound in Theorem~\ref{thm-sparsity-guarantee} is loose in practice as the bound is for any input $z$. In Appendix~\ref{sec-proof-thm-sparsity-guarantee} and~\ref{sec-additional-experiments}, we theoretically and empirically show that the actual sparsity is usually lower than the upper bound, and quite consistent across inputs. 


\section{Experiments in Supervised Learning under Covariate Shift}\label{sec:slexp}

In this section, we focus on testing the hypothesis that FTA provides representations that are more robust to learning online on correlated data. Specifically, we hypothesize that convergence speed and stability for ReLU networks suffer under strongly correlated training data, whereas comparable FTA networks are nearly unaffected. We create a synthetic supervised problem with a relatively simply target function, and focus the investigation on the impact of a drifting distribution on inputs, which results both in covariate shift and creates temporal correlation during training. 
We also report results on two benchmark image classification tasks in the Appendix~\ref{sec-image-results}. 

The Piecewise Random Walk Problem has Gaussian $X_t \sim \mathcal{N}(S_t, \beta^2)$ with fixed variance $\beta^2$ and a mean $S_t$ that drifts every $T$ steps. More precisely, the mean $S_t$ stays fixed for $T$ timesteps, then takes a step according to a first order autoregressive random walk: $S_t = (1-c)S_t + Z_t$ where $c \in (0, 1]$ and $Z_t \sim \mathcal{N}(0, \sigma^2)$ for fixed variance $\sigma^2$. If $c = 0$, then this process is a standard random walk; otherwise, with $c < 1$, it keeps $S_t$ in a bounded range with high probability.   For $x_t \sim X_t$, the training label $y_t$ is defined as $y_t = \sin(2 \pi x_t^2)$.


This process is designed to modulate the level of correlation---which we call \emph{correlation difficulty}---without changing the equilibrium distribution over $X_t$. As the correlation difficulty $d$ varies from $0$ to $1$, the training data varies from low to high correlation: $d=0$ recovers iid sampling. All $d\in[0,1)$ share the same equilibrium distribution in $X_t$.  $X_t$ is ergodic and has Gaussian equilibrium distribution $\mathcal{N}(0, \xi^2)$, with variance $\xi^2$ dependent upon $\beta^2, \sigma^2$ and $c$. In particular, the visitation distribution $X_t$ for any training run will converge to the equilibrium distribution. This ensures that measuring loss with respect to the stationary distribution is a fair comparison, because the visitation distribution of $X_t$ is identical across all settings.  We depict sample trajectories with low $d$ and high $d$ in Figure~\ref{fig:corr_trajectories_med}.  For a rigorous construction of $X_t$ and $S_t$, and justification for this equilibrium distribution and implementation details, see Appendix~\ref{sec-supervised-synthetic-construction}.

We measure the mean squared error over the equilibrium distribution in $X_t$, for neural networks using FTA and ReLU activations across a range of correlation difficulty values.  In Figure~\ref{fig:iid_vs_corr_trajectories}, we can see that FTA outperforms ReLU in two ways. First, FTA converges to a lower loss with less variance across all correlation difficulties. Second, FTA only marginally suffers under high difficulties $d>0.9$, whereas the performance of ReLU begins to deteriorate for relatively mild $d>0.5$. 

Note that the FTA reaches a lower error, even on iid data ($d=0$). We hypothesize this gap arises because the networks are trained online, with one sample from each $X_t$ being used for each weight update. Figure ~\ref{fig:diff-sweep-with-batching} in the Appendix supports this hypothesis, with the gap vanishing in an identical experiment where 50 samples are drawn from each $X_t$.

\begin{figure}[t]
	\vspace{-0.8cm}
    \centering
    \subfigure[Mild Correlation]{%
        \includegraphics[width=\figwidththree]{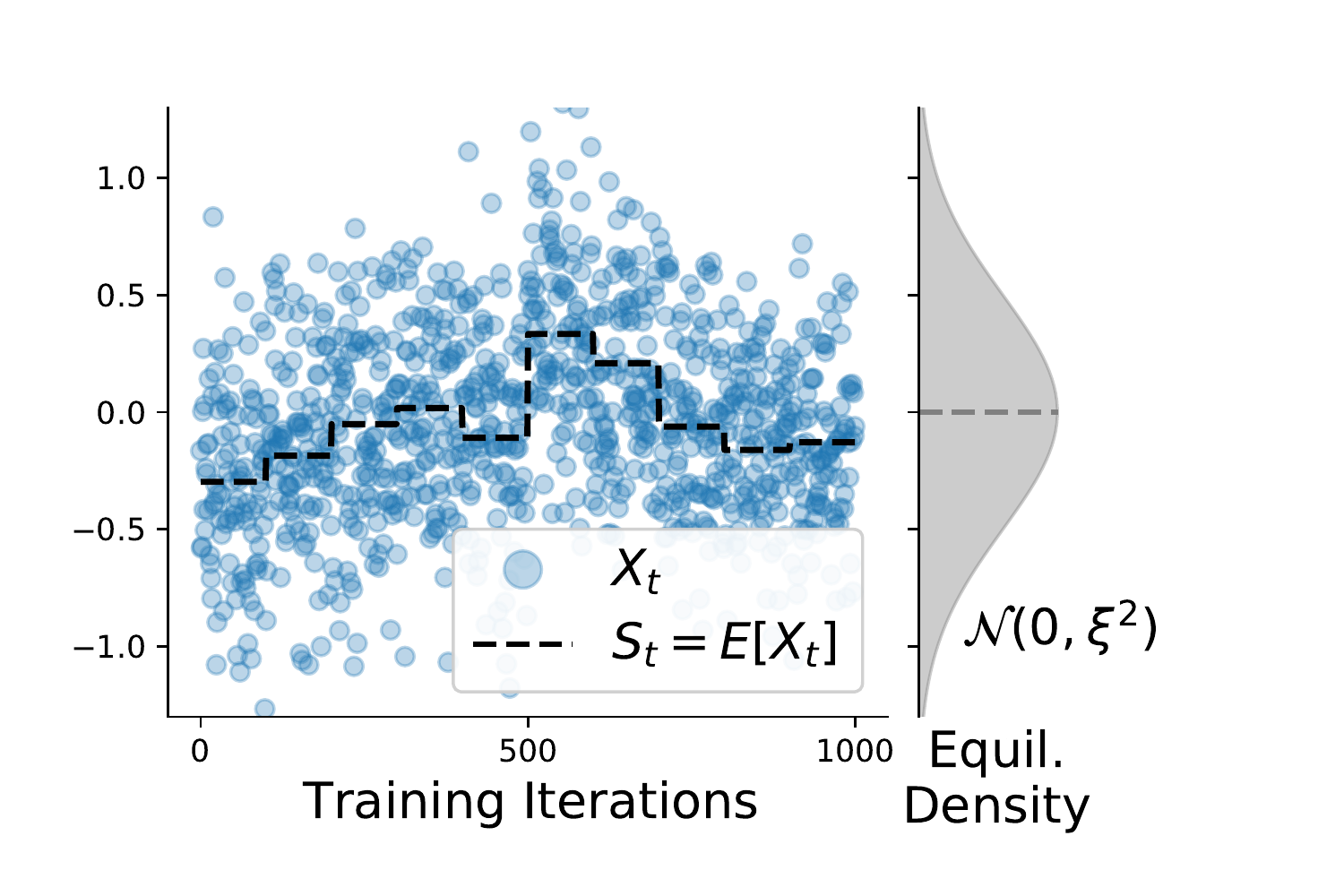}%
        \label{fig:corr_trajectories_med}%
    }
    \subfigure[High Correlation]{%
        \includegraphics[width=\figwidththree]{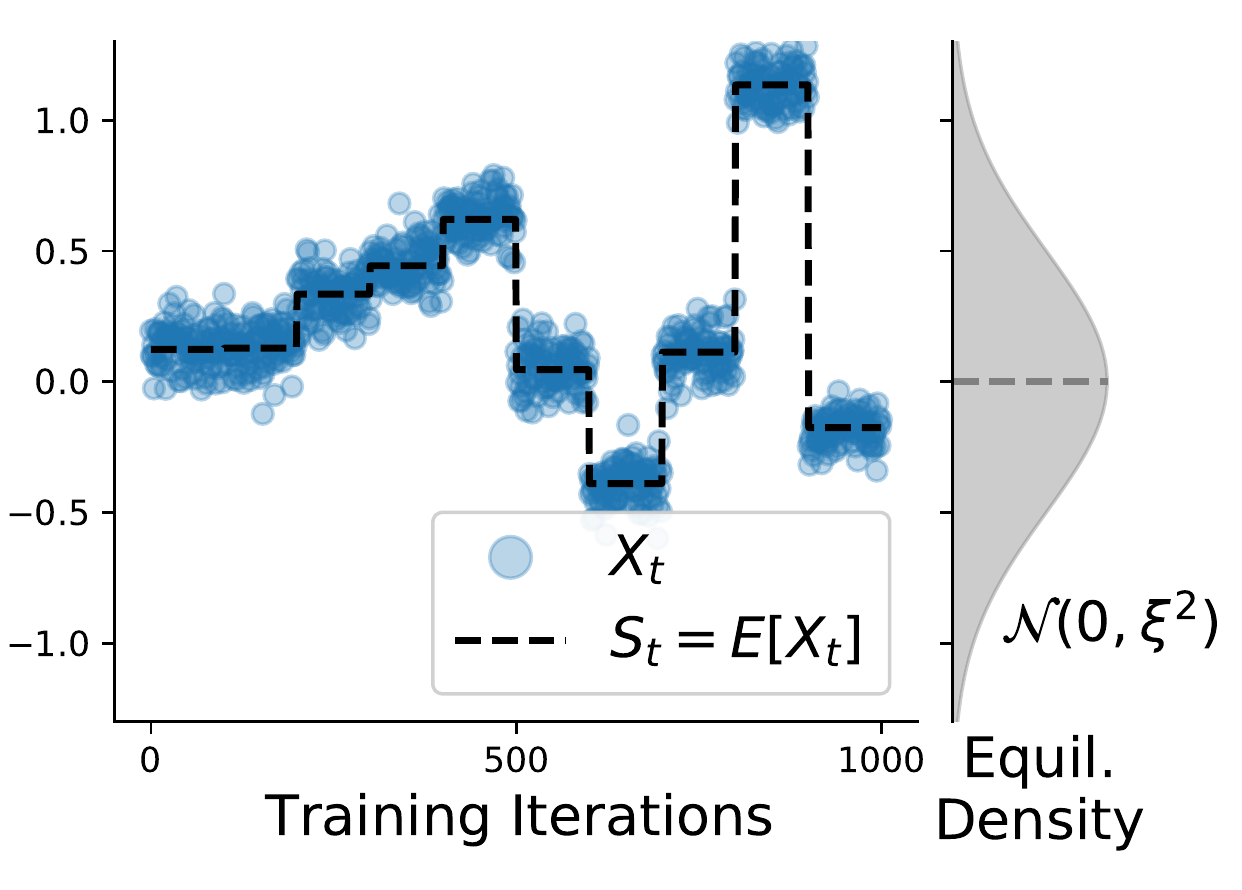}%
        \label{fig:corr_trajectories_hard}%
    }
	\subfigure[ReLU and FTA, $d\in[0, 1)$]{%
	  \includegraphics[width=\figwidththree]{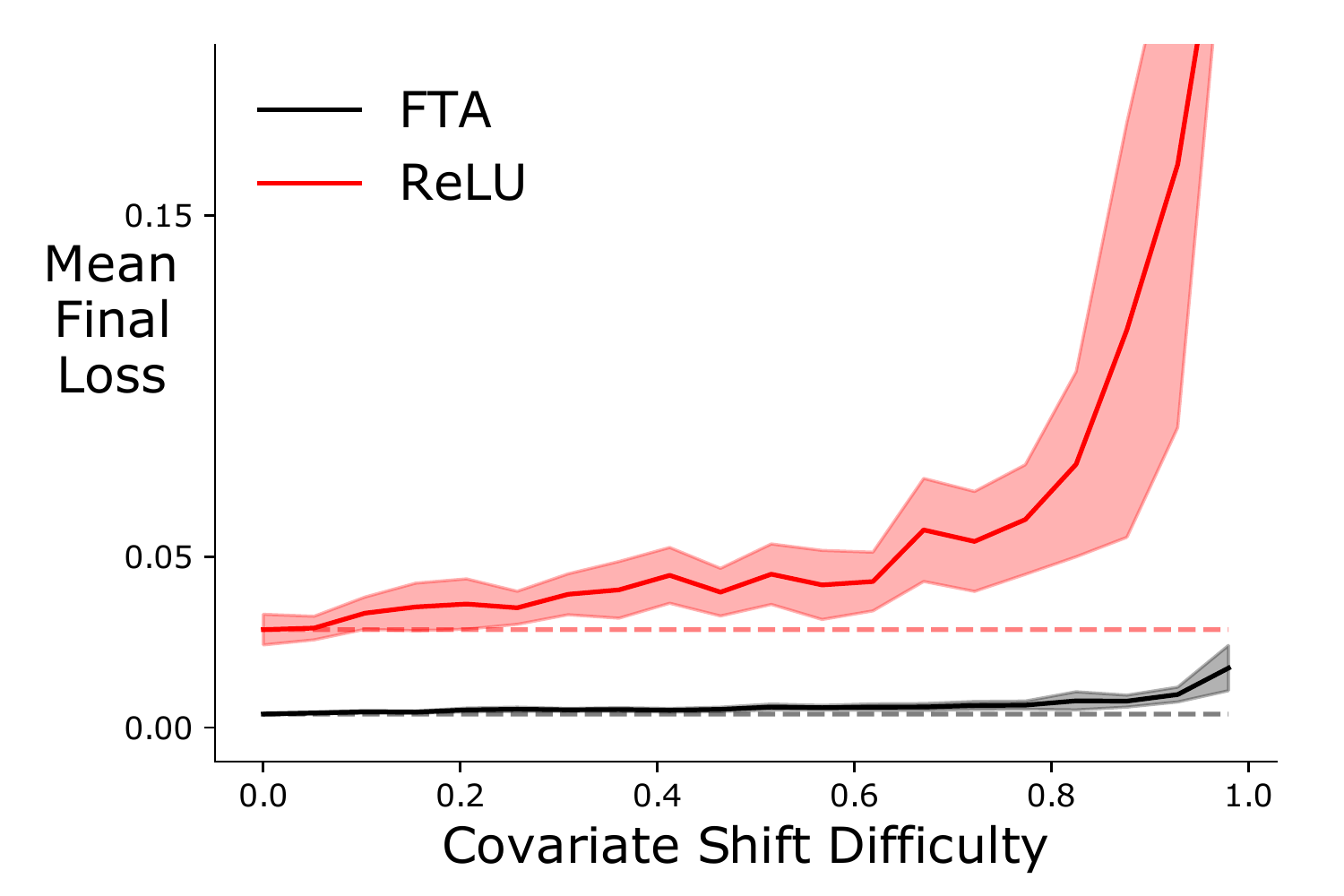}%
	  \label{fig:diff_sweep_main}%
	}    
    \caption{(a) and (b) contain sample trajectories of $X_t$ which are (a) mildly correlated with $d=0.41$ and (b) severely correlated with $d=0.98$. Both share the same equilibrium distribution (in gray). (c) A plot of the prediction error (under the equilibrium distribution), averaged over the final 2.5K iterations, across a range of difficulty settings. 
    All networks are trained for 20k online updates. The lines correspond to the mean of 30 runs, with the shaded region corresponding to 99.9\% confidence intervals. The iid setting $d=0$ is shown as a dotted line for baseline comparison.
    }
    \label{fig:iid_vs_corr_trajectories}
\end{figure}

\section{Experimental Results in Reinforcement Learning}\label{sec:rlexp}

In this section, we empirically study the effect of using FTA in RL. First, we show overall performance on several benchmark discrete and continuous control environments. 
Second, we compare our method with other simple strategies to obtain sparse representations. Third, we provide insight into different hyper-parameter choices of FTA and suggest potential future directions. Appendix~\ref{sec-reproduce} includes details for reproducing experiments and Appendix~\ref{sec-additional-experiments} has additional RL experiments. 

\subsection{Algorithms and Naming Conventions}
All the algorithms use a two-layer neural network, with the primary difference being the activation used on the last hidden layer. See Appendix~\ref{sec-additional-experiments} for results using FTA in all the hidden layers.
DQN is used for the discrete action environments, and Deep Deterministic Policy Gradient (DDPG)~\citep{tim2016ddpg} for continuous action environments. On each step, all algorithms sample a mini-batch size of $64$ from an experience replay~\citep{lin1992self,mnih2015human} buffer with maximum size $100$k. Note that we keep the same FTA setting across \emph{all} experiments: we set $[l,u] = [-20,20]$, $\delta=\eta=2.0$, and hence $\cvec = \{-20, -18, -16, ..., 18\}$, $k=40/2=20$. 


We first compare to standard DQN agents, with the same architectures except the last layer. \\
\textbf{DQN}: DQN with tanh or ReLU on the last layer (best performance reported).\\
\textbf{DQN-FTA}: DQN with FTA on the last layer. \\
\textbf{DQN-Large}: DQN, but with the last layer of the same size as DQN-FTA. If DQN has a last (i.e., the second) hidden layer of size $d$,
then DQN-FTA has a last layer of size $dk$, since the FTA activation simply expands the number of features due to binning. Hence, we include DQN-Large with the same feature dimension as DQN-FTA in the last hidden layer. Note that DQN-Large has $k$ times more parameters in this last hidden layer than DQN or DQN-FTA. 
\footnote{Please refer to Figure~\ref{fig:lta-nn-demo}. FTA does not augment the number of training parameters in the hidden layer where it is applied; it only augments the training parameters in the immediately next layer.}

We also compare to several simple strategies to obtain local or sparse features. Radial basis functions (RBFs) have traditionally been used to obtain local features in RL, and recent work has used $\ell_2$ and $\ell_1$ regularization directly on activations as a simple baseline~\citep{arpit2015why,vincent2018sparse}. All of these strategies have the same sized last layer as the sparse feature dimension of DQN-FTA.\\
\textbf{DQN-RBF}: DQN using radial basis functions (RBFs) on the last layer, with the centers defined by the same $\cvec$ as FTA: $\phi(z) = [\exp{(-\frac{(z - \cvec_1)^2}{\sigma})}, \ldots, \exp{(-\frac{(z - \cvec_k)^2}{\sigma})}]$ where $\sigma$ is the bandwidth parameter. \\	 
\textbf{DQN-L2/L1}: DQN with $\ell_2$ or $\ell_1$ regularization on the activation functions of the final hidden layer where there is the same number of units as that in DQN-Large.

To the best of our knowledge, no suitable sparse representation approaches exist for RL. SR-NNs for RL were only developed for offline training~\citep{vincent2018sparse}. That work also showed that k-sparse NNs \citep{makhzani2013k} and Winner-Take-All NNs \citep{makhzani2015winner} performed significantly worse than $\ell_2$ regularization, where $\ell_2$ actually performed quite well in most of their experiments. One other possible option is to use Tile Coding NNs~\citep{sina2020geometric}, which first tile code inputs before feeding them into the neural network. This approach focuses on using discretization to break, or overcome, incorrect generalization in the inputs; their goal is not to learn sparse representations. This approach is complementary, in that it could be added to all the agents in this work. Nonetheless, because it is one of the only papers with a lightweight approach to mitigate interference in online RL, we do compare to it in the Appendix~\ref{sec:compare-tcnn}.

\subsection{Overall Performance}\label{sec:overall-performance}

In this section, we demonstrate the overall performance of using FTAs on both discrete and continuous control environments. Our goals are to investigate if we can 1) obtain improved performance with FTA, with fixed parameter choices across different domains; 2) improve stability in learning with FTA; and 3) see if we can remove the need to use target networks with FTA, including determining that learning can in fact be faster without target networks and so that it is beneficial to use FTA without them. All experiments are averaged over $20$ runs ($20$ different random seeds), with offline evaluation performed on the policy every $1000$ training/environment time steps. 

\begin{figure}[t]
	\vspace{-0.8cm}
	\centering
	\subfigure[Acrobot (v1)]{
	\includegraphics[width=\figwidthfour]{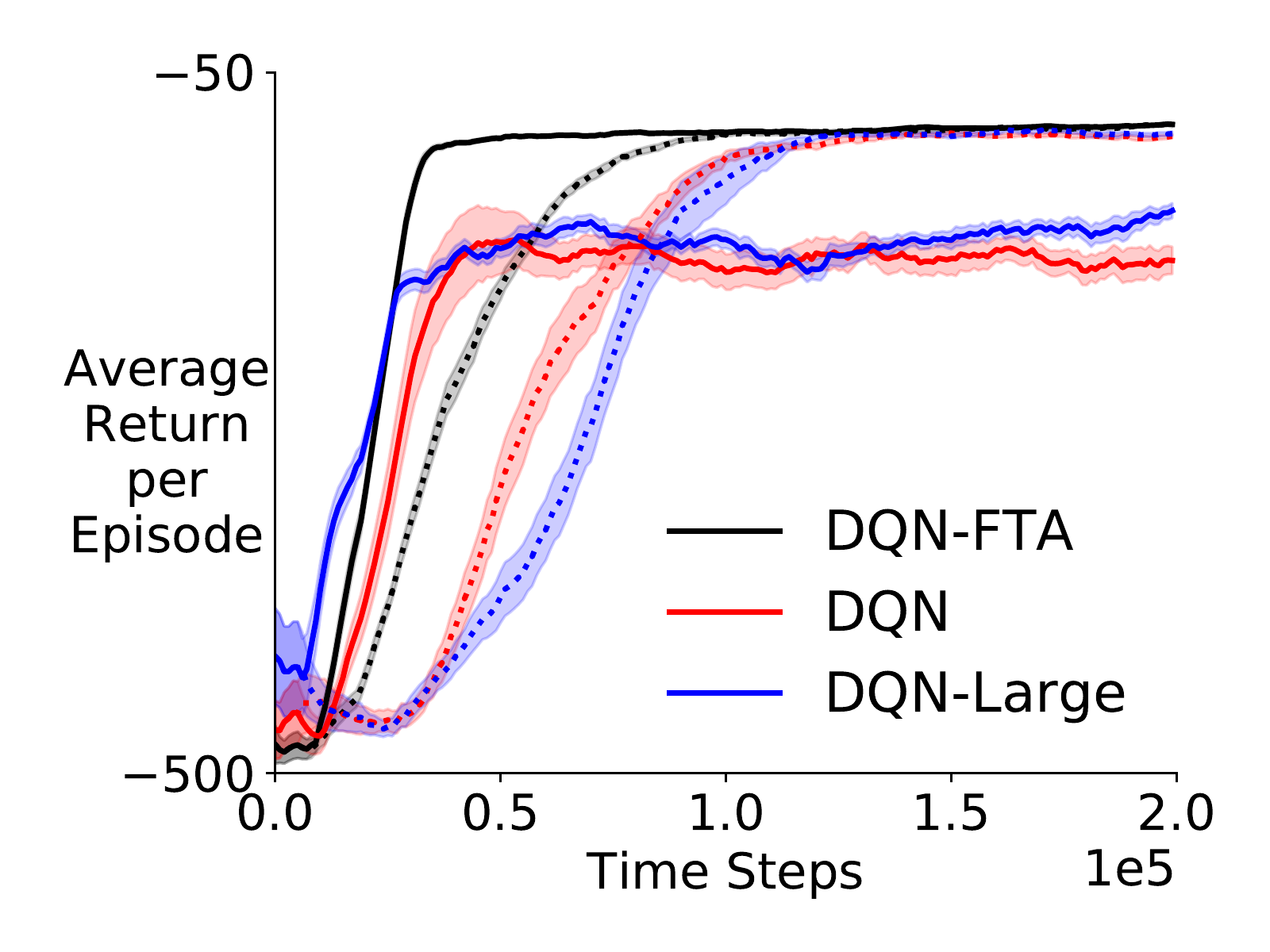}}
	\subfigure[MountainCar (v0)]{
		\includegraphics[width=\figwidthfour]{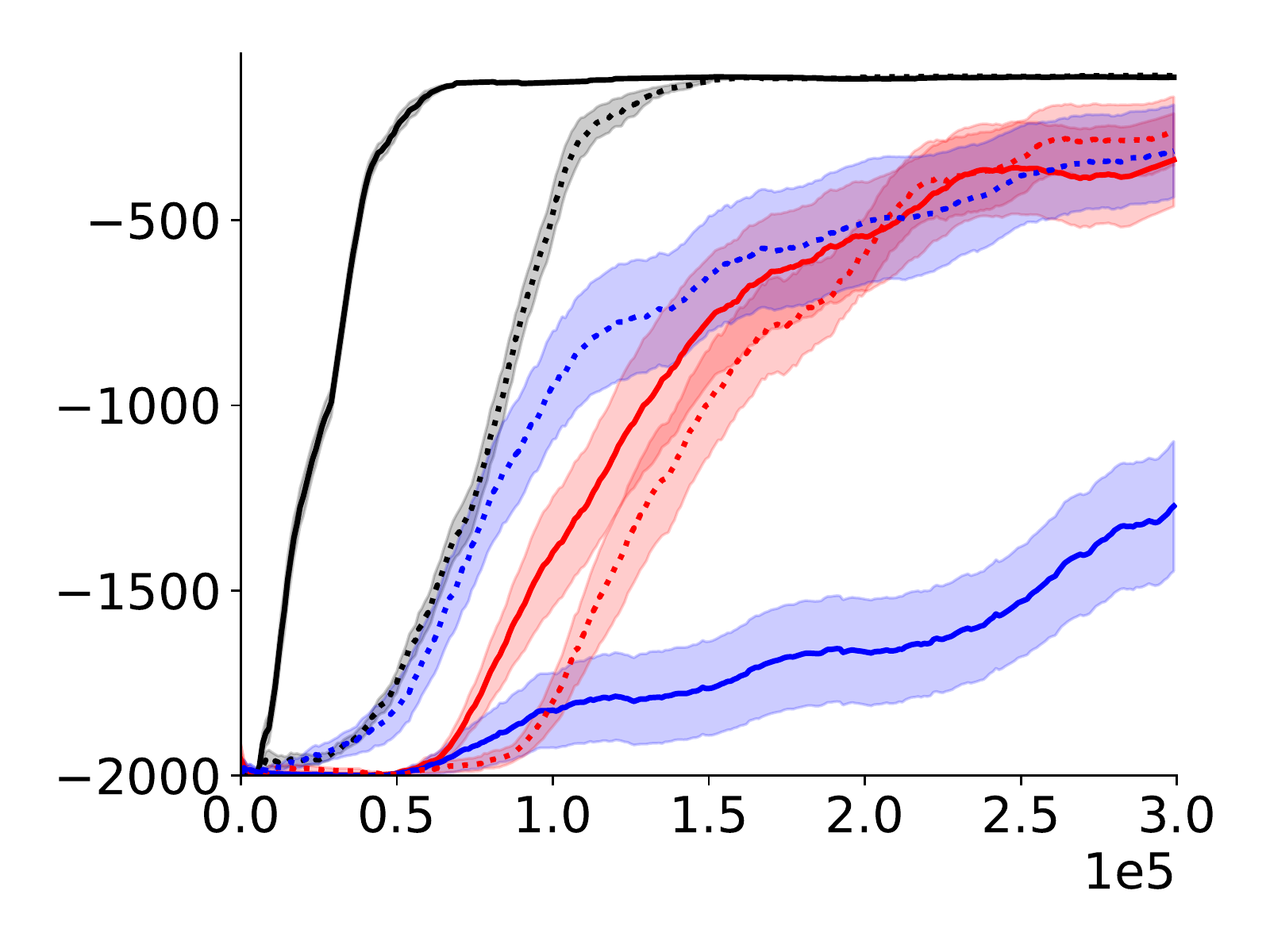}}
	\subfigure[CartPole (v1)]{
		\includegraphics[width=\figwidthfour]{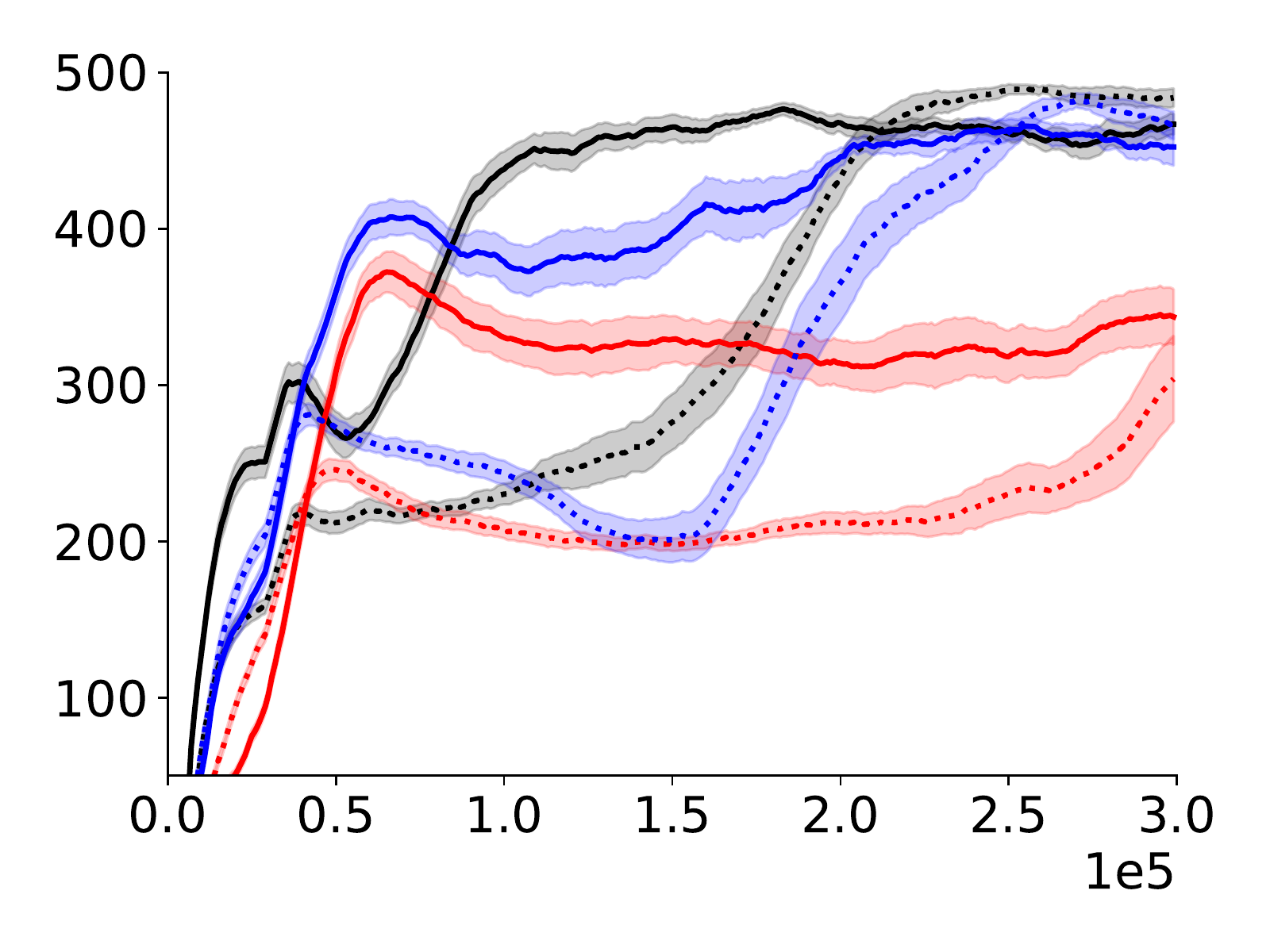}}
	\subfigure[LunarLander (v2)]{
		\includegraphics[width=\figwidthfour]{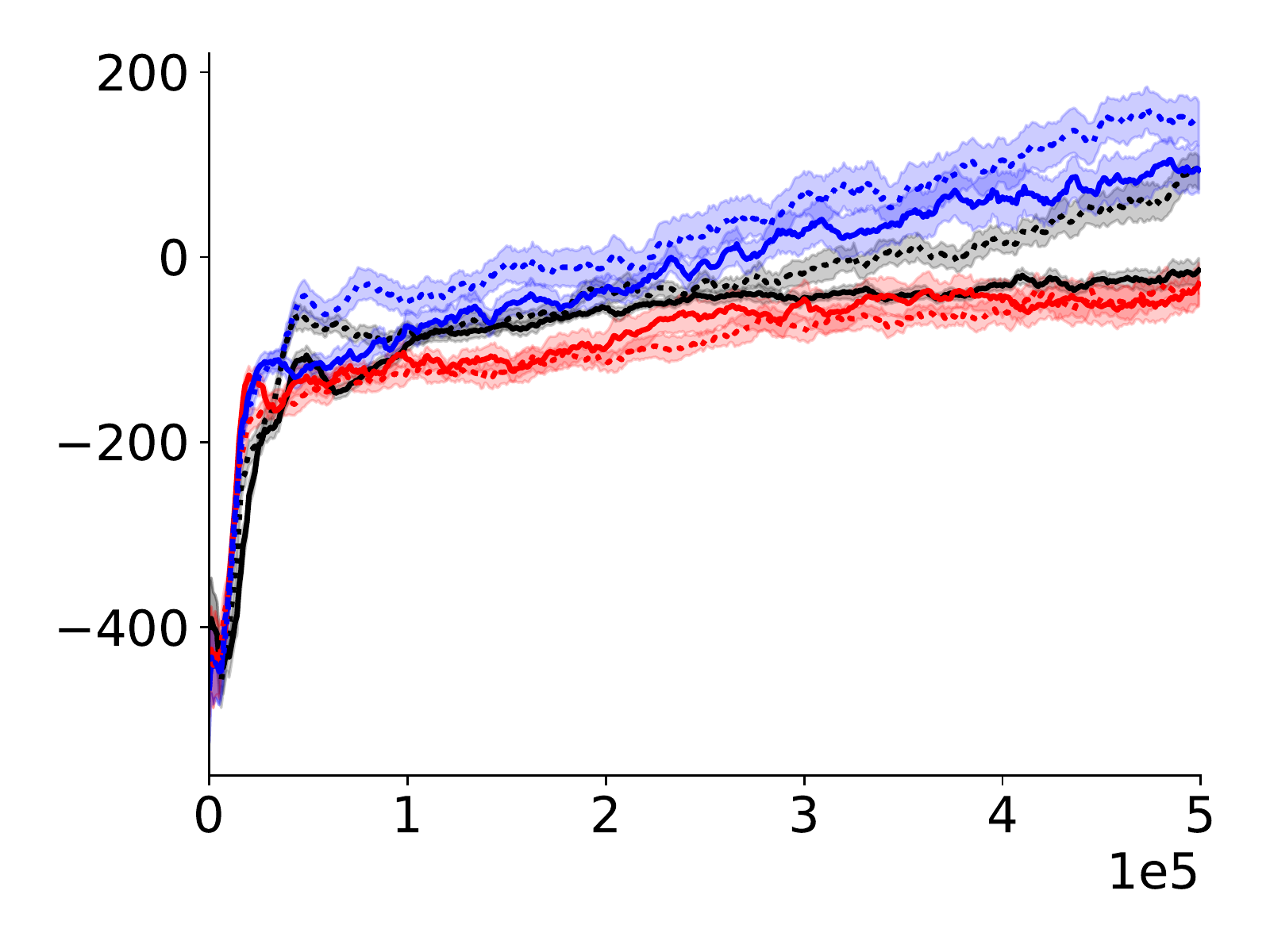}}
	\caption{
		\small
		Evaluation learning curves of \textcolor{black}{\textbf{DQN-FTA(black)}}, \textcolor{red}{\textbf{DQN(red)}}, and \textcolor{blue}{\textbf{DQN-Large(blue)}}, showing episodic return versus environment time steps. The \textbf{dotted} line indicates  algorithms trained \emph{with} target networks. The results are averaged over $20$ runs and the shading indicates standard error. The learning curve is smoothed over a window of size $10$ before averaging across runs. 
	}\label{fig:discrete-control}
\end{figure}

\begin{figure*}[!htbp]
	\centering
	\subfigure[InvertedPendu]{
		\includegraphics[width=\figwidthfive]{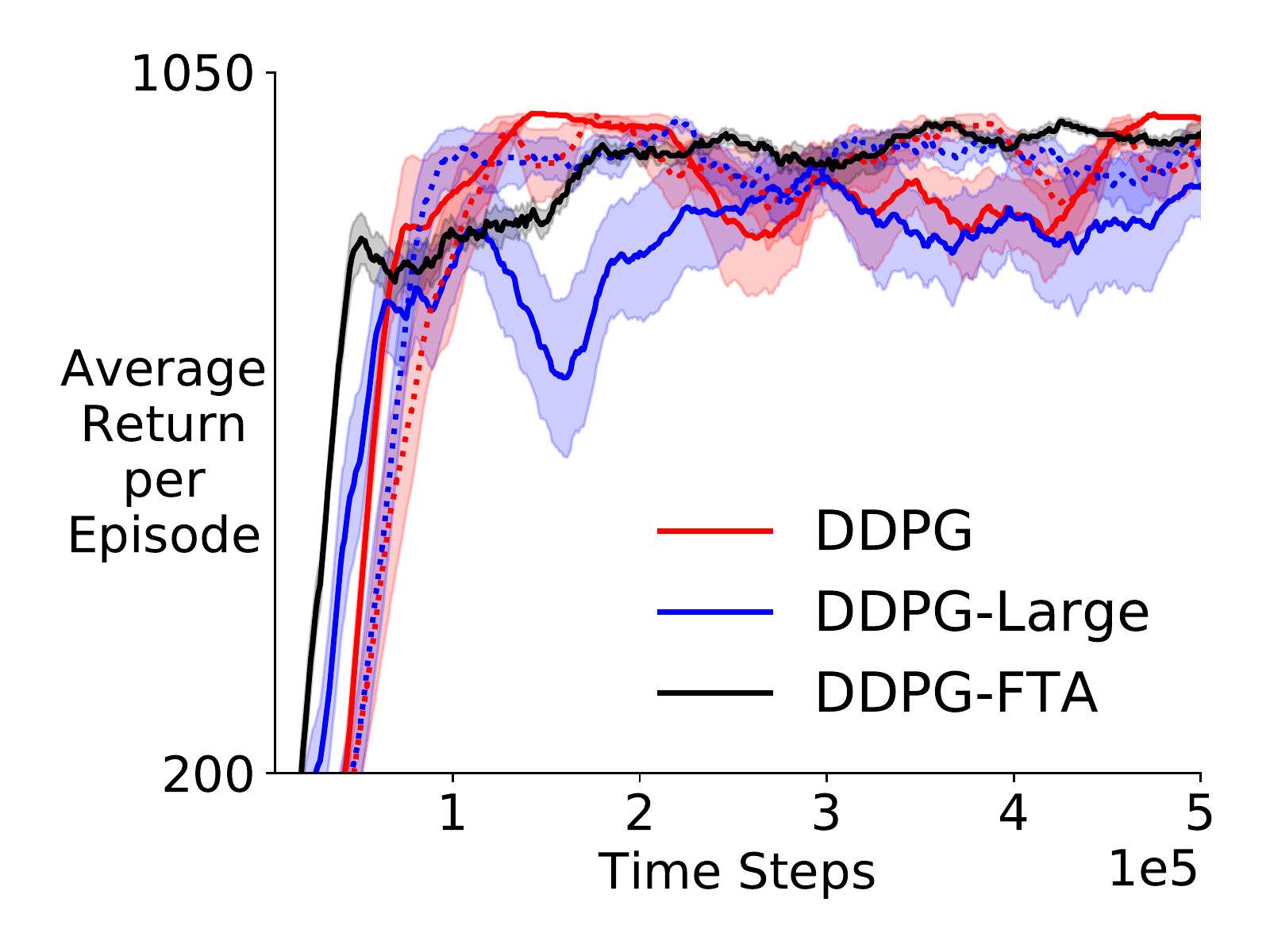}}
	\subfigure[Hopper]{
		\includegraphics[width=\figwidthfive]{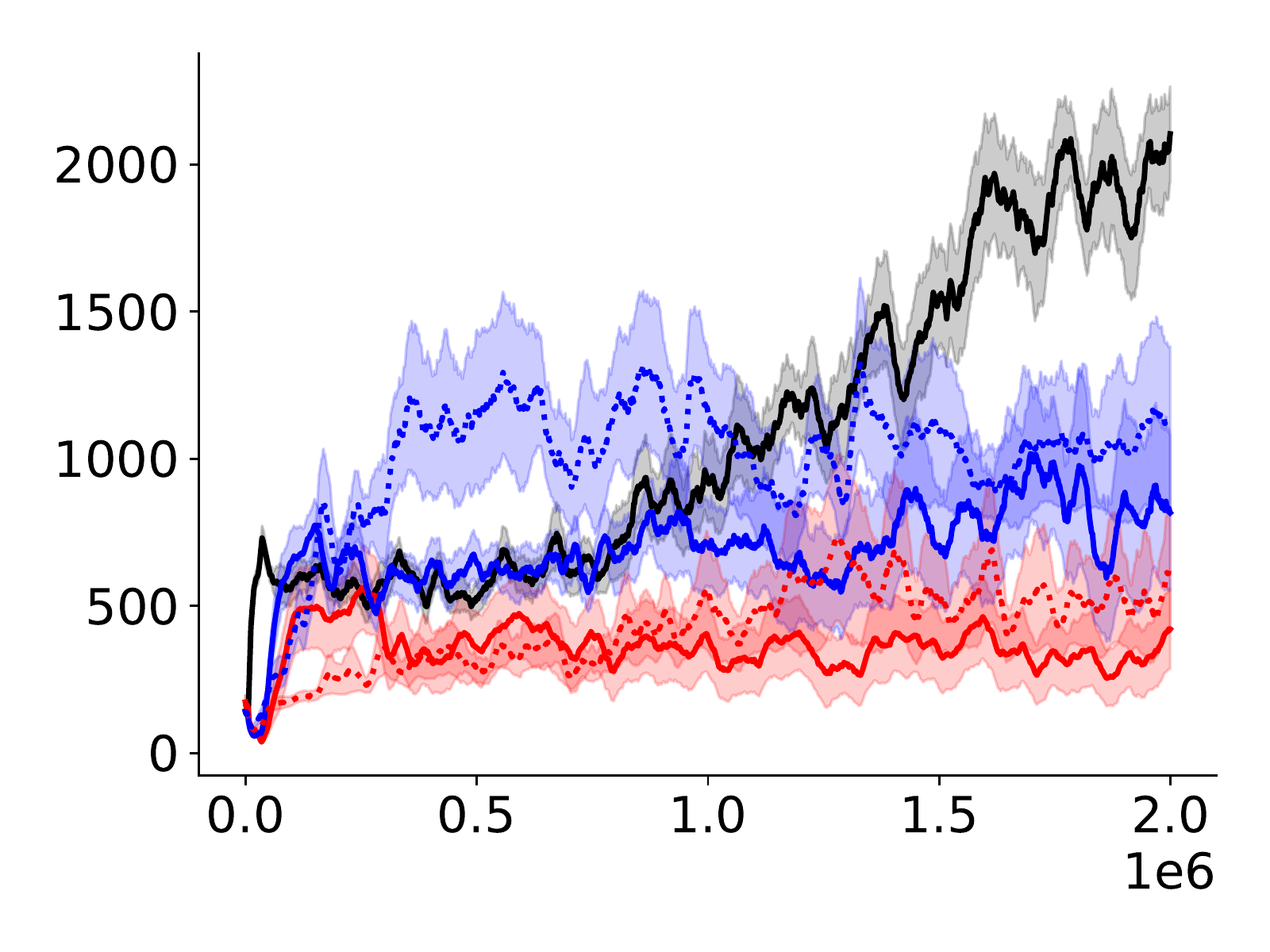}}
	\subfigure[Walker2d]{
		\includegraphics[width=\figwidthfive]{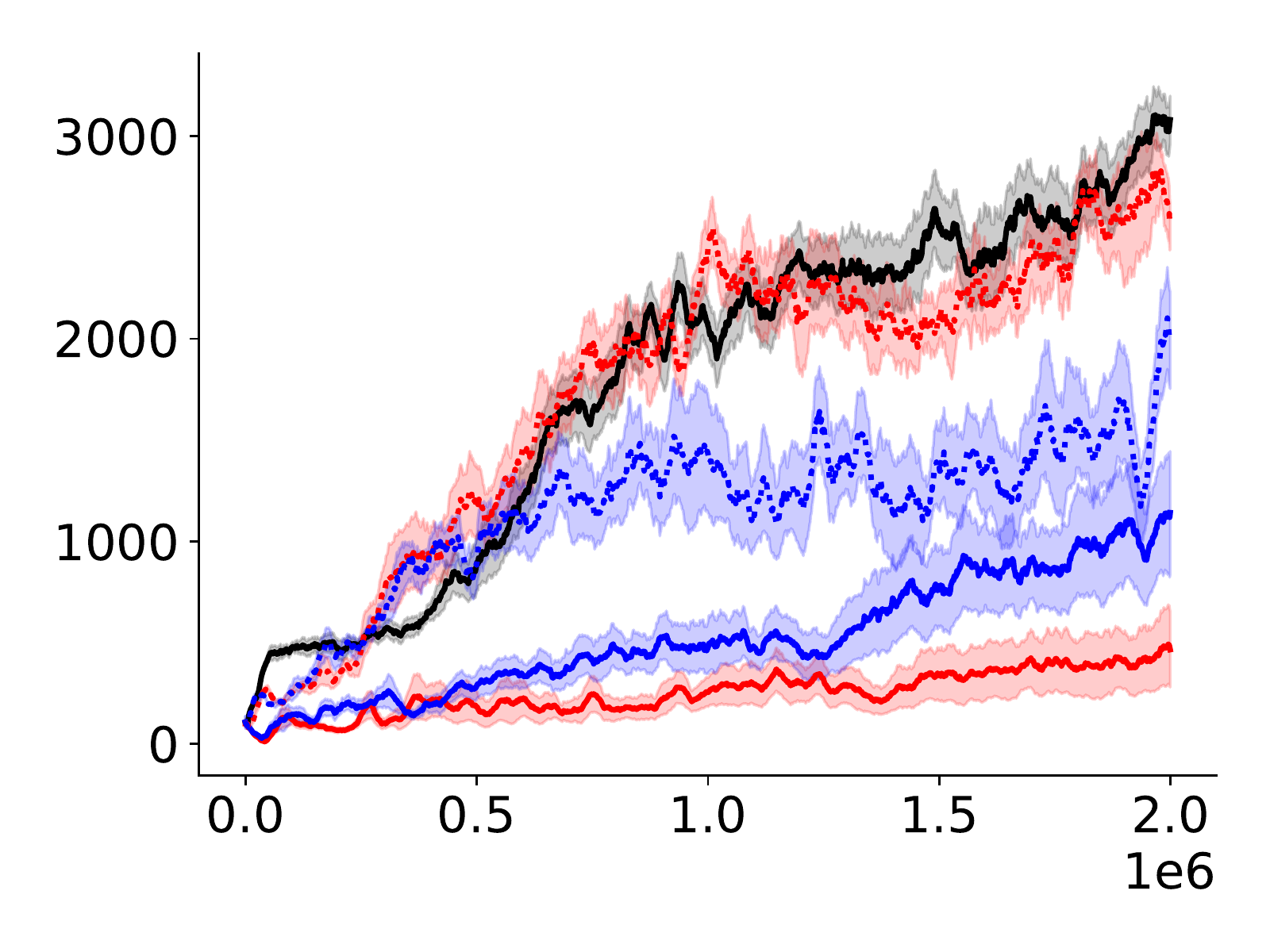}}
	\subfigure[InvertedDouble]{
		\includegraphics[width=\figwidthfive]{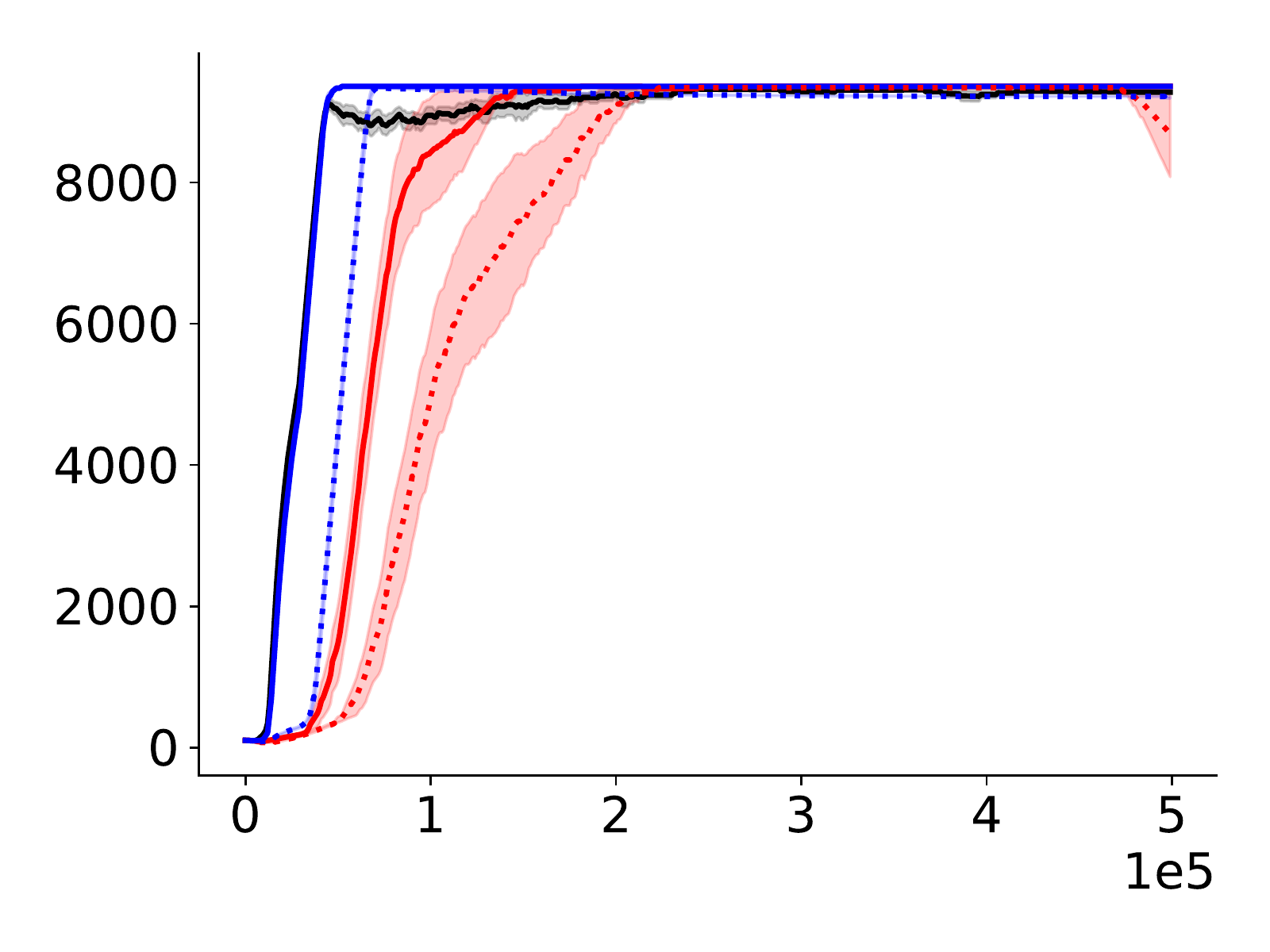}}
	\subfigure[Swimmer]{
		\includegraphics[width=\figwidthfive]{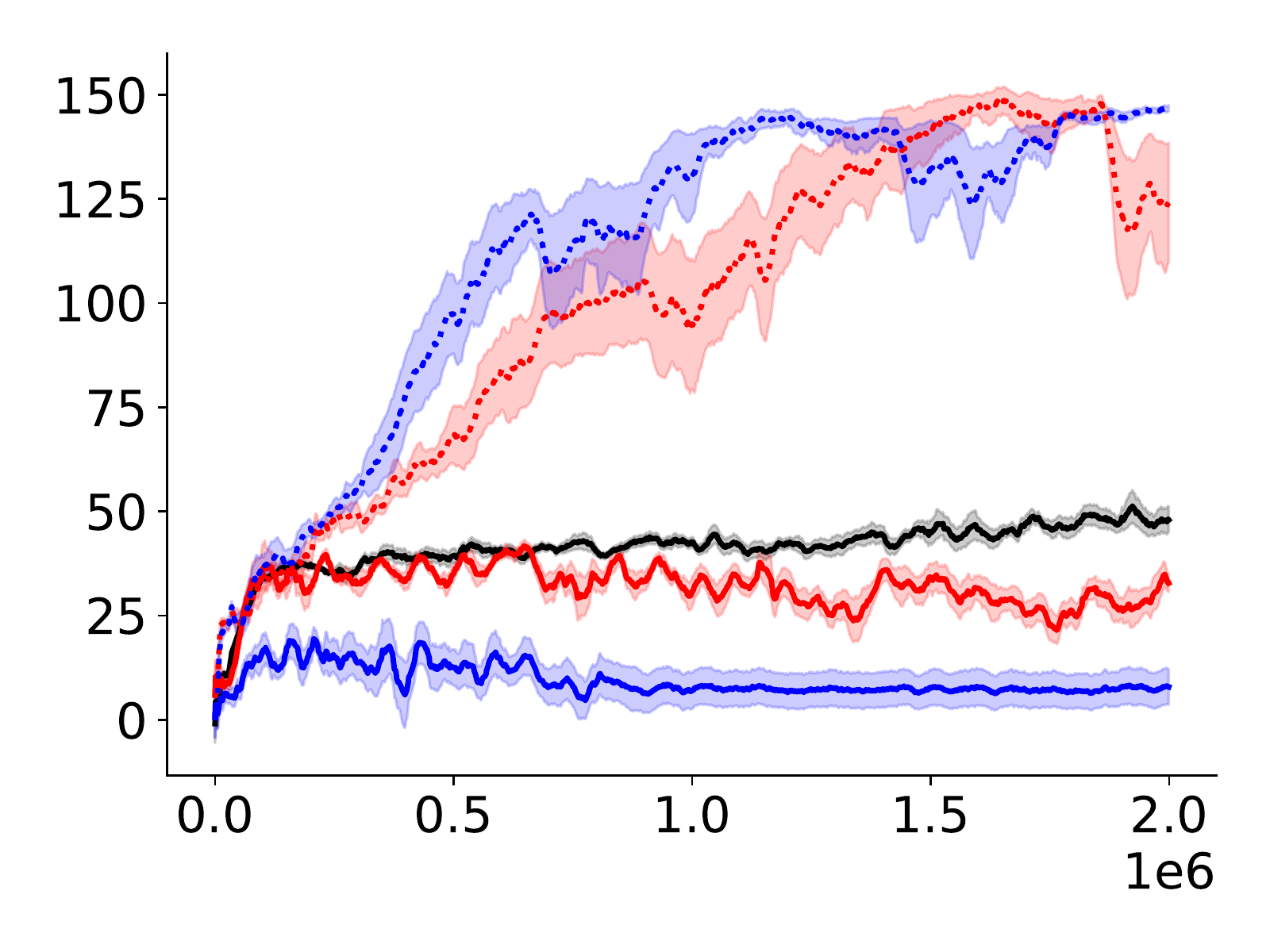}}
	\caption{
	\small
	Evaluation learning curves of \textcolor{black}{\textbf{DDPG-FTA(black)}}, \textcolor{red}{\textbf{DDPG(red)}}, and \textcolor{blue}{\textbf{DDPG-Large(blue)}}, averaged over $10$ runs with shading indicating standard error. The \textbf{dotted} line indicates algorithms trained \emph{with} target networks. The learning curve is smoothed over a window of size $30$ before averaging across runs. 
	}\label{fig:continuous-control}
\end{figure*}

\textbf{Discrete control.} We compare performance on four discrete-action environments from OpenAI~\citep{openaigym}: MountainCar, CartPole, Acrobot and LunarLander. We use $64\times 64$ ReLU hidden units on all these domains. Since FTA has $k=20$, this yields $64\times 20=1280$ dimensional sparse features. Hence, DQN-Large, use two layers ReLU with size $64\times1280$. 

We plot evaluation learning curves, averaged over $20$ runs, in Figure~\ref{fig:discrete-control}. The results show the following. 1) With or without using a target network, DQN with FTA can significantly outperform the version without using FTA. 2) FTA has significantly lower variability across runs (smaller standard errors) in most of the figures. 3) DQN-FTA trained \emph{without} a target network outperforms DQN-FTA trained \emph{with} a target network, which indicates a potential gain by removing the target network. 4) Without using FTA, DQN trained without a target network cannot perform well in general, providing further evidence for the utility of sparse feature highlighted in previous works~\citep{vincent2018sparse,rafati2019sparse}. 5) Simply using a larger neural network does not obtain the same performance improvements, and in some cases significantly degrades performance. The exception to these conclusions is LunarLander, where DQN-FTA performed similarly to DQN and actually performed slightly better with target networks. On deeper investigation, we found that this was due to using a tiling bound of $[-20,20]$, which we discuss further in Section \ref{sec:rlexp-lta-hyperparam}. 
When using $[-1,1]$, DQN-FTA performs much better and results are consistent with the other domains. 

\textbf{Continuous control.} We compare performance on continuous control tasks from Mujoco~\citep{todorov2012mujoco}. For these experiments, we use DDPG with exactly the same FTA settings as in the above discrete control domains to show its generality. Corresponding to the discrete control domains, we apply FTA to the critic network and do not use a target network to train it. As seen in Figure~\ref{fig:continuous-control}, on most domains, DDPG-FTA achieves comparable and sometimes significantly better performance to DDPG with a target network. However, Figure~\ref{fig:continuous-control} (e) highlights that FTA is not always sufficient on its own to achieve superior performance. Factors such as the exploration strategy, other hyper-parameters, and neural network architecture may also play an important role on such challenging tasks. 

\subsection{Comparison with Other Representation Learning Approaches}\label{sec:experiment-sparse}

FTA provides clear benefits, but it is natural to ask if other simple strategies that provide sparse or local features could have provided similar benefits. We compare to DQN-RBF, DQN-L2 and DQN-L1, on the discrete action environments, shown in Figure~\ref{fig:discrete-control-othersp}.
We find the following. 1) FTA performs consistently well across all environments using a fixed parameter setting; none of the other approaches achieve consistent performance, even though we tuned their parameters per environment. 2) Both the $\ell_1$ and $\ell_2$ approaches have a high variance across different random seeds. 3) The RBF variant can do better than the $\ell_1$ and $\ell_2$ approaches but is worse than our algorithm. It is a known issue that RBF is sensitive to the bandwidth parameter choice and we observe similar phenomenon. It is also known that the exponential activation can be problematic in the back-propagation process. We empirically investigate the overlap and instance sparsities of each algorithm in the Appendix~\ref{sec-additional-experiments}. 

\begin{figure*}[!htbp]
	\centering
	\subfigure[Acrobot (v1)]{
		\includegraphics[width=\figwidthfour]{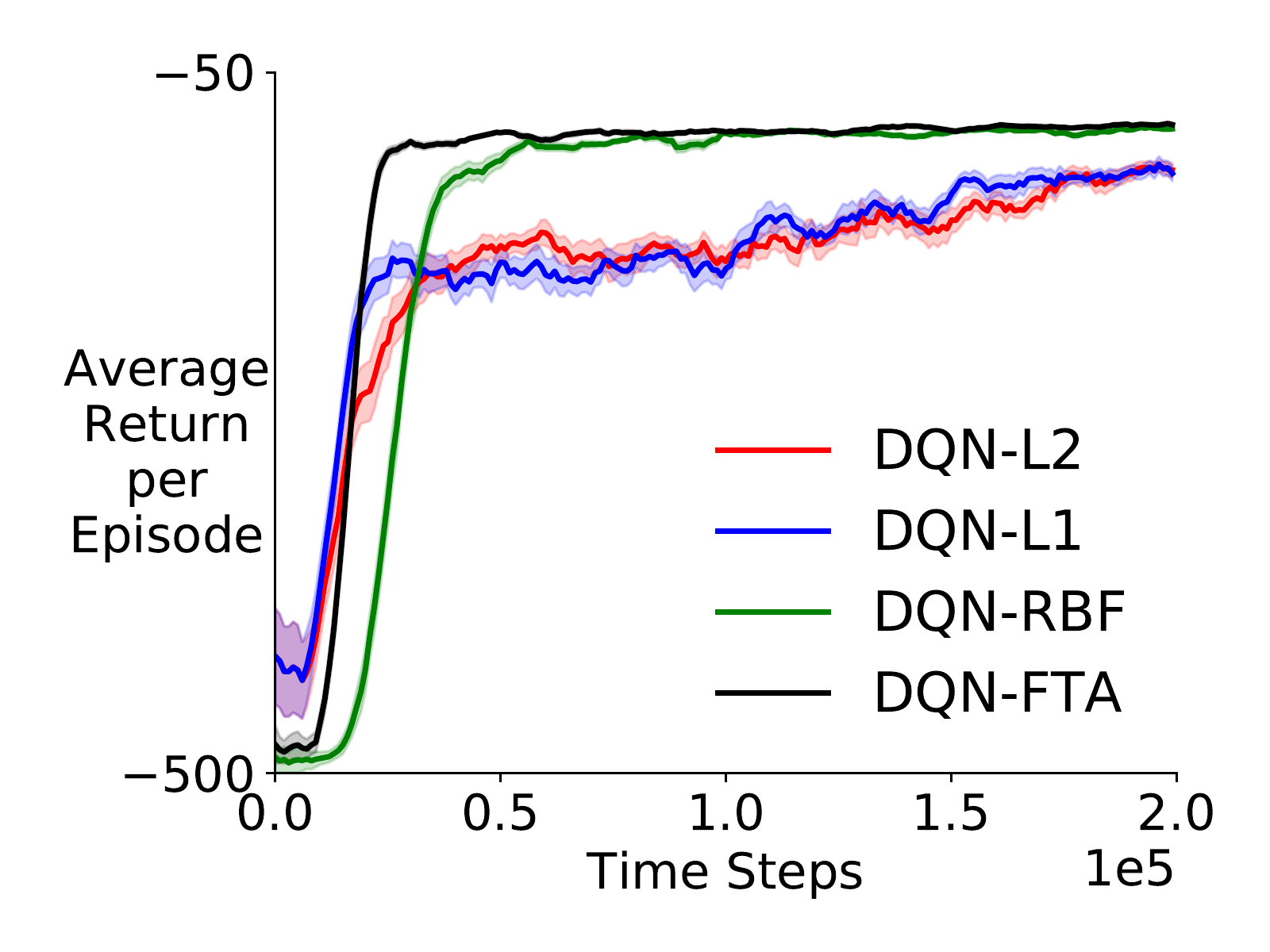}}
	\subfigure[MountainCar (v0)]{
		\includegraphics[width=\figwidthfour]{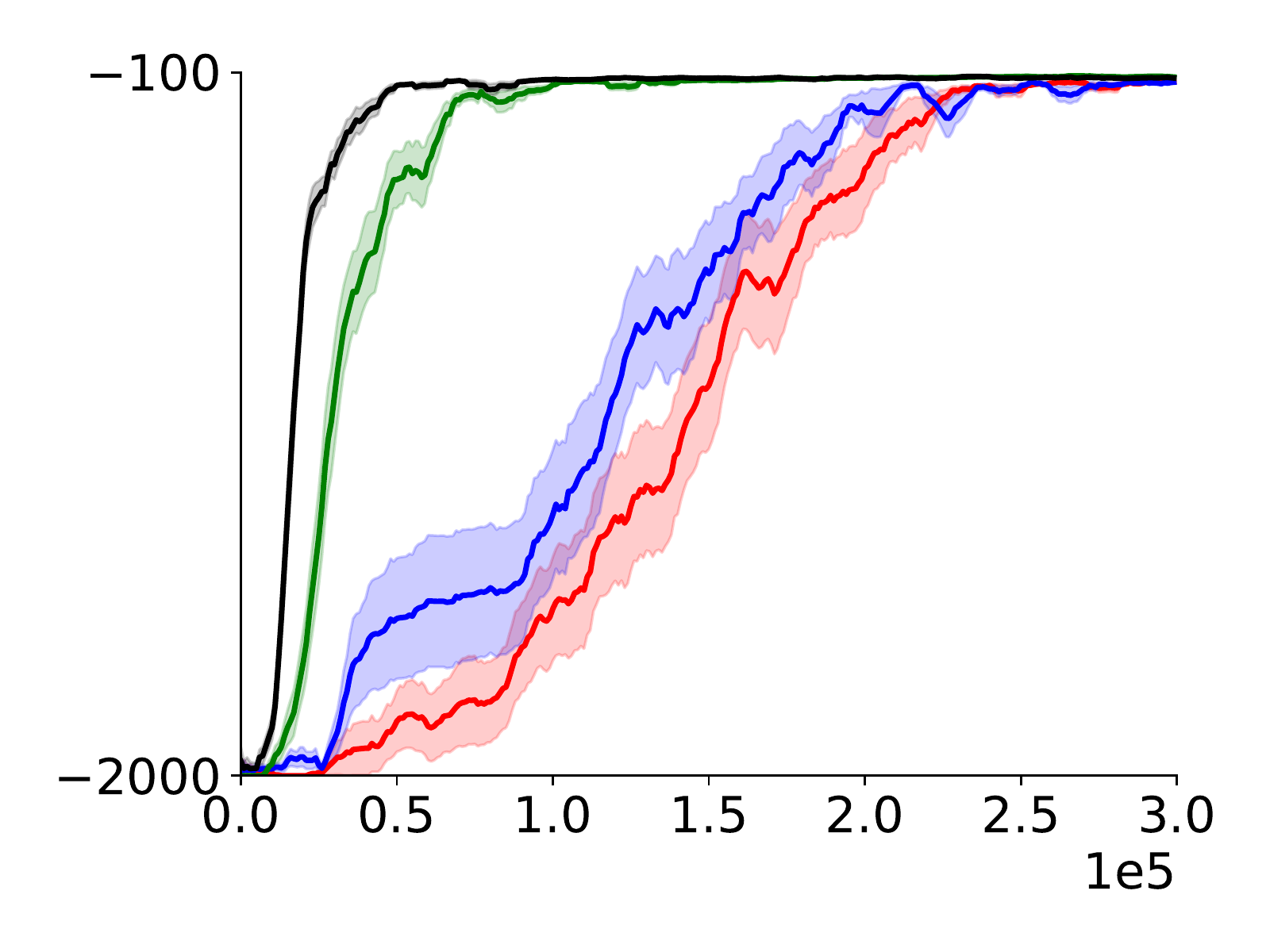}}
	\subfigure[CartPole (v1)]{
		\includegraphics[width=\figwidthfour]{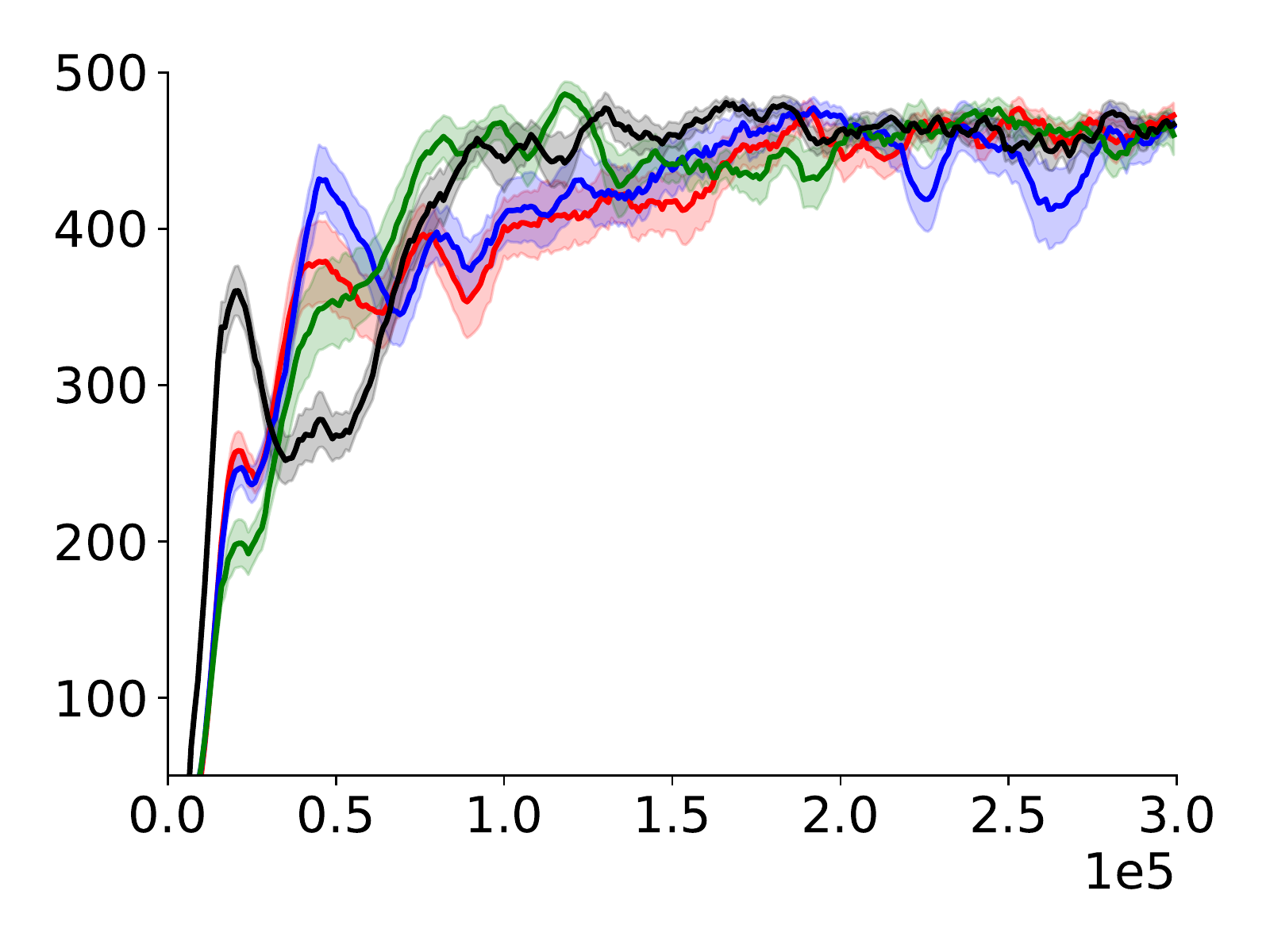}}
	\subfigure[LunarLander (v2)]{
		\includegraphics[width=\figwidthfour]{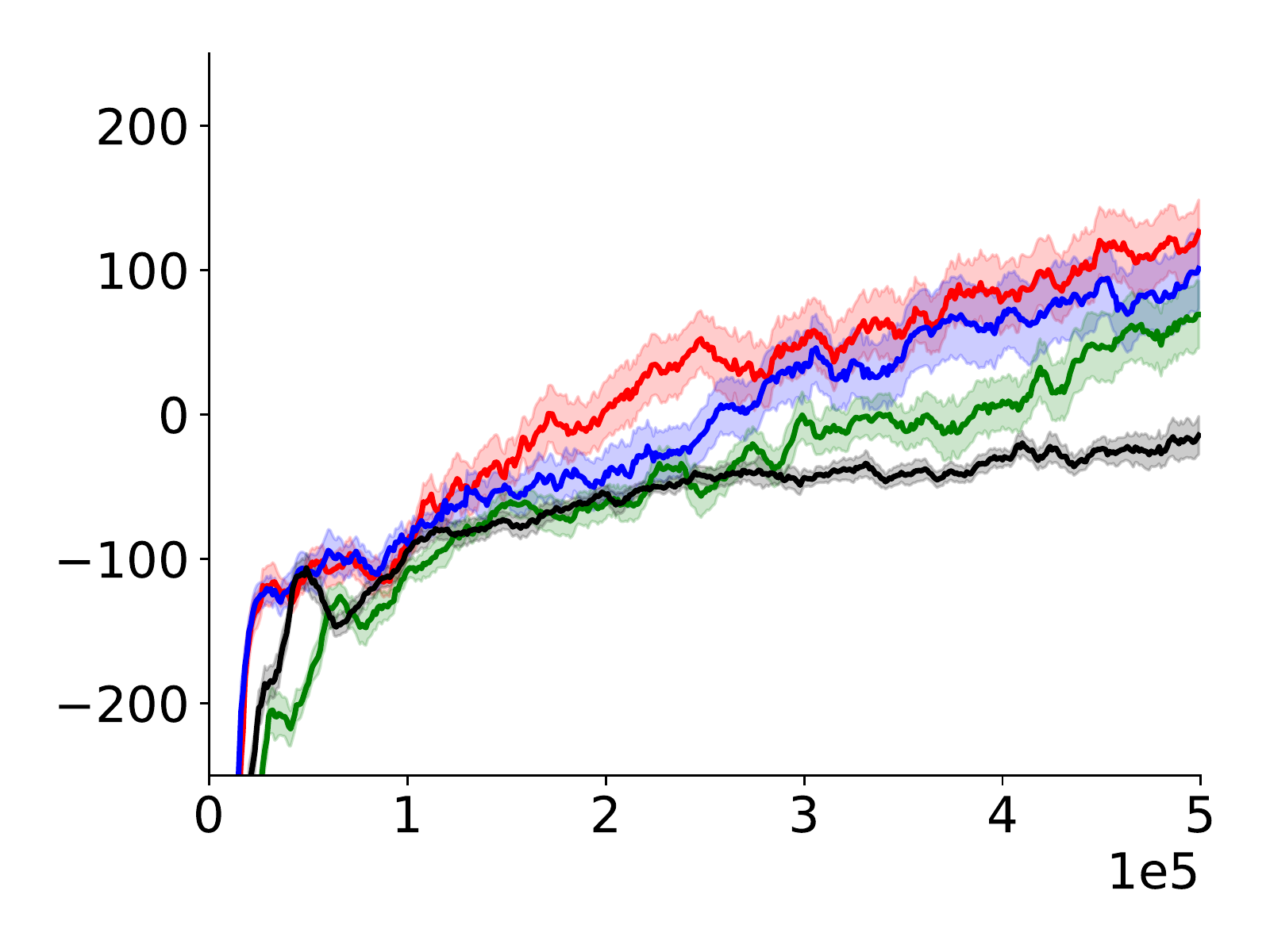}}
	\caption{
		Evaluation learning curves of \textcolor{black}{\textbf{DQN-FTA(black)}}, \textcolor{ForestGreen}{\textbf{DQN-RBF(forest green)}}, \textcolor{red}{\textbf{DQN-L2(red)}}, and \textcolor{blue}{\textbf{DQN-L1(blue)}}, averaging over $20$ runs with the shade indicating standard error. All algorithms are trained without using target networks. The learning curve is smoothed over a window of size $10$ before averaging across runs. 
	}\label{fig:discrete-control-othersp}
\end{figure*}

\subsection{Choosing the Hyper-parameters for FTA}\label{sec:rlexp-lta-hyperparam}

In this section, we provide some insight into selecting the hyper-parameters of FTA. We argue that $\eta=\delta$ can be used as a rule of thumb. Then we show that given an appropriate tiling bound (i.e., $u$ value), DQN-FTA performs well, with a reasonably large number of bins, i.e.,  a reasonably small tile width $\delta$. However, we do observe a certain level of sensitivity to the tiling bound, which may partially explain the worse performance of DQN-FTA on LunarLander and some Mujoco domains. 

\textbf{Sparsity control parameter}. The purpose of the parameter $\eta$ is to provide a nonzero gradient for training an NN via backpropagation. Since FTA is an one-to-many mapping, it gives a nonzero gradient as long as any single element in the resulting vector provides a nonzero gradient. Setting $\eta = \delta$ (i.e., the tile width) is the minimum value to guarantee nonzero gradient as we visualize in the Figure~\ref{fig:activations_fuzzy_tiling_wider}  as long as FTA's input is within the tiling bound. In all of our experiments, we fix $\eta = \delta$ unless otherwise specified. In the Appendix~\ref{sec:experiment-sensitivity}, we show that DQN-FTA does reasonably well across a broad range of $\eta, \delta$ parameter settings.  

\textbf{Number of tiles/tile width}. In Figure~\ref{fig:lunar-sensi}(c), we show that on LunarLander, with $u=1.0$, DQN-FTA performs well when the tile width $\delta=2u/k$ is reasonably small (i.e., number of tiles $k$ is reasonably large). We show the learning curves of DQN-FTA with $u=1.0$, in Figure~\ref{fig:lunar-sensi}(a), by using number of tiles from $k\in \{2,  4,  6,  8, 10, 12, 14, 16\}$, providing sparse feature dimension $64\times k \in \{128, 256, 384, 512, 640, 768, 896, 1024\}$. 
We also examine the performance of DQN trained by using NN size $64\times k$, for $k\in \{2,  4,  6,  8, 10, 12, 14, 16\}$. In Figure~\ref{fig:lunar-sensi}(b), one can see that increasing NN size does not provide a clear benefit to DQN; however, DQN-FTA performs significantly better as $k$ increases, in Figure~\ref{fig:lunar-sensi}(a).

\textbf{Sensitivity to tiling bound}. We find that the tiling bound $[l,u]$ can be sensitive. Intuitively, if we set $l, u$ extremely small, then the input of FTA may go out of the boundary and FTA provides zero gradient again. 
When we set the bound too large, many inputs may hit the same bins, both resulting in many dead neurons and increasing interference. In Figure~\ref{fig:lunar-sensi}(c), we see that very small $u = 0.01$ and big $u = 10, 20$ perform poorly in LunarLander, and interim values of $u = 0.1, 1$ perform well.  

We further examine the corresponding representation interference measured by the overlap sparsity divided by instance sparsity, in Figure~\ref{fig:lunar-sensi}(d). Instance sparsity is the proportion of nonzero entries in the feature vector for each instance. Overlap sparsity~\citep{french1991using} is defined as $overlap(\phi, \phi')=\sum_i I(\phi_i\neq 0) I(\phi'_i \neq 0)/(kd)$, given two $kd$-dimensional sparse vectors $\phi, \phi'$. Low overlap sparsity potentially indicates less feature interference between different input samples. Overlap sparsity divided by instance sparsity represents the proportion of overlapped entries among activated ones. We can see in Figure~\ref{fig:lunar-sensi}(d) that large $u$ increases this ratio, indicating increased interference, which may explain the worse performance of DQN-FTA with $u=10, 20$ in Figure~\ref{fig:lunar-sensi}(c). On the other extreme, when the bound is too small, $u=0.01$, the sparsity ratio is low but performance is poor. This is likely because generalization actually becomes too low at this level, reducing sample efficiency. 
We refer to Appendix~\ref{sec:appendix-lta-gradient-interference} for additional experiments about gradient interference. 


\begin{figure}[t]
	\vspace{-0.6cm}
	\centering
		\subfigure[Vary $\#$ of tiles $k$]{
		\includegraphics[width=\figwidthfour]{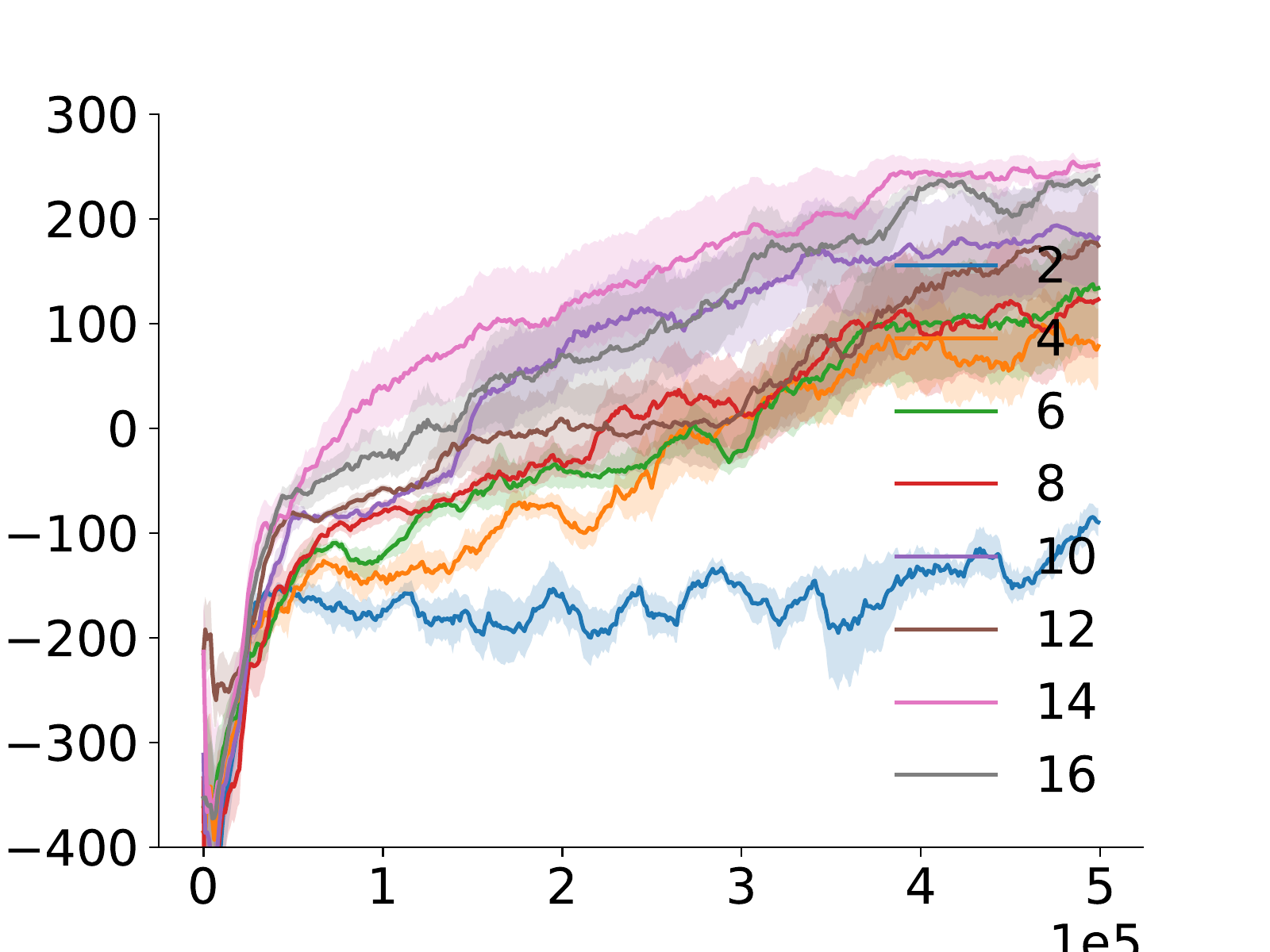}}
	\subfigure[Vary $\#$ of hidden units]{
		\includegraphics[width=\figwidthfour]{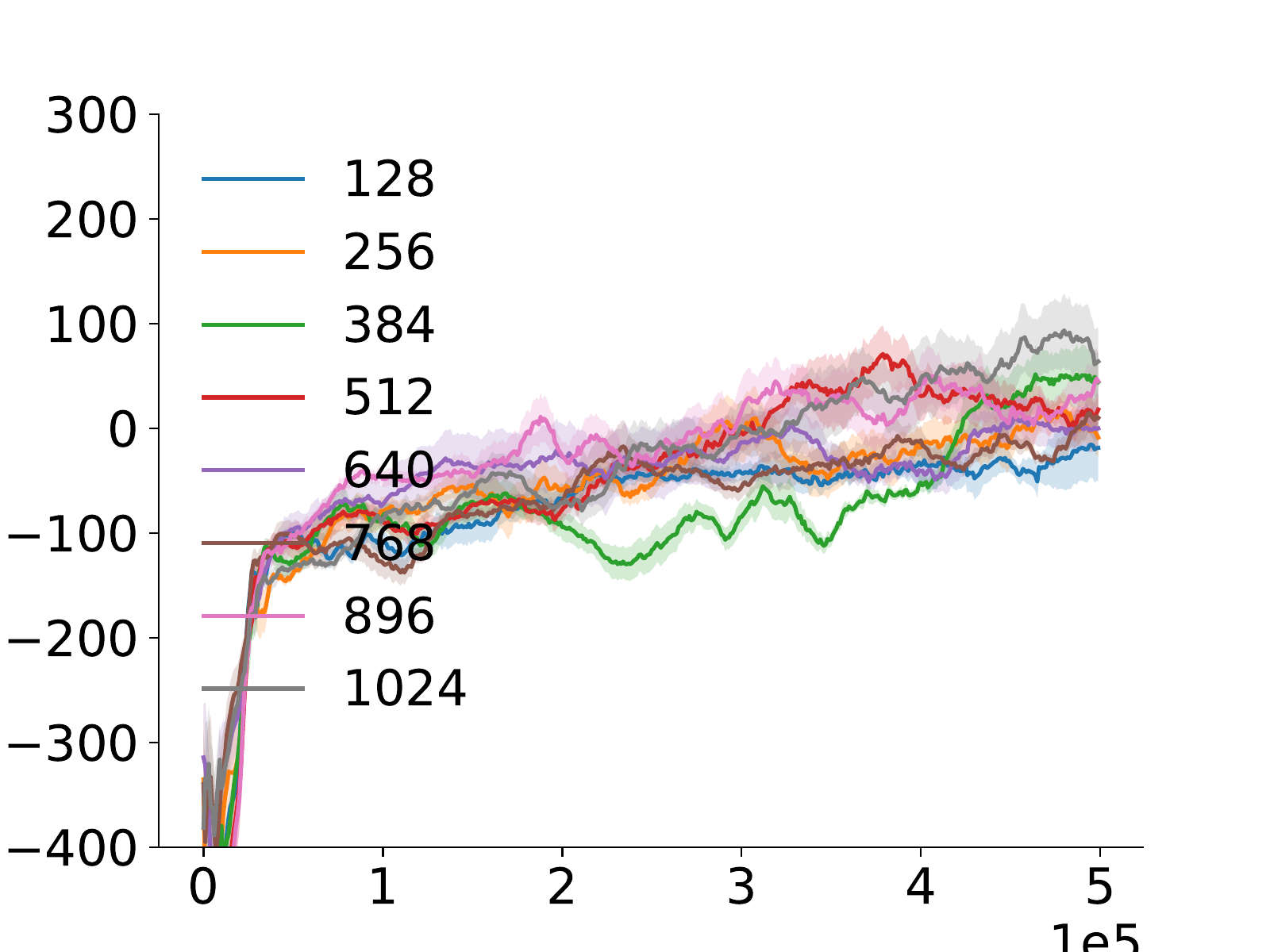}}
	\subfigure[Bound sensitivity]{
		\includegraphics[width=\figwidthfour]{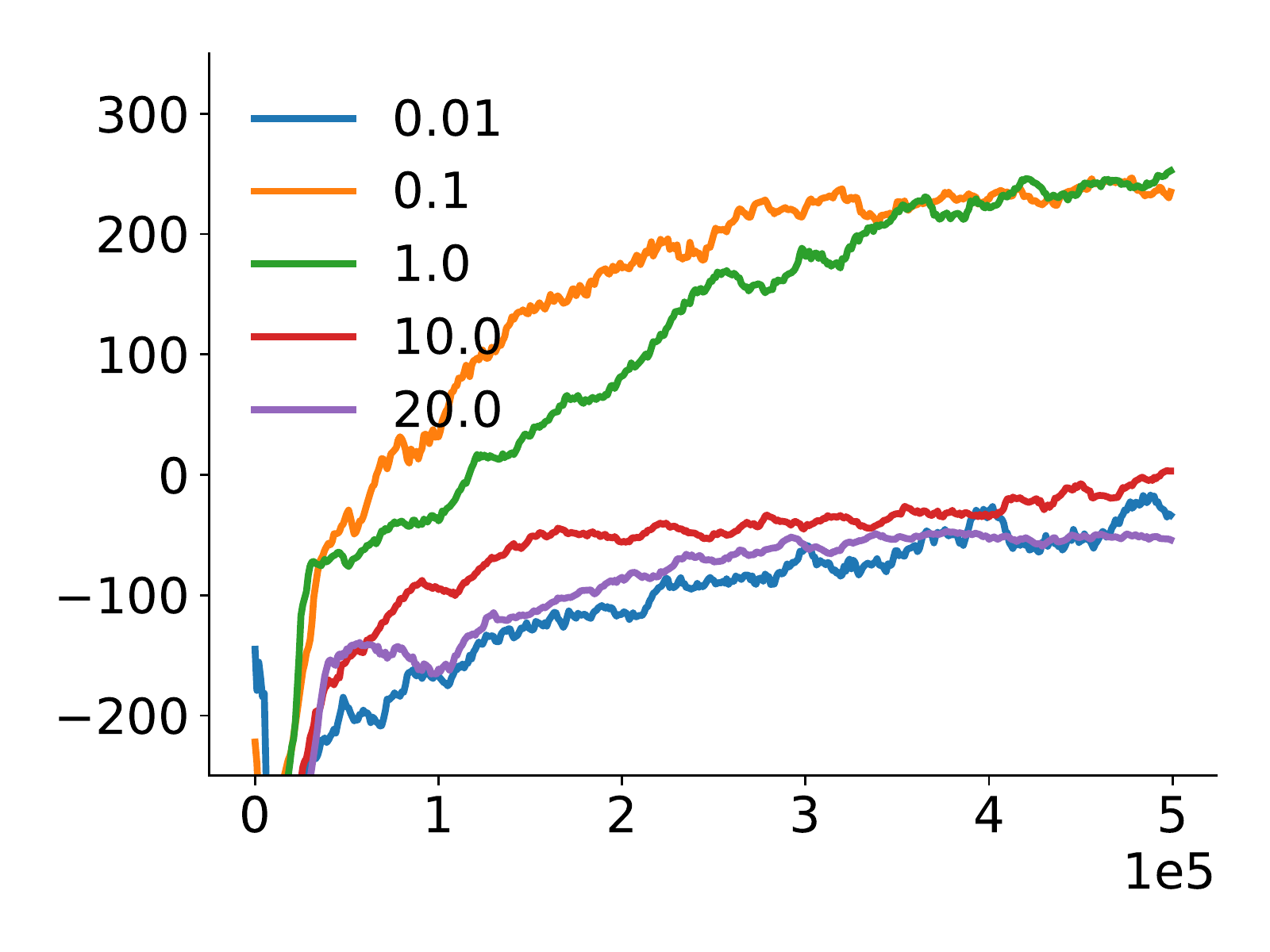}}
	\subfigure[Sparsity ratio]{
		\includegraphics[width=\figwidthfour]{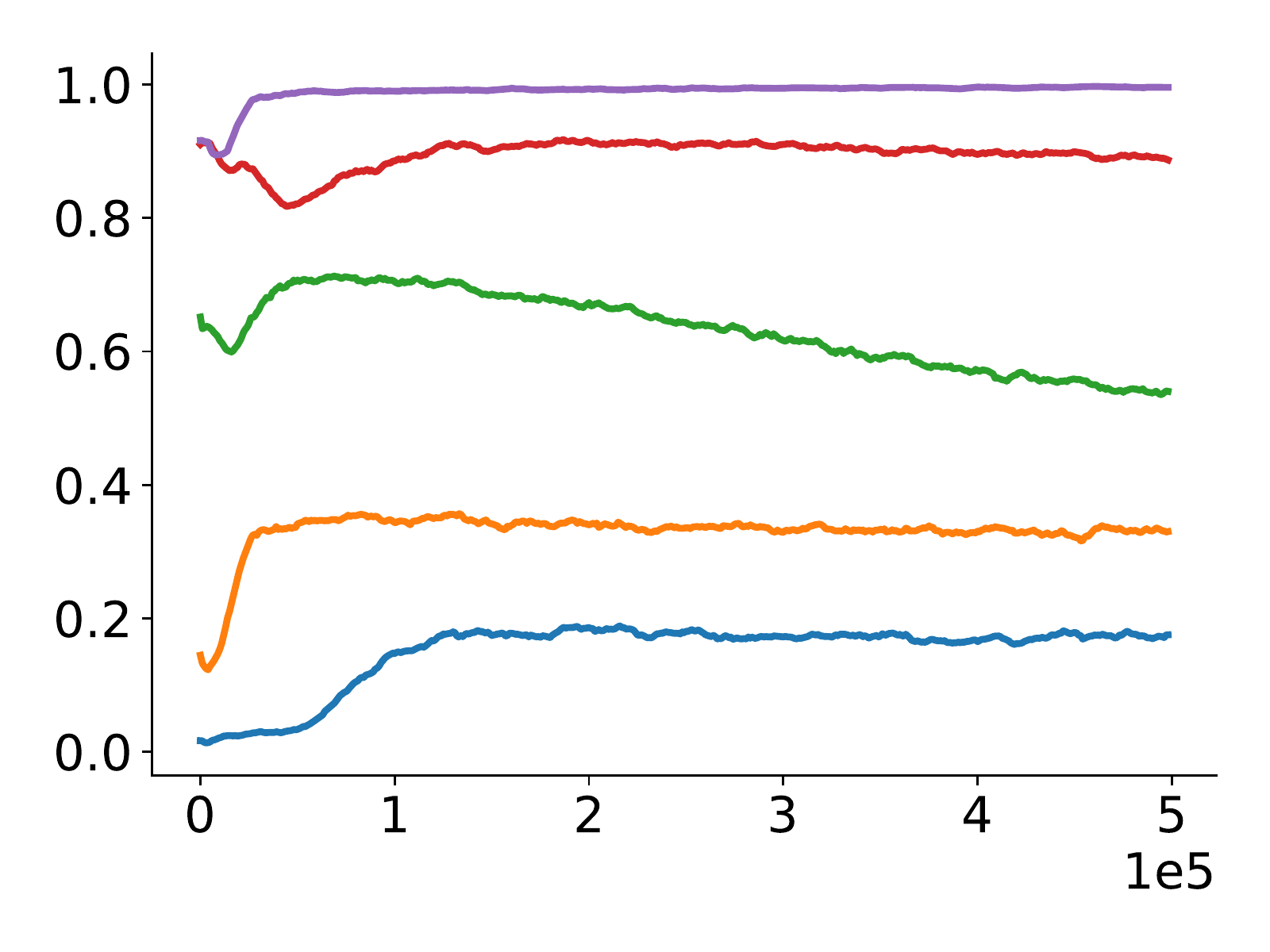}}
	\caption{
		\small
		(a) Evaluation learning curves showing episodic return versus environment time steps of DQN-FTA using different number of tiles $k$ as labeled. This is equivalent to varying tile width $\delta$ as $\delta=2u/k$. The results are averaged over $5$ random seeds. 
		(b) Evaluation learning curves of DQN without using a target NN as we change the number of the second hidden layer units as labeled in the figure. 
		(c) Evaluation learning curves of DQN-FTA uses $[-u,u]$ as tiling bound where $u\in \{0.01, 0.1, 1.0, 10.0, 20.0\}$. The results are averaged over $10$ runs. The standard error is not shown but is sufficient small to differentiate the two groups (i.e., $\{0.1, 1.0\}$ and $\{0.01, 10.0, 20.0\}$) corresponding to appropriate bound and too large/small bound. To generate this figure, we fix on using FTA with $20$ tiles (i.e., tile width $\delta=2u/20$).
		(d) The overlap-instance sparsity ratio v.s. environment time steps. The standard error is very small and is ignored. The results are averaged over $10$ runs. 
	}\label{fig:lunar-sensi}
\end{figure}


\section{Discussion}

In this work, we proposed the idea of natural sparsity, aiming at achieving sparsity without learning in a deep learning setting. We design an activation by drawing on the idea of binning which produces a one-hot encoding of an input. 
We provide a Fuzzy Tiling Activation (FTA) with a sparsity control mechanism that enables backpropagation of gradients, and potentially improves generalization by increasing active feature units. We show that the FTA still has sparsity guarantees, related to the choice of $\eta$. 
The FTA provides sparse representations by construction, and so it is much easier to use when learning online than conventional sparse representation learning methods. We highlight that FTA is robust to high levels of covariate shift, in a synthetic supervised learning problem. We then show across several discrete and continuous control RL environments that using the FTA significantly improve learning efficiency and stability, and in most cases even removes the need of target networks. 

Our work suggests several promising future research directions. The first is to further investigate the choices for the tiling bound. An adaptive approach for the tiling bound is particularly relevant for ease-of-use, as we found more sensitivity to this choice. 
Second, FTA is actually complementary to approaches~\citep{liu2018rotate,riemer2018learning,javed2019meta} designed to explicitly mitigate interference. An interesting direction is to combine FTA with these methods to reduce interference. Last, it is important to study how we should balance generalization and discrimination. We observe that an algorithm's performance degrades when either the overlap sparsity is too high or too low. Learning sparse representation with an appropriate overlap sparsity may be most desirable. 

\clearpage
\section{Acknowledgements}
We would like to thank all anonymous reviewers for their helpful feedback. We thank Han Wang for carefully reading our paper and providing helpful suggestions to improve our presentation. We acknowledge the funding from the Canada CIFAR AI Chairs program and Alberta Machine Intelligence Institute.  


{
\bibliography{paper}  
\bibliographystyle{icml2020}
}
\iftrue
\clearpage
\newpage
\appendix

\section{Appendix}

This appendix includes the following contents:
\begin{enumerate}
	\item Section~\ref{sec-tc-review} briefly reviews tile coding which inspires FTA and the naming.
	\item Section~\ref{sec-proofs} shows the proofs for theorems about sparsity guarantee in this paper.
	\item Section~\ref{Sec:lta-more-discussion} discusses an alternative way to handle the case when the inputs of FTA go out of the boundary of the tiling vector $\cvec$. 
	\item Section~\ref{sec-reproduce} includes experimental details of Section~\ref{sec:rlexp} for reproducible research.
	\item Section~\ref{sec-additional-experiments} presents additional experiments in reinforcement learning setting. 
	\item Section~\ref{sec-image-results} reports the results of FTA on two popular image classification datasets: Mnist~\citep{lecun2010mnist} and Mnistfashion~\citep{xiao2017mnistfashion}. 
	\item Section~\ref{sec-supervised-synthetic-construction} includes the details of the synthetic supervised learning experiment from Section~\ref{sec:slexp}.
\end{enumerate}

\subsection{Tile Coding Review}\label{sec-tc-review}
%
%
We give a brief review of tile coding here, as tile coding inspires our Fuzzy Tiling Activation (and its naming).

Tile coding is a generalization of state aggregation, that uses multiple tilings (aggregations) to improve discrimination. For input $z \in [0,1]$, state-aggregation would map $z$ to a one-hot vector with a 1 in the corresponding bin (bin can be also called tile), with $k$ the number of bins discretized into intervals of length $\delta$. In tile coding, multiple such tilings of the input are concatenated, where each tiling is offset by a small amount. This is depicted in Figure~\ref{fig:tilecoder1d}, where we show two tilings, one covering from $[-0.05, 1]$ and the other from $[0.0, 1.05]$ both with $k = 4$ and $\delta = 1.05/4=0.2625$. The resulting feature vector is a concatenation of these two tilings, to produce 8 features. Two nearby inputs $z = 0.3$ and $z' =0.55$ would be aggregated together in state-aggregation, making it impossible to produce different values for those inputs. Tile coding, on the other hand, would cause them to share one feature in the second tiling, but each have a distinct feature that enables them to have different values. The large bins/tiles allows for some information sharing, for fast generalization; and the overlapping tilings allows for improved discrimination. 
 
\begin{figure}[!htbp]
	\vspace{-0.3cm}
	\centering
		\includegraphics[width=0.43\textwidth]{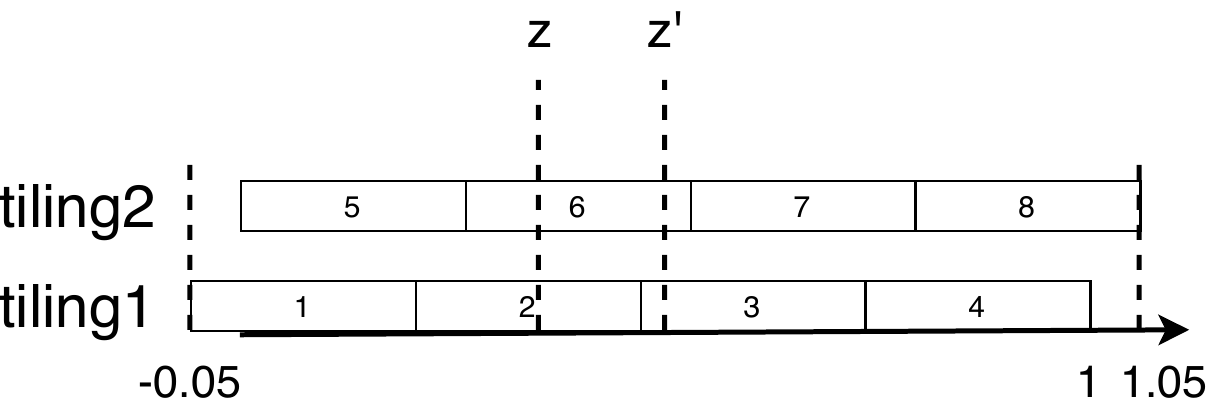}
		\begin{minipage}{\figwidthtwo}	
			\vspace{-2.0cm}	
			\caption{	\small
				Tile coding maps a scalar to an $8$ dimensional binary vector. For example, $z$ activates the second tile on both tilings which gives the feature vector $(0, 1, 0, 0, 0, 1, 0, 0)$. Similarly, $z'\mapsto (0,0,1,0,0,1,0,0)$. 
			}
			\label{fig:tilecoder1d}
		\end{minipage}
\end{figure}
Unfortunately, tile coding can scale exponentially in the dimension $d$ of the inputs. Each dimension is discretized to a granularity of $\delta$ ($k$ regions), with the cross-product between all these regions resulting in $k^d$ bins. One natural approach to make binning more scalable and more practically usable, is to combine it with neural networks (NN). Our activation function---Fuzzy Tiling Activation---enables a differentiable binning operation. 

It is known that those simple domains which are able to use tile coding never require a separate slowly moving weight vector (i.e., the counterpart of a target network in linear function approximation setting)~\citep{sutton1996rlsparse,sutton2018intro}. Based on this observation, \cite{vincent2018sparse} indicates that in a deep learning setting, when using sparse representation, the target network can be removed and the sample efficiency can be improved, coinciding with many of our empirical results. It is sensible that the target network possibly slower down learning. Because at each environment time step, the new information is not immediately utilized to estimate the bootstrap target and this could slower down learning. Accurately estimating bootstrap targets may be highly beneficial in Dyna-style model-based reinforcement learning~\cite{sutton1991dyna,sutton2008dyna}, as the planning stage typically involves efficient improvement of value estimates~\citep{moore1993prioritized,yangchen2018rem,gu2016continuous,pan2019hcdyna}. It may be an interesting future direction to carefully study the effect of removing the target network in Dyna-style planning. 

\subsection{Proofs}\label{sec-proofs}
We now provide the proof for Proposition~\ref{thm-onehot-guarantee}, Proposition~\ref{thm-apponehot}, Theorem~\ref{thm-sparsity-guarantee} and Corollary~\ref{cor-sparsity} below.

\subsubsection{Proof for Proposition~\ref{thm-onehot-guarantee}}\label{sec-proof-thm-onehot}

For convenience, we define the $k$-dimensional one-hot vector $\evec_i$ whose $i$th entry is one and otherwise zero. 

\begin{prop}\label{thm-onehot-guarantee}
Under Assumption~\ref{assum-k-delta}, for any $z \in [l, u]$,
	\begin{enumerate}[leftmargin=1cm,itemsep=0ex,topsep=0pt]
		\item If $\cvec_i < z < \cvec_{i+1}$ for $i\in [k-1]$ \textbf{or} $c_k < z$, then $\onehot(z) = \evec_i$ 
		\item If $z=\cvec_{i}$ for $i\in \{2,...,k\}$, then $\onehot(z) = \evec_{i-1} + \evec_i$ 
		\item If $z = \cvec_1$, then $\onehot(z) = \evec_1$ 
	\end{enumerate}
\end{prop}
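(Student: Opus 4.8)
The plan is to reduce the whole statement to a single per-coordinate scalar computation. Write $g_i(z) \defeq \max(\cvec_i - z, 0) + \max(z - \delta - \cvec_i, 0)$ for the $i$-th entry of the vector fed to $\indicatorplus$ in Eq.~\eqref{onehot-op}. First I would note that both summands are nonnegative, so $g_i(z) = 0$ exactly when each summand vanishes separately: the first vanishes iff $z \ge \cvec_i$ and the second iff $z \le \cvec_i + \delta$. Hence $g_i(z) = 0$ iff $z \in [\cvec_i, \cvec_i + \delta]$, and $g_i(z) > 0$ otherwise. Since $\indicatorplus(0) = 0$ and $\indicatorplus(t) = 1$ for $t > 0$, this gives the key identity
\[
\onehot_i(z) = 1 - \indicatorplus(g_i(z)) =
\begin{cases}
1 & z \in [\cvec_i, \cvec_i + \delta], \\
0 & \text{otherwise.}
\end{cases}
\]
So determining $\onehot(z)$ amounts to recording which coordinates have $z$ in their closed ``active interval'' $[\cvec_i, \cvec_i + \delta]$.

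Next I would use the arithmetic of the tiling vector, $\cvec_i = l + (i-1)\delta$, so that $\cvec_i + \delta = \cvec_{i+1}$ for $i \le k-1$, and $\cvec_k + \delta = u$ by Assumption~\ref{assum-k-delta}. The active interval of coordinate $i$ is therefore exactly $[\cvec_i, \cvec_{i+1}]$ for $i < k$ and $[\cvec_k, u]$ for $i = k$. These intervals cover $[l,u]$ and overlap only at the shared endpoints $\cvec_2, \dots, \cvec_k$. The three cases then follow by inspection. If $\cvec_i < z < \cvec_{i+1}$, then $z$ lies in the interior of the $i$-th interval and in no other, so only $\onehot_i(z) = 1$, giving $\evec_i$; the case $\cvec_k < z \le u$ is identical with $i = k$. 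If $z = \cvec_i$ for $2 \le i \le k$, then $z$ is the shared endpoint of the $(i-1)$-th and $i$-th intervals, so both $\onehot_{i-1}(z)$ and $\onehot_i(z)$ equal $1$ and the output is $\evec_{i-1} + \evec_i$. Finally, $z = \cvec_1 = l$ is the left endpoint of the first interval but has no left neighbour, so only $\onehot_1(z) = 1$, giving $\evec_1$.

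The computation is essentially mechanical, so the only point demanding care — and the place a careless argument would slip — is the boundary behaviour, which hinges entirely on the convention $\indicatorplus(0) = 0$ (closed rather than open active intervals). This is exactly what makes the endpoints $z = \cvec_i$ activate \emph{two} neighbouring tiles, and what distinguishes the interior boundaries ($2 \le i \le k$) from the left edge $z = \cvec_1$, where the absence of a coordinate $\cvec_0$ leaves only a single active entry. I would therefore present the $g_i(z) = 0$ characterization with explicit non-strict inequalities to make the endpoint bookkeeping airtight before carrying out the case split.
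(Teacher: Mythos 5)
Your proof is correct. It is organized differently from the paper's: you isolate a single per-coordinate lemma --- writing $g_i(z) \defeq \max(\cvec_i - z, 0) + \max(z - \delta - \cvec_i, 0)$, you show $g_i(z) = 0$ iff $z \in [\cvec_i, \cvec_i + \delta]$, hence the $i$-th output of $\onehot$ equals $1$ exactly on that closed interval --- and then all three cases follow at once from the fact that these intervals tile $[l,u]$ and meet only at the interior knots $\cvec_2, \dots, \cvec_k$. The paper instead proves each case separately, writing out the two max-vectors $\max(\cvec - z, 0)$ and $\max(z - \delta - \cvec, 0)$ in full and locating the zero entries of their sum; the underlying observation (an entry of the sum vanishes iff both maxes vanish, which happens iff $\cvec_i \le z \le \cvec_i + \delta$, using the convention $\indicatorplus(0) = 0$) is the same, but it is rederived three times rather than stated once. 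Your factorization is more economical and makes the endpoint bookkeeping --- why interior knots activate two tiles while $z = \cvec_1$ activates only one --- an immediate consequence of the closed-interval characterization rather than a separate vector computation. What the paper's more explicit route buys is reuse: the displayed per-case vectors (e.g.\ the one in the proof of Proposition~\ref{thm-apponehot}) are taken up nearly verbatim in the proofs of Proposition~\ref{thm-apponehot} and Theorem~\ref{thm-sparsity-guarantee}, where one must count entries at most $\eta$ rather than merely identify zeros; under your organization those explicit vector forms would need to be derived at that point instead.
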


\begin{proof}
	In all thee cases, the first max operation in $\onehot$ is 
	\[
	\max(\cvec-z, 0) = (0, 0, ..., \cvec_{i+1}-x, \cvec_{i+2}-z, ..., \cvec_k-z).
	\]
	To understand the output of the second max operation, we look at the three cases separately.\\	
	\textbf{Case 1:} Because $\cvec_i < z < \cvec_{i+1}$, we know that $\cvec_{i-1} < z - \delta < \cvec_{i}$. 
	This implies the second max operation in $\onehot$ is:
	\begin{align*}
	\max&(z-\delta-\cvec, 0) \\
	&= (z-\delta-\cvec_1, z-\delta-\cvec_2, ..., z-\delta-\cvec_{i-1}, 0, 0, ..., 0)
	\end{align*}
	Therefore, the sum of both max operations $\max(\cvec-z, 0) + \max(z-\delta-\cvec, 0)$ has positive elements everywhere except the $i$th position, which is zero. Hence $\indicatorplus(\max(\cvec-z, 0) + \max(z-\delta-\cvec, 0))$ gives a vector where every entry is $1$ except the $i$th entry which is $0$. Then $1-\indicatorplus(\max(\cvec-z, 0) + \max(z-\delta-\cvec, 0)) = \evec_i$.\\	
	\textbf{Case 2:} If $z=\cvec_i, i\in\{2,...,k\}$, then $\cvec_{i-1} = z - \delta$, and
	\begin{align*}
	\max&(z-\delta-\cvec, 0) \\
	&= (z-\delta-\cvec_1,  z-\delta-\cvec_2, ..., z-\delta-\cvec_{i-2}, 0, ..., 0).
	\end{align*}
	It follows that $\max(\cvec-z, 0) + \max(z-\delta-\cvec, 0)$ has exactly two zero entries, at indices $i-1$ and $i$. This gives $1-\indicatorplus(\max(\cvec-z, 0) + \max(z-\delta-\cvec, 0)) = \evec_{i-1} + \evec_i$, a vector with ones at indices $i-1$ and $i$. \\
	\textbf{Case 3:} When $z = \cvec_1$, $\max(z-\delta-\cvec, 0)$ is a zero vector and $\max(\cvec-z, 0)$ is positive everywhere except the first entry, which is zero. Again this gives $1-\indicatorplus(\max(\cvec-z, 0) + \max(z-\delta-\cvec, 0)) = \evec_1$.
\end{proof}

\subsubsection{Proof for Proposition~\ref{thm-apponehot}}\label{sec-proof-thm-apponehot}

\begin{prop}\label{thm-apponehot}
Let $\mathcal{I}$ be the set of indices where $\onehot(z)$ is active (from Theorem~\ref{thm-onehot-guarantee} we know it can only contain one or two indices). Under Assumption~\ref{assum-k-delta}, for any $z \in [l, u]$, the function $\onehotapp(z) = \onehot(z) + \Delta$, where $\Delta$ is mostly zero, with a few non-zero entries. The vector $\Delta$ has the following properties: 
\begin{enumerate}[leftmargin=0.4cm,itemsep=0ex,topsep=0pt]
	\item $\Delta$ is zero at indices $i \in \mathcal{I}$, i.e., $\onehotapp(z)$ equals $\onehot(z)$ at indices $i \in \mathcal{I}$.
	\item $\Delta$ is non-zero at indices $\{j | j \notin \mathcal{I}, j\in [k], 0 < z-\delta-\cvec_j \le \eta, 0 < \cvec_j- z \le \eta\}$.
\end{enumerate}
\end{prop}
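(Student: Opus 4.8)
The plan is to reduce the whole statement to a single scalar comparison between $\indicatorplus$ and $\indicatorplusapp$, applied entrywise to the pre-activation vector
\[
\vvec \defeq \max(\cvec - z, 0) + \max(z - \delta - \cvec, 0)
\]
that is shared by both $\onehot$ and $\onehotapp$. First I would import the entrywise description of $\vvec$ already established inside the proof of Proposition~\ref{thm-onehot-guarantee}: for any $z \in [l,u]$, the component $v_j$ equals $z - \delta - \cvec_j$ on the indices below the active bin, equals $\cvec_j - z$ on the indices above it, and equals exactly $0$ on the active set $\mathcal{I}$. The only consequence I need is the dichotomy $v_j = 0 \iff j \in \mathcal{I}$, with $v_j > 0$ for every $j \notin \mathcal{I}$.

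Next, because $\onehot(z) = 1 - \indicatorplus(\vvec)$ and $\onehotapp(z) = 1 - \indicatorplusapp(\vvec)$ with both indicators acting elementwise, subtraction gives the clean identity $\Delta_j = [\onehotapp(z)]_j - [\onehot(z)]_j = \indicatorplus(v_j) - \indicatorplusapp(v_j)$. The proposition thereby collapses to understanding the scalar map $g(x) \defeq \indicatorplus(x) - \indicatorplusapp(x)$ on $x = v_j \ge 0$, which I would settle with a three-way case split: $g(0) = 0$ since both terms vanish; $g(x) = 0$ for $x > \eta$ since both indicators equal $1$; and for $0 < x \le \eta$ one has $\indicatorplus(x) = 1$ while $\indicatorplusapp(x) = x$ (or $0$ at the isolated endpoint $x = \eta$), so $g(x) = 1 - \indicatorplusapp(x) \neq 0$.

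Combining the two ingredients closes both claims. Point 1 is immediate: each $j \in \mathcal{I}$ has $v_j = 0$, so $\Delta_j = g(0) = 0$ and $\onehotapp$ coincides with $\onehot$ on $\mathcal{I}$. For point 2, an index $j \notin \mathcal{I}$ yields a nonzero $\Delta_j$ exactly when $0 < v_j \le \eta$; substituting the two entrywise forms of $v_j$ recovered in the first step turns this into ``$0 < \cvec_j - z \le \eta$ for $j$ above the active bin, or $0 < z - \delta - \cvec_j \le \eta$ for $j$ below it,'' which is precisely the advertised index set (the two inequalities cannot hold simultaneously, since one forces $\cvec_j > z$ and the other $\cvec_j < z - \delta$, so the comma must be read as a disjunction). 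I expect no conceptual obstacle here; the main thing to get right is this final bookkeeping translation, together with explicitly disposing of the endpoint $x = \eta$, where $\indicatorplusapp$ is discontinuous but $g(\eta) = 1 \neq 0$ keeps the index inside the nonzero set.
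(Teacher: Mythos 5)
Your proof is correct and takes essentially the same route as the paper's: both reduce the claim to an entrywise comparison of $\indicatorplus$ and $\indicatorplusapp$ on the shared pre-activation vector $\max(\cvec-z,0)+\max(z-\delta-\cvec,0)$, whose zero entries are exactly the indices in $\mathcal{I}$ by the analysis in Proposition~\ref{thm-onehot-guarantee}, and both read the defining conditions of the index set in claim 2 as a disjunction. Your only departures are organizational---a single scalar case split on $g(x)=\indicatorplus(x)-\indicatorplusapp(x)$ in place of the paper's re-run of the three positional cases, plus an explicit treatment of the endpoint $x=\eta$ that the paper glosses over---though note that both arguments silently share the same edge case: for $0<v_j<\eta$ one has $\Delta_j=1-v_j$, which vanishes if some entry $v_j$ equals exactly $1$ (possible only when $\eta>1$), so claim 2 as stated, and both proofs of it, implicitly exclude that measure-zero situation.
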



\begin{proof}
	\textbf{Part 1.} Let the $i$th entry be active in $\onehot(z)$. We have one of the three cases hold as stated in Theorem~\ref{thm-onehot-guarantee}. Assume $\cvec_i < z < \cvec_{i+1}$. 
	Note that
	\begin{align*}
	\max(\cvec-z, 0) &= (0, 0, ..., \cvec_{i+1}-z, ..., \cvec_k-z)\\
	\max(z-\delta-\cvec, 0) & = (z-\delta-\cvec_1, ..., z-\delta-\cvec_{i-1}, 0, ..., 0),
	\end{align*}
	taking the sum of the above two equations gives us a vector as following: 
	\begin{equation}\label{thm2-helper}
	(z-\delta-\cvec_1, ..., z-\delta-\cvec_{i-1}, 0, \cvec_{i+1}-z, ..., \cvec_k-z)
	\end{equation}
	
	Then applying $\indicatorplusapp(\cdot)$ to vector~\eqref{thm2-helper} gives us a vector retaining elements $\le \eta$ and all other elements become $1$. Hence the $i$th position is zero after applying $\indicatorplusapp(\cdot)$ to vector~\eqref{thm2-helper}. Using one minus this vector would give us a vector with only the $i$th is one.  But this is exactly $\onehotapp(z)$. And since $\onehotapp(z) = \onehot(z) + \Delta$, and the $i$th entry of $\onehot(z)$ is also one, the $i$th entry of $\Delta$ must be zero. Similar reasoning applies to the cases when $z = \cvec_i, i \in \{2, ..., k\}$ or $z = \cvec_1$.
	
	\textbf{Part 2.} Note that applying $\indicatorplusapp(\cdot)$ to the vector $\max(\cvec-z, 0) + \max(z-\delta-\cvec, 0)$ keeps all elements no more than $\eta$ and making all other elements one. As a result, $1 - \indicatorplusapp(\max(\cvec-z, 0) + \max(z-\delta-\cvec, 0))$ would give a vector zero everywhere except those entries in $\max(\cvec-z, 0) + \max(z-\delta-\cvec, 0)$ which are $\le \eta$. The set of indices which are $\le \eta$ in the vector $\max(\cvec-z, 0) + \max(z-\delta-\cvec, 0)$ can be written as $\{j | j\in [k], 0 < z-\delta-\cvec_j \le \eta, 0 < \cvec_j- z \le \eta\}$, which is also the set of indices where $\onehotapp(z)$ is nonzero. Since $\onehotapp(z) = \onehot(z) + \Delta$ and $\onehot(z)$ has nonzero entries in the indices $\mathcal{I}$ and $\Delta$ has zero values at those entries by Part 1, then $\Delta$ must have nonzero entries at $\{j | j \notin \mathcal{I}, j\in [k], 0 < z-\delta-\cvec_j \le \eta, 0 < \cvec_j- z \le \eta\}$.
\end{proof}

\subsubsection{Proof for Theorem~\ref{thm-sparsity-guarantee}}\label{sec-proof-thm-sparsity-guarantee}

{\bf \cref{thm-sparsity-guarantee}. Sparsity guarantee for FTA.}
For any $z \in [l, u], \eta>0$, $\onehotapp(z)$ outputs a vector whose number of nonzero entries $\|\onehotapp(z)\|_0$ satisfies:
\[
\|\onehotapp(z)\|_0 \le 2\left\lfloor\frac{\eta}{\delta}\right\rfloor + 3
\]
\begin{proof}
	Similar to the proof of Theorem~\ref{thm-onehot-guarantee}, we divide to three cases to prove the result. 
	
	\textbf{Case 1.} Consider the case that $\cvec_{i} < z < \cvec_{i+1}, i\in[k-1]$. 
	Note that the number of nonzero entries in $\onehotapp$ is equal to the number of entries less than $\eta$ in the vector~\eqref{thm2-helper}, hence we can count the number of entries less than $\eta$ in $	(z-\delta-\cvec_1, ..., z-\delta-\cvec_{i-1}, 0, \cvec_{i+1}-z, \cvec_{i+2}-z, ..., \cvec_k-z)$. 
	
	%
	First, we count the number of entries that are less than or equal to $\eta$ on the left side of the $i$th position. Since the $i$th position is zero, which indicates $z - \delta - \cvec_{i} < 0$, hence $z - \delta - \cvec_{i-1}-\delta < 0$ and it follows that $0 < z - \delta - \cvec_{i-1} < \delta$. Then $\delta < z - \delta - \cvec_{i-1} + \delta = z - \delta - \cvec_{i-2} < 2\delta$. Hence $ (j-1)\delta < z - \delta - \cvec_{i-j} < j\delta, j \in \{1, ..., i-1\}$. Assume there are $m$ entries $\le \eta$ on the the left side of the $i$th. Then $z - \delta - \cvec_{i-m} \le \eta$. It follows $(m-1)\delta < \eta$ and hence $m\le \lfloor\frac{\eta}{\delta}\rfloor + 1$. Hence the total number of elements $\le \eta$ on the left side of the $i$th position is at most $\lfloor\frac{\eta}{\delta}\rfloor + 1$.
	
	Second, count the number of entries $\le \eta$ on the right side of the $i$th position. Since $\cvec_i - z < 0$, $0 < \cvec_i + \delta - z =\cvec_{i+1} - z < \delta, 0 < \cvec_{i+2} - z < 2\delta, ...$. Hence the possible number of entries $\le \eta$ on the right side of $i$th position is at most $\lfloor\frac{\eta}{\delta}\rfloor + 1$. 
	
	As a result, together with the $i$th position which is $0\le \eta$, the possible number of nonzero entries in this case is at most $2\lfloor\frac{\eta}{\delta}\rfloor + 3$. 
	
	\textbf{Case 2.} When $z=\cvec_i, i\in\{2,...,k\}$, we count the number of entries less than $\eta$ in the vector 
	\begin{align*}
	\max(\cvec-z, 0) + \max(z-\delta-\cvec, 0) \\
	=((i - 1)\delta, ..., 2\delta, \delta, 0, 0, \delta, 2\delta, ..., (k-i)\delta)
	\end{align*}
	Again, we attempt to count the number of entries less than $\eta$ in this vector by considering $(i - 1)\delta, ..., 2\delta, \delta$ and $\delta, 2\delta, ..., (k-i)\delta$ respectively.
	
	We follow the exactly same argument as above and now we have two zero entries at $i-1, i$th positions. The difference is that, the number of entries less than $\eta$ in the vector $((i - 1)\delta, ..., 2\delta, \delta$ can be at most $\lfloor\frac{\eta}{\delta}\rfloor$. As a result, the number of nonzero entries in this case is still at most $2\lfloor\frac{\eta}{\delta}\rfloor + 2$. 
	
	\textbf{Case 3.} When $z = \cvec_1$, $\max(z-\delta-\cvec, 0)$ is a zero vector and $\max(\cvec-z, 0) + \max(z-\delta-\cvec, 0)$ is positive everywhere except the first entry, which is zero. Then we simply count the number of entries $\le \eta$ on the right side of the $0$th position, i.e. $j \in \{1, ..., k\}$. Similar to the analysis in the above {Case 1}, the possible number of entries $\le \eta$ on the right side of $0$th position is at most $\lfloor\frac{\eta}{\delta}\rfloor$. Hence in this case, the number of nonzero entries is at most $\lfloor\frac{\eta}{\delta}\rfloor + 1$. 
	
	In summary, the number of nonzero entries does not exceed $2\lfloor\frac{\eta}{\delta}\rfloor + 3$. This completes the proof. 
\end{proof}

\textbf{Remark.} As we empirically demonstrated in Table~\ref{discrete-sparsity} and below Figure~\ref{fig:overlap-sparsity-lc} and Figure~\ref{fig:instance-sparsity-lc}, the actual sparsity achieved by FTA is lower than the upper bound. This is because our upper bound is for any possible input $z$. Consider that in Case 1 in the above proof, we count the number of entries that are less than or equal to $\eta$ on the left side and right side of the $i$th position. There are $\lfloor\frac{\eta}{\delta}\rfloor + 1$ such entries on both sides only when $z$ is exactly equal to $\frac{\cvec_{i} + \cvec_{i-1}}{2}$, which is unlikely to happen in practice. 

\subsubsection{Proof for Corollary~\ref{cor-sparsity}}

\textbf{Corollary \ref{cor-sparsity}} 
\textit{
	Let $\rho \in [0,1)$ be the desired sparsity level: the maximum proportion of nonzero entries of $\onehotapp(z), \forall z \in [l, u]$. Assume $\rho k \ge 3$, i.e., some inputs have three active indices or more (even with $\eta = 0$, this minimal active number is 2). Then $\eta$ should be chosen such that 
\begin{equation}
\left\lfloor\frac{\eta}{\delta}\right\rfloor \le \frac{k\rho -3}{2} \ \ \ \text{or equivalently} \ \ \ \eta \le \frac{\delta}{2} \left(\left\lfloor k\rho\right\rfloor - 1\right) 
\end{equation}
}
\begin{proof}
	Because $\|\onehotapp(z)\|_0 \le \lfloor k \rho \rfloor$, from Theorem~\ref{thm-sparsity-guarantee} it is sufficient to pick $\eta$ such that  
	 $2 \lfloor \frac{\eta}{\delta} \rfloor + 3 \le \lfloor k \rho \rfloor \le k \rho$. This gives $\lfloor \frac{\eta}{\delta} \rfloor \le (\lfloor k \rho \rfloor - 3)/2 \le (k \rho - 3)/2$. Additionally, we know $\frac{\eta}{\delta} - 1 \le \lfloor \frac{\eta}{\delta} \rfloor\le (\lfloor k \rho \rfloor - 3)/2$, giving the second inequality. 
\end{proof}

\subsection{More discussion about FTA}\label{Sec:lta-more-discussion}

Our development of the FTA assumed that the inputs are bounded in the range $[l,u]$ (recall that $\cvec = [l, l+\delta, l+2\delta, ..., u]$). 
This is not guaranteed for the standard inputs to activations, namely $z = \xvec^\top \wvec$ for some weight vector $\wvec$. 
The FTA can still be used if $z \notin [l,u]$, but then the gradient of the FTA will be zero. This means that the weights $\wvec$ cannot
be adjusted for inputs where $z$ becomes too large or too small. This issue is usually called \emph{gradient vanish}, which is common in many popular, existing activation functions such as ReLU, tanh, sigmoid, etc. 

In our main paper, we proposed to use different tiling vectors (i.e., $\cvec$s) with a broad range of bounds to different components in the input vector of FTA to reduce the chance of vanishing gradients. Here we discuss two other possible ways to avoid this issue: 1) use a squashing activation before handing $z$ to the FTA and 2) regularize (penalize) $z$ that falls outside the range $[l,u]$. For example, tanh can be applied first to produce $z = \text{tanh}(\xvec^\top \wvec) \in [-1,1]$. Though the simplest strategy, 
using tanh function can be problematic in deep neural networks due to gradient vanishing problems. An alternative is to use a penalty, that pushes out-of-boundary $z$ back into the chosen range $[-u,u]$
\begin{equation}\label{eq-out-of-bound-loss}
	r(z) \defeq I(|z| > u) \circ |z|
\end{equation}

This penalty is easily added to the loss, giving gradient $\frac{\partial r(z)}{\partial z} = I(z > u) - I(z < -u)$.  
For example, $z = \max(\xvec^\top \wvec, 0)$, which might produce a value greater than $u$. If $z > u$, then the gradient pushes the weights $\wvec$ to decrease $z$. It should be noted that the number of nonzero entries can only decrease when $z$ goes out of boundary. However, in our experiments in our main paper, we found such out of boundary loss is unnecessary on all of our tested domains. Furthermore, we further verify that our FTA performs stably across different weights for the above regularization in Section~\ref{sec-additional-experiments}.

\subsection{Reproducing Experiments from Section~\ref{sec:rlexp}}\label{sec-reproduce}
\textbf{Common settings.} All discrete action domains are from OpenAI Gym~\citep{openaigym} with version $0.14.0$. Deep learning implementation is based on tensorflow with version $1.13.0$~\citep{tensorflow2015-whitepaper}. We use Adam optimizer~\citep{kingma2015adam}, Xavier initializer~\citep{xavier2010deep}, mini-batch size $b = 64$, buffer size $100$k, and discount rate $\gamma = 0.99$ across all experiments. We evaluate each algorithm every $1$k training/environment time steps. 

\textbf{Algorithmic details.} We use $64\times 64$ ReLU units neural network for DQN and $200\times 100$ ReLU units neural network for DDPG. All activation functions are ReLU except: the output layer of the $Q$ value is linear. The weights in output layers were initialized from a uniform distribution $[-0.003, 0.003]$. Note that we keep the same FTA setting across \emph{all} experiments: we set $[l,u] = [-20,20]$; we set $\delta=\eta=2.0$, $\cvec = \{-20, -18, -16, ..., 18\}$, and hence $k=40/2=20$. This indicates that the DQN-Large and DDPG-Large versions have $64\times 20=1280$ and $100\times 20=2000$ ReLU units in the second hidden layer. For RBF coding, we set the bandwidth as $\sigma=2.0$, and uses the same tiling (i.e. $\cvec$ vector) as our FTA. 

\textbf{Meta-parameter details.} For DQN, the learning rate is $0.0001$ and the target network is updated every $1$k steps. For DDPG, the target network moving rate is $0.001$ and the actor network learning rate is $0.0001$, critic  network learning rate is $0.001$. For $l_1, l_2$ regularization variants, we optimize its regularization weight from $\{0.1, 0.01, 0.001, 0.0001\}$ on MountainCar, then we fix the chosen optimal weight $0.01$ across all domains. 

\textbf{Environmental details on discrete control domains.} We set the episode length limit as $2000$ for MountainCar and keep all other episode limit as default settings. We use warm-up steps $5000$ for populating the experience replay buffer before training. Exploration noise is $0.1$ without decaying. During policy evaluation, we keep a small noise $\epsilon = 0.05$ when taking action. 

\textbf{Environmental details on continuous control domains.} On Mujoco domains, we use default settings for maximum episodic length. We use $5,000$ warm-up time steps to populate the experience replay buffer. The exploration noise is as suggested in the original paper by~\citet{tim2016ddpg}.

\subsection{Additional Experiments on Reinforcement Learning Problems}\label{sec-additional-experiments}

This section shows the following additional empirical results which cannot be put in the main body due to space limitations.
\begin{itemize}
	\item  \ref{sec:measure-sparsity} empirically investigates the representation sparsity generated by FTA.
	\item \ref{sec:compare-tcnn} shows the comparison with Tile Coding NNs~\citep{sina2020geometric}, which first tile code inputs before feeding them into a neural network. 
	\item \ref{sec:sensi-out-of-boundary} shows the empirical results of DQN-FTA using linear activation function with regularization weights $\in \{0.0, 0.01, 1.0\}$ for the out of bound loss in Eq~\ref{eq-out-of-bound-loss}. 
	\item \ref{sec:experiment-sensitivity} shows the sensitivity of FTA to the sparsity control parameter $\eta$ and tile width parameter $\delta$.
	\item \ref{sec:lta-both-rlexp} shows the results when applying FTA to both hidden layers rather than just the second hidden layer in RL problems. 
	\item \ref{sec:appendix-lta-gradient-interference} shows gradient interference analysis result.
	\item \ref{sec:appendix-lta-rlexp-autodriving} shows the result of using FTA on an autonomous driving application. 
\end{itemize}

\subsubsection{Empirically measuring sparsity}\label{sec:measure-sparsity}
We also report two sparsity measures of the learned representations averaged across all steps in Table~\ref{discrete-sparsity}. We compute an estimate of sparsity by sampling a mini-batch of samples from experience replay buffer and taking the average of them. Instance sparsity corresponds to proportion of nonzero entries in the feature vector for each instance. Overlap sparsity~\cite{french1991using} is defined as $overlap(\phi, \phi')=\sum_i I(\phi_i\neq 0) I(\phi'_i \neq 0)/(kd)$,
given two $kd$-dimensional sparse vectors $\phi, \phi'$. Low overlap sparsity potentially indicates less feature interference between different input samples.
Theorem~\ref{thm-sparsity-guarantee} guarantees that the instance sparsity, with FTA, should be no more than $12.5\%$ for our setting of $k = 40, \eta = \delta$; and should be no more than $25\%$ when $k=20, \eta=\delta$. 
In the table, we can see that FTA has lower (better) instance sparsity than the upper bound provided by the theorem. Further, FTA achieves the lowest instance and overlap sparsity among all baselines in an average sense. %

Figure~\ref{fig:instance-sparsity-lc} and Figure~\ref{fig:overlap-sparsity-lc} are corresponding to the learning curves as shown in Figure~\ref{fig:discrete-control-othersp} in Section~\ref{sec:experiment-sparse}. We show learning curves of instance/overlap sparsity as a function of training steps for FTA, $l_1, l_2$ regularization and RBF. The instance sparsity is computed by randomly sampling a mini-batch of states from replay buffer and count the average number of nonzero entries in that mini-batch divided by feature dimension $kd$. The overlap sparsity is computed by randomly sampling two mini-batches of states from the replay buffer and count the average number of simultaneously activated (i.e. nonzero) entries in both mini-batches. From the sparsity learning curves, one can see that our FTA has very stable sparsity since the beginning of the learning, and the sparsity is almost constant cross domains. Particularly, the overlap sparsity is very low, which possibly indicates low representation interference. 

\begin{table}[!htbp]
	\caption{Sparsity on LunarLander (average across time steps)}
	\label{discrete-sparsity}
	\centering
	\begin{tabular}{l  l  l  l  l  l}
		\cmidrule{1-6}
		Sparsity		& FTA(k=40) &  	FTA(k=20)		& L1				& L2				& RBF \\
		\midrule
		Instance  &  $7\%$  &  $14\%$ & $16\%$  & $44\%$ & $99\%$ \\
		Overlap		& $4\%$ &  $8\%$  & $10\%$   &  $34\%$ &   $99\% $\\
		\bottomrule
	\end{tabular}
\end{table}

\begin{figure*}[!htpb]
	\centering
	\subfigure[Acrobot-v1]{
		\includegraphics[width=\figwidthfour]{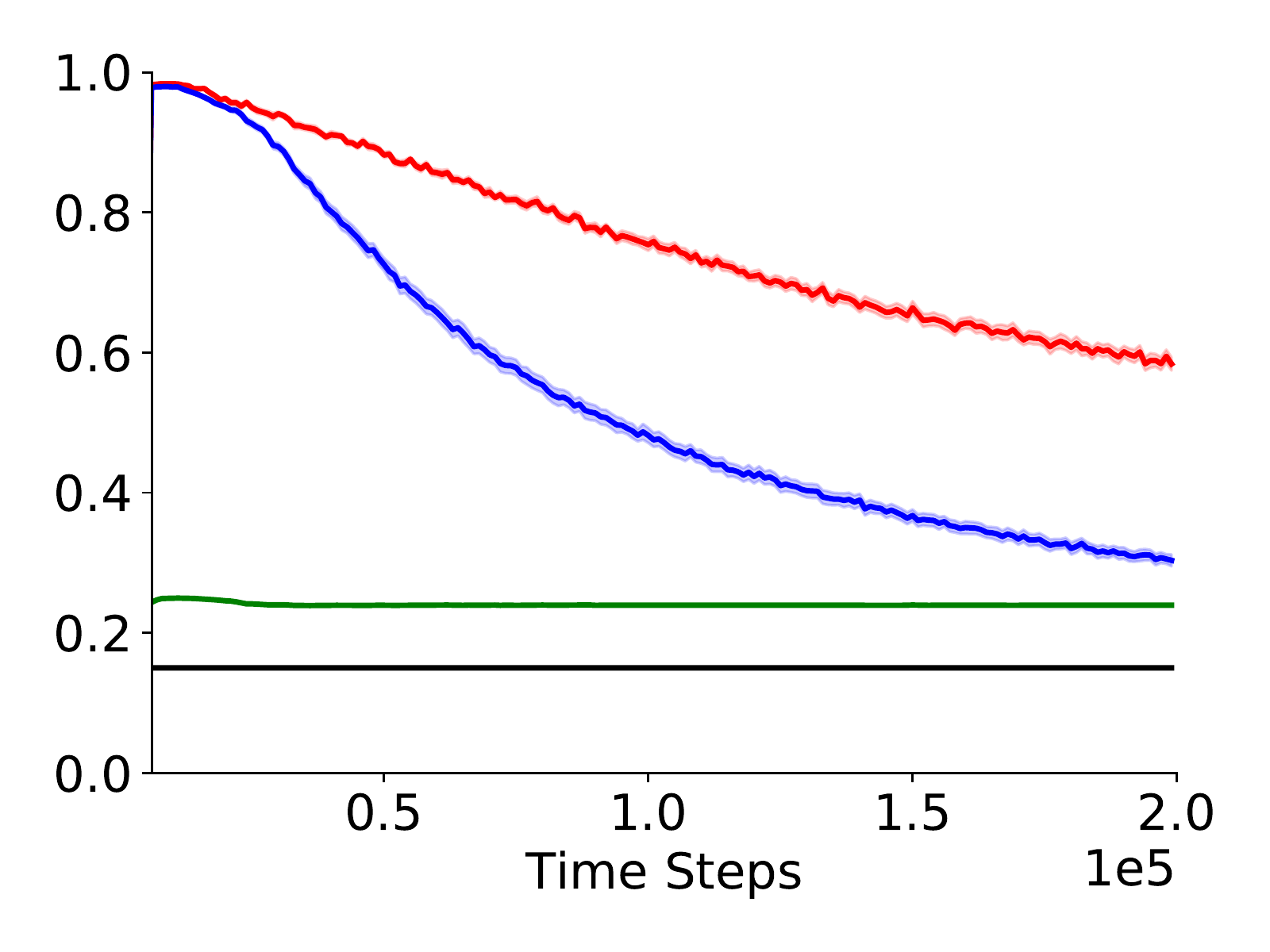}}
	\subfigure[MountainCar-v0]{
		\includegraphics[width=\figwidthfour]{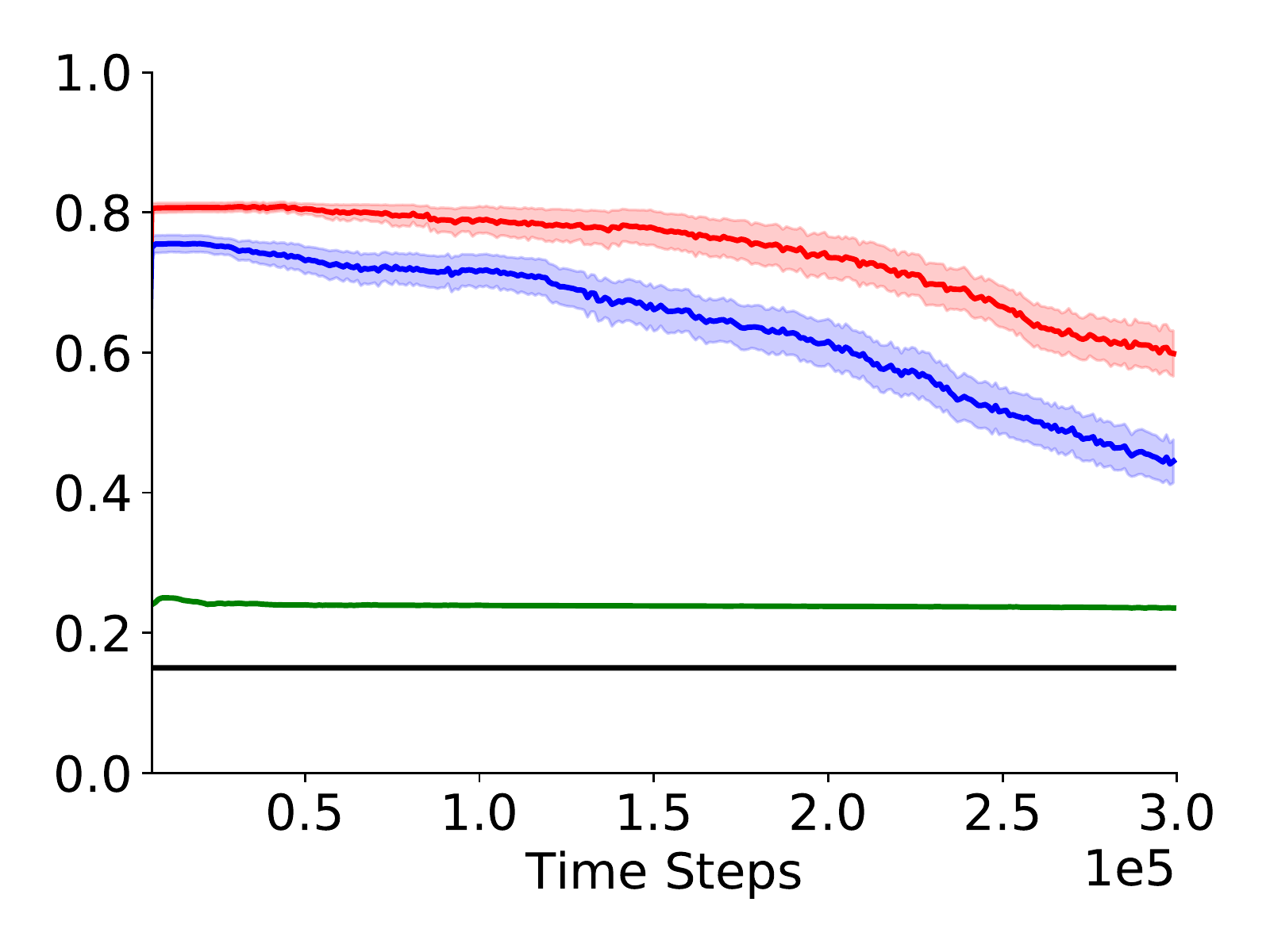}}
	\subfigure[CartPole-v1]{
		\includegraphics[width=\figwidthfour]{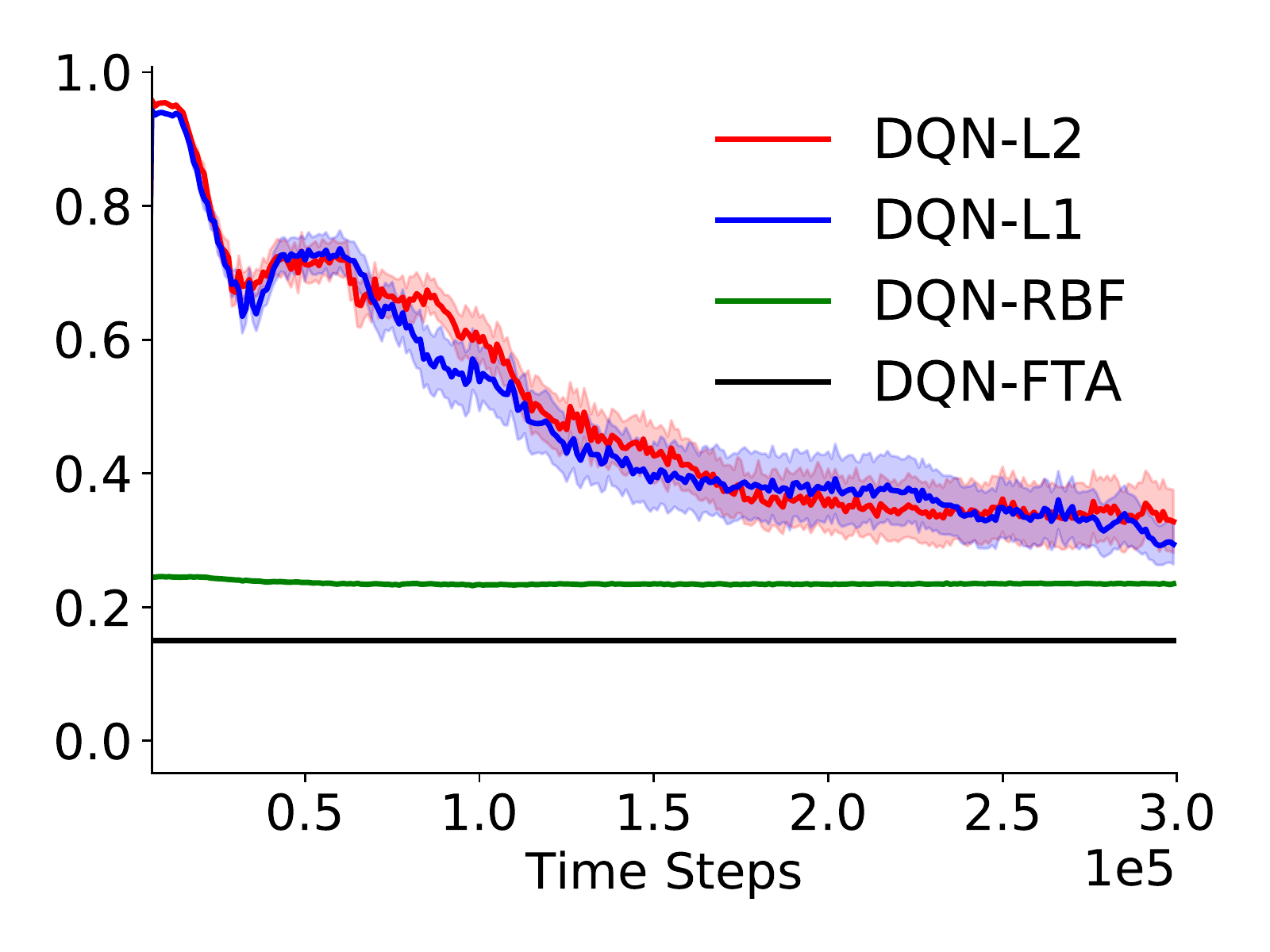}}
	\subfigure[LunarLander-v2]{
		\includegraphics[width=\figwidthfour]{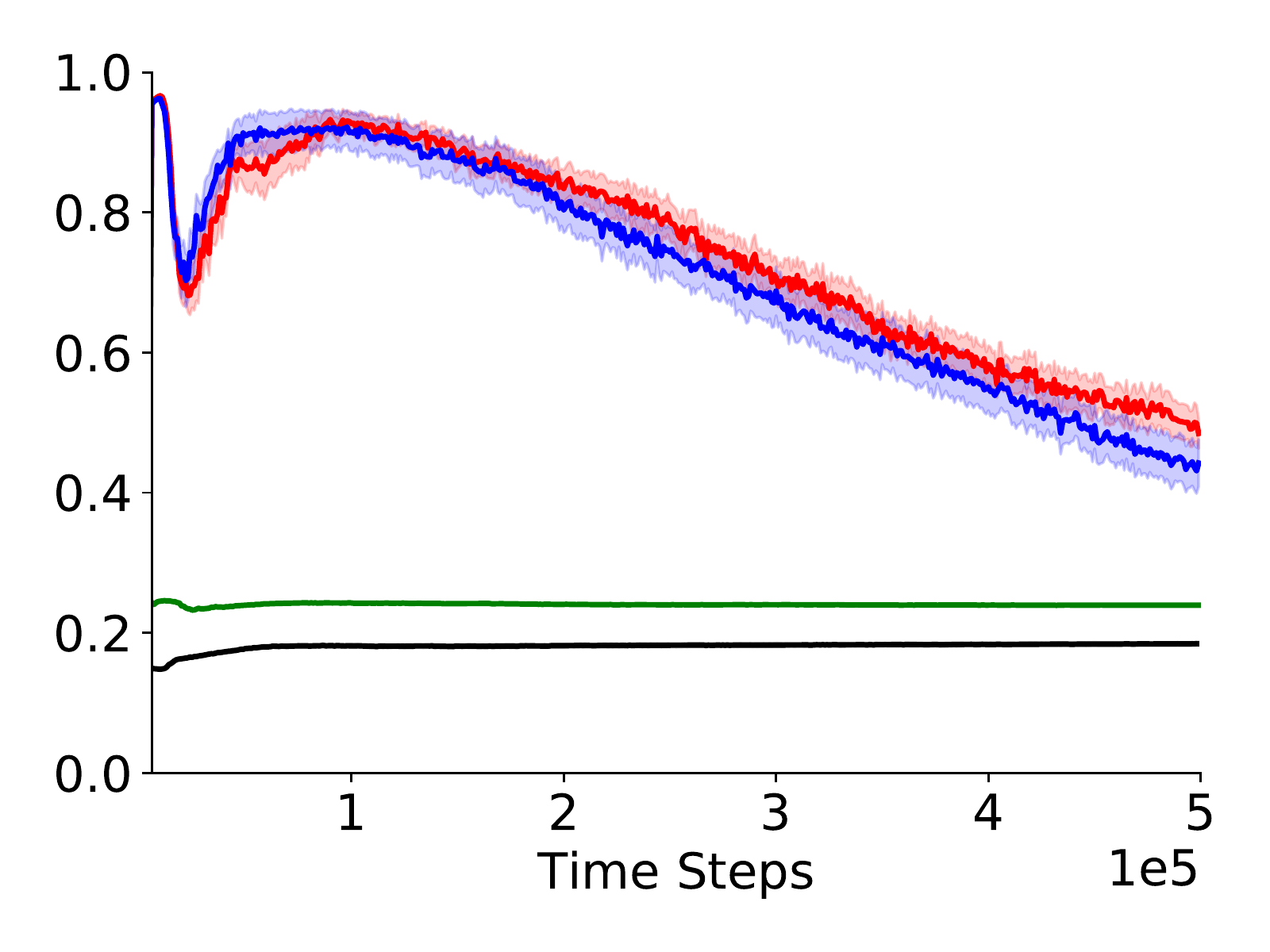}}
	\caption{
		Instance sparsity v.s. number of time steps on MountainCar, CartPole, Acrobot, LunarLander. The results are averaged over $20$ random seeds and the shade indicates standard error. 
	}\label{fig:instance-sparsity-lc}
\end{figure*}

\begin{figure*}[!thpb]
	\centering
	\subfigure[Acrobot-v1]{
		\includegraphics[width=\figwidthfour]{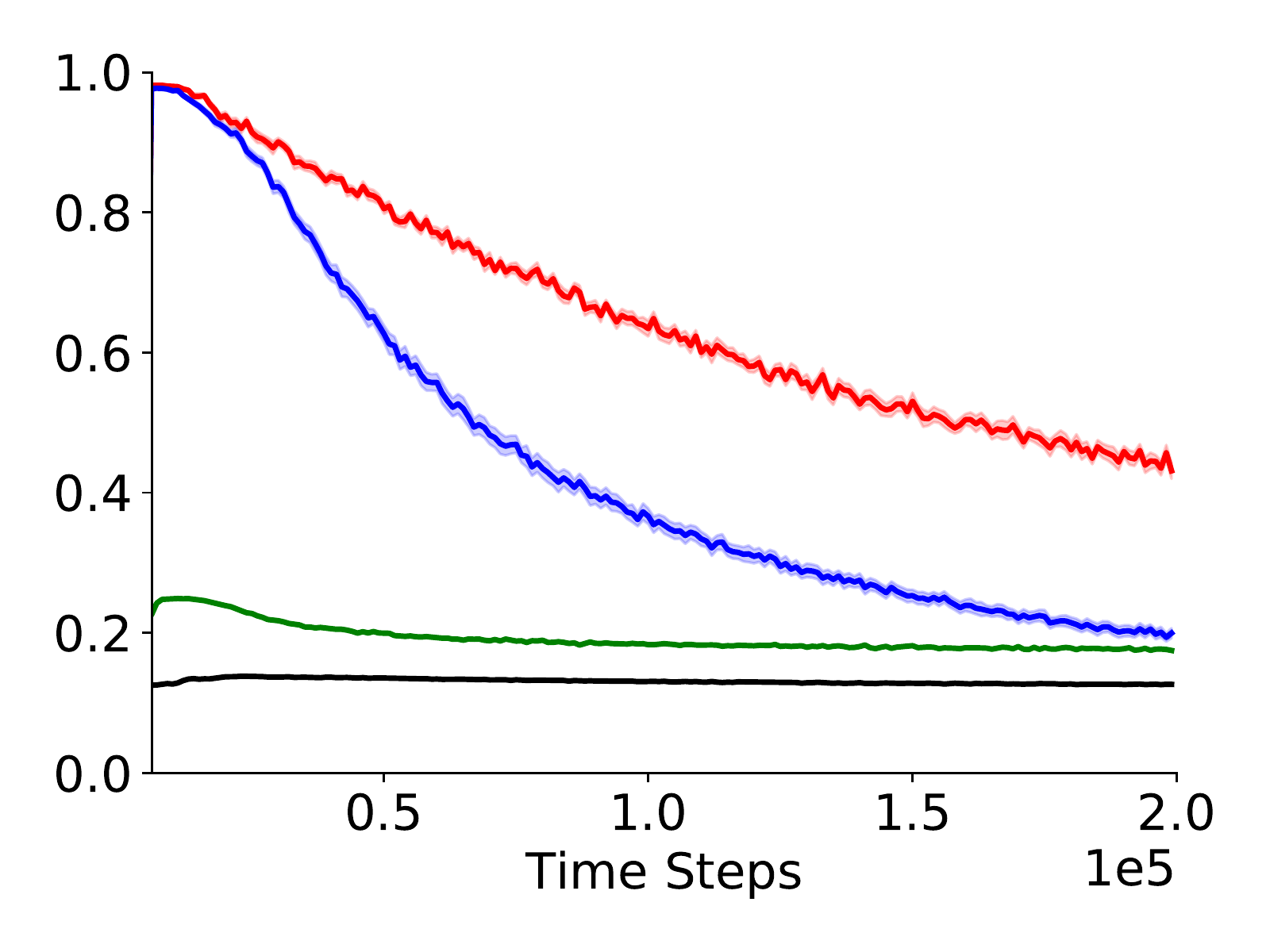}}
	\subfigure[MountainCar-v0]{
		\includegraphics[width=\figwidthfour]{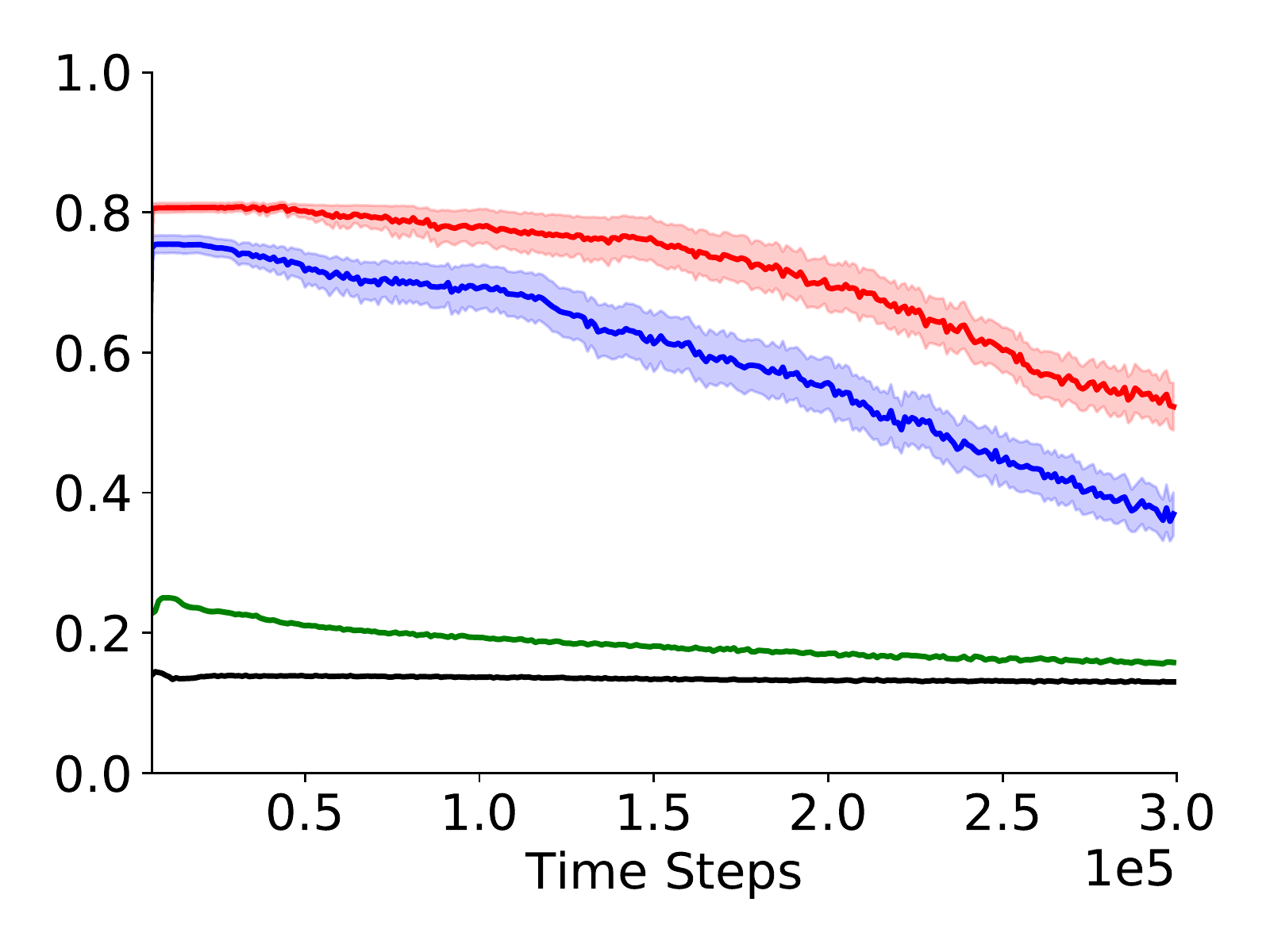}}
	\subfigure[CartPole-v1]{
		\includegraphics[width=\figwidthfour]{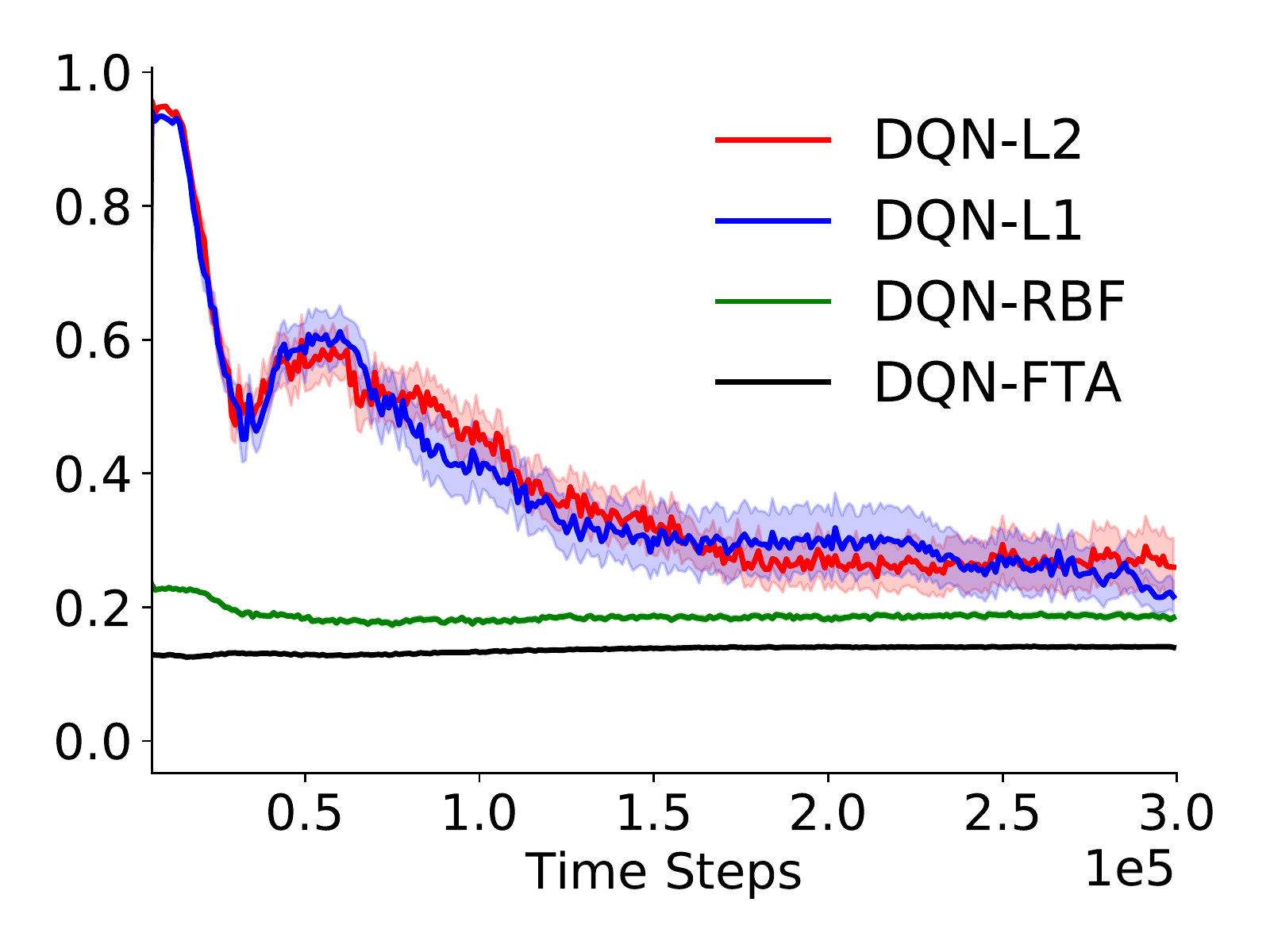}}
	\subfigure[LunarLander-v2]{
		\includegraphics[width=\figwidthfour]{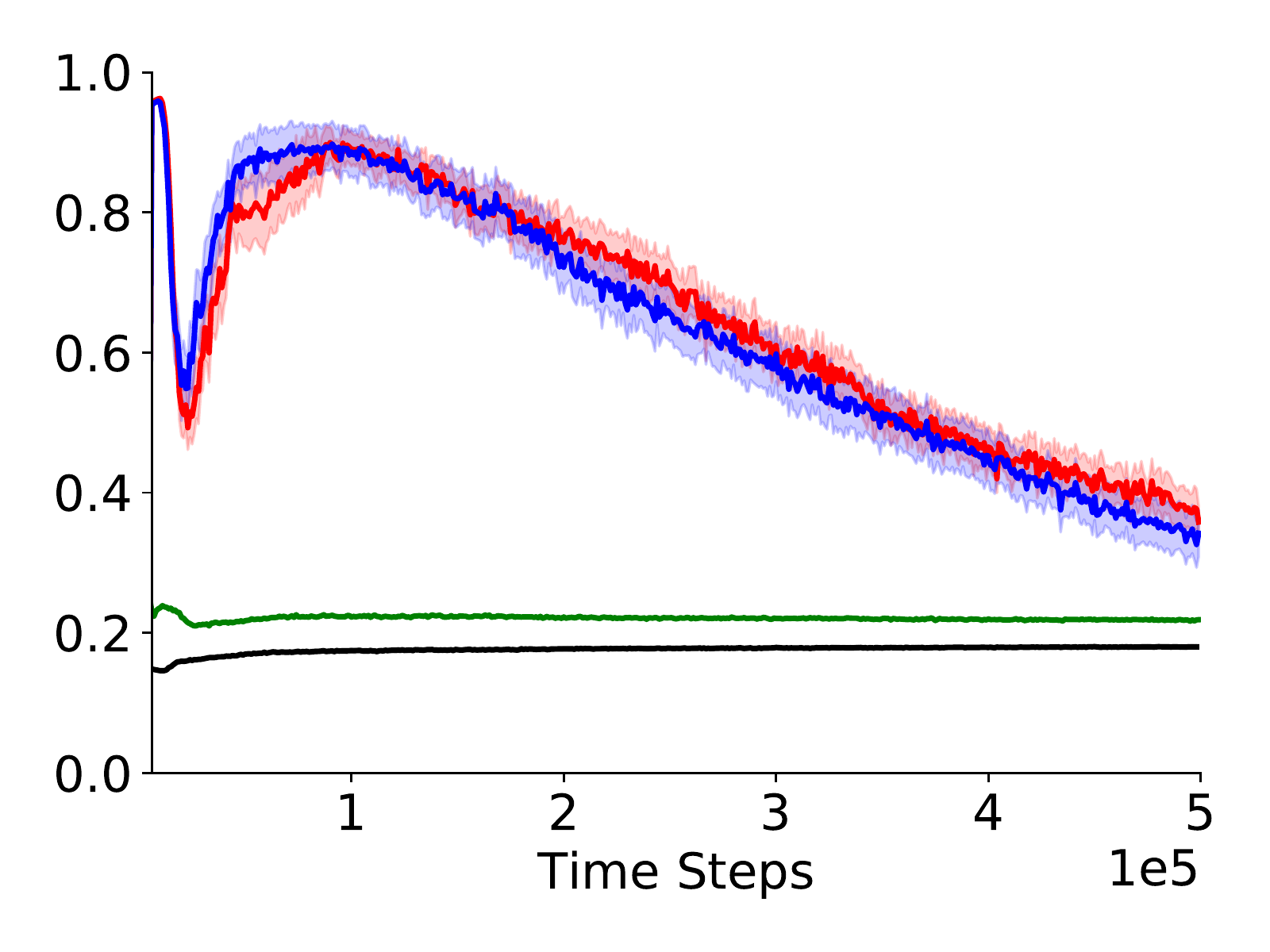}}
	\caption{
		Overlap sparsity (number of simultaneously activated entries in the sparse feature vectors) v.s. number of time steps by averaging over $20$ random seeds and the shade indicates standard error. 
	}\label{fig:overlap-sparsity-lc}
\end{figure*}

\subsubsection{Comparison with TC-NN}\label{sec:compare-tcnn}

We include the figure of comparing with TC-NN from the work by~\cite{sina2020geometric}, which use a regular tile coding to process the input before feeding into the neural network. It should be noted that this method itself is not a sparse representation learning technique, and requires the original observation space to be scaled within $[0, 1]$. Figure~\ref{fig:compare-tcnn} shows that TC-NN does indeed improve performance for DQN, but not compared to FTA. DQN-FTA both learns faster and reaches a better, more stable solution. In our TC-NN implementation, as a correspondent to FTA, we use binning operation to turn each raw input variable to $20$ binary variables through binning operation. We use the same learning rate $0.0001$ as other baselines. The result is generated by using FTA with a single tiling: $[l,u] = [-20,20]$, $\delta=\eta=2.0$, and hence $\cvec = \{-20, -18, -16, ..., 18\}$, $k=40/2=20$. 

\begin{figure*}[t]
	\centering
	\includegraphics[width=\figwidthtwo]{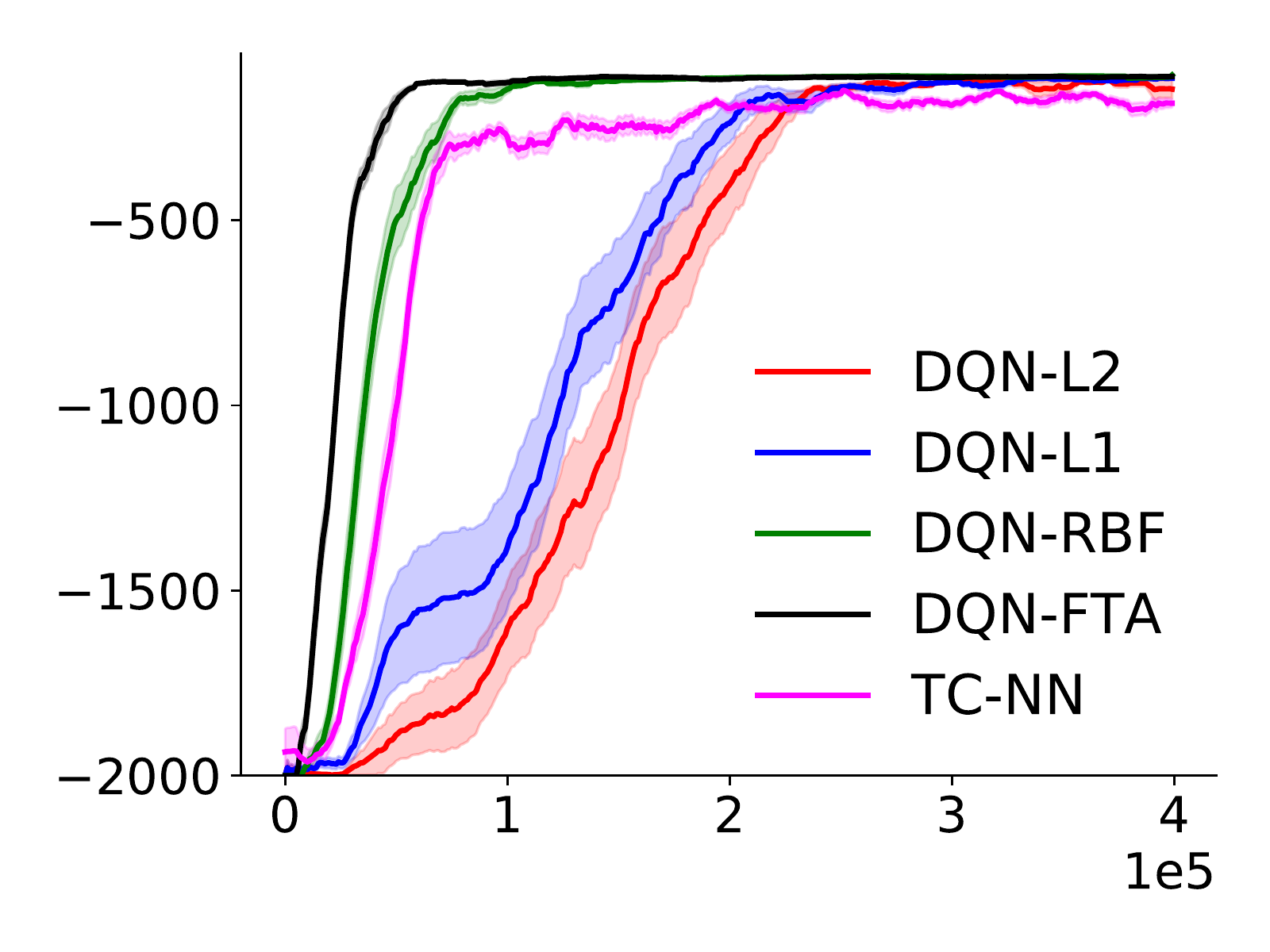}
	\caption{
		DQN-FTA compares with TCNN. Evaluation learning curves are averaged over $20$ random seeds and the shade indicates standard error. All variants are trained without target networks. 
	}\label{fig:compare-tcnn}
\end{figure*}

\subsubsection{Insensitivity to out of boundary loss}\label{sec:sensi-out-of-boundary}

We demonstrate the effect of using an out of boundary loss as discussed in~Section~\ref{Sec:lta-more-discussion}. This is supplementary to our experiments in the main body, where we did not use an out of boundary loss, i.e. the regularization weight is $0$. To highlight the issue of out of boundary, we intentionally use a tiling vector $\cvec$ with a small range $[-1, 1]$ with $20$ tiles, i.e. $\delta = 2/20=0.1$ and set the sparsity control parameter $\eta = \delta=0.1$. In Figure~\ref{fig:discrete-control-outofbound-reg}, one can see that our algorithm typically does not need such a regularization even if we use a small tiling. 

\begin{figure*}[t]
	\centering
	\subfigure[Acrobot-v1]{
		\includegraphics[width=\figwidthfour]{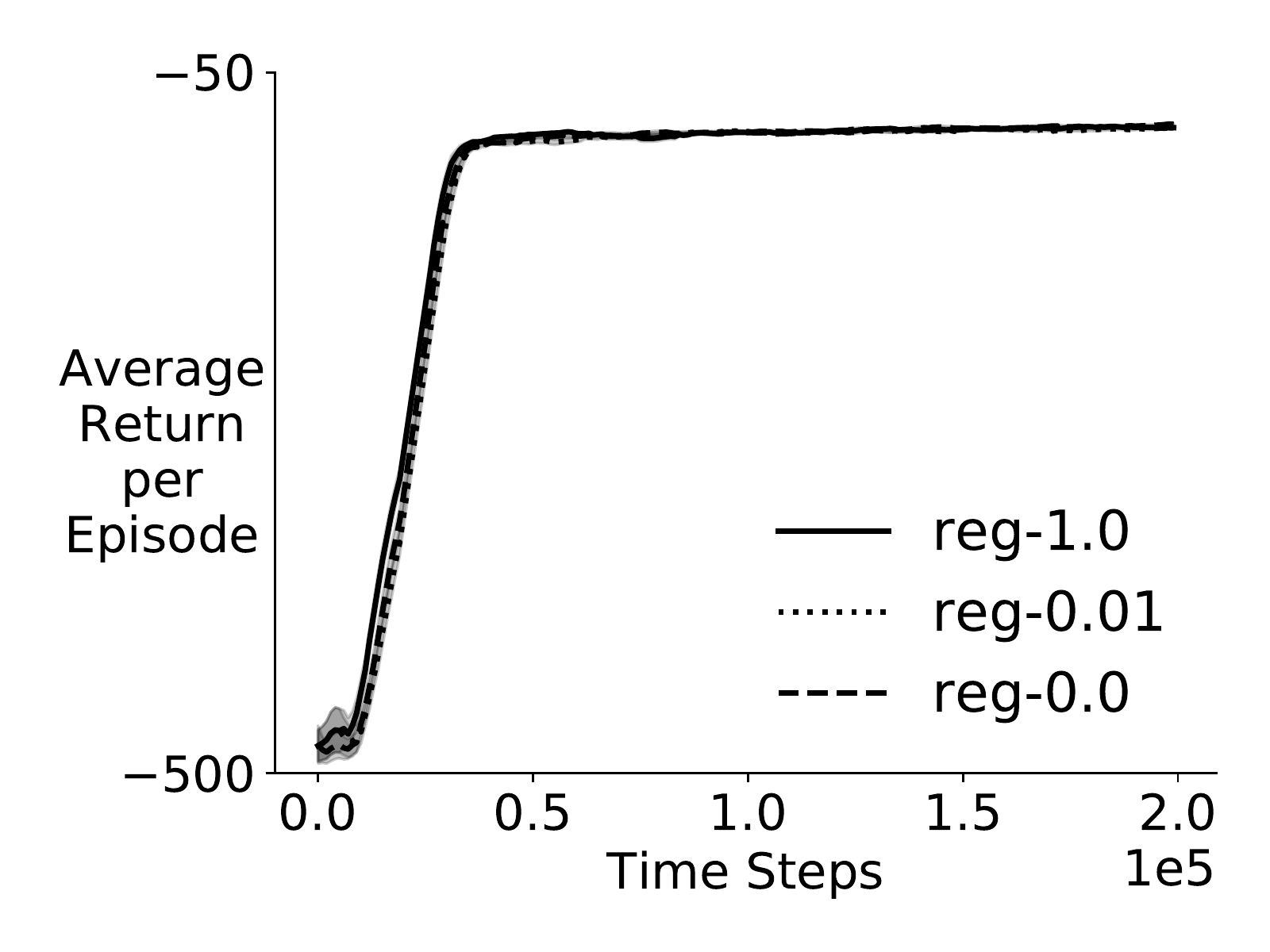}}
	\subfigure[MountainCar-v0]{
		\includegraphics[width=\figwidthfour]{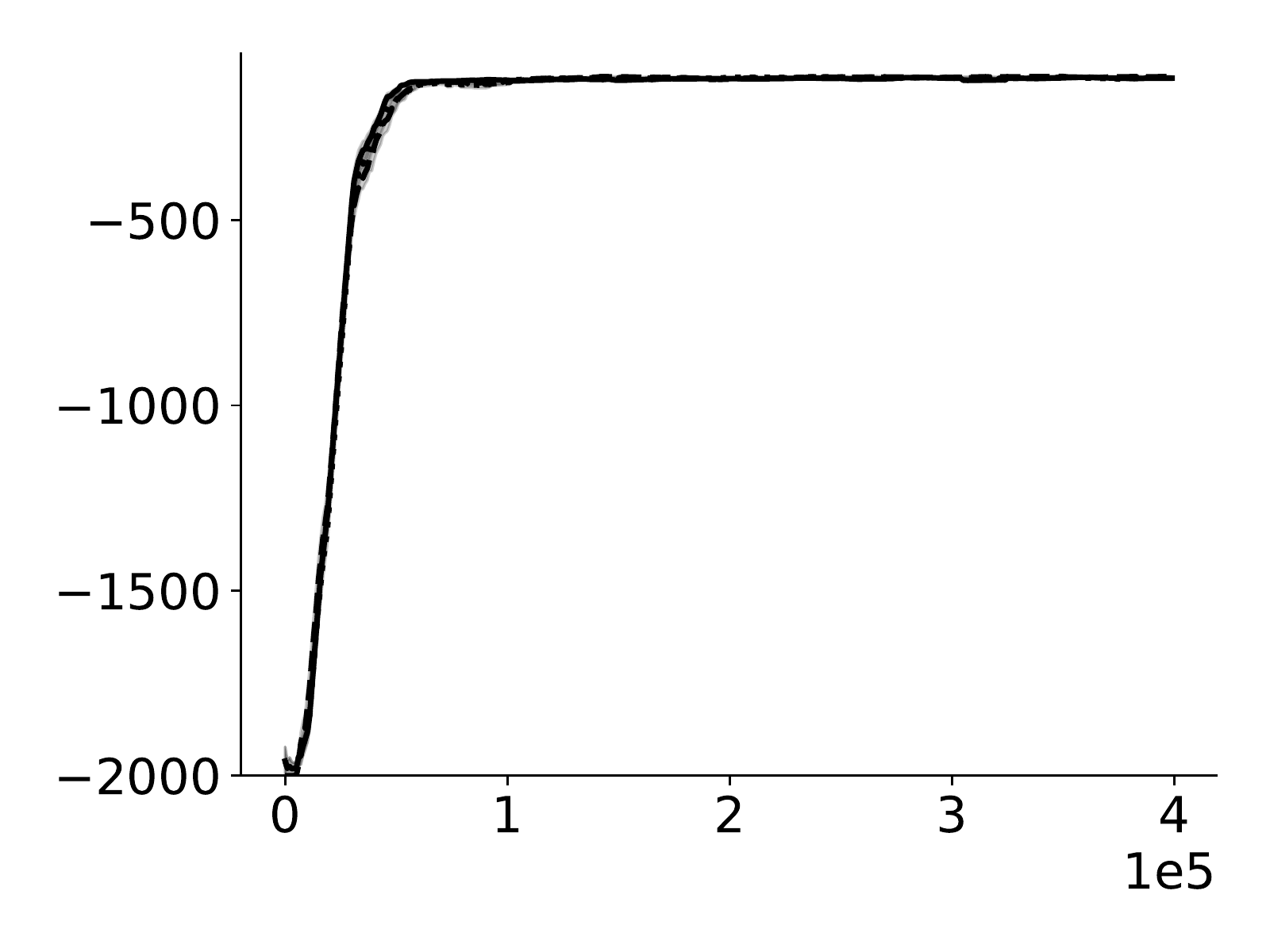}}
	\subfigure[CartPole-v1]{
		\includegraphics[width=\figwidthfour]{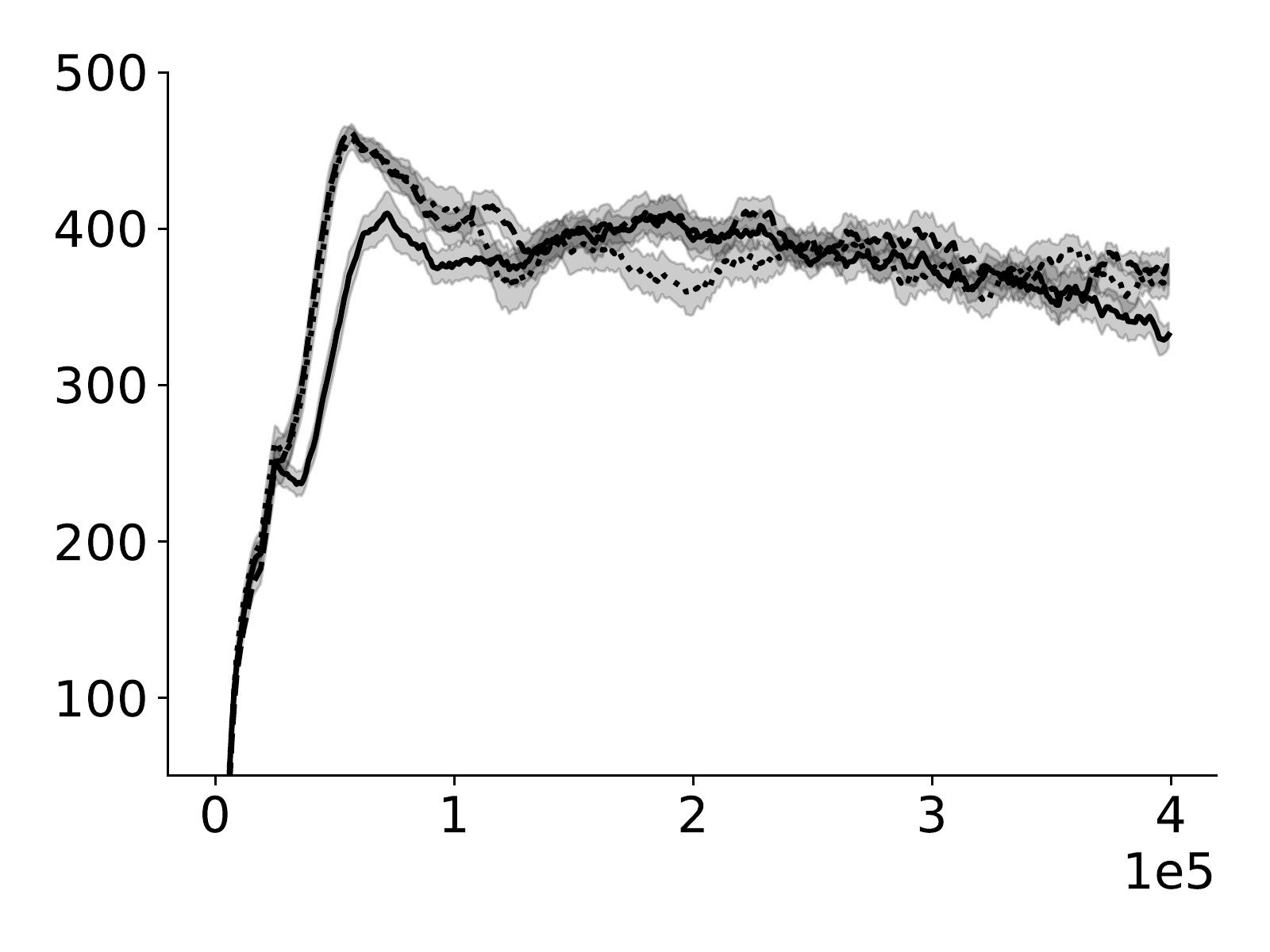}}
	\subfigure[LunarLander-v2]{
		\includegraphics[width=\figwidthfour]{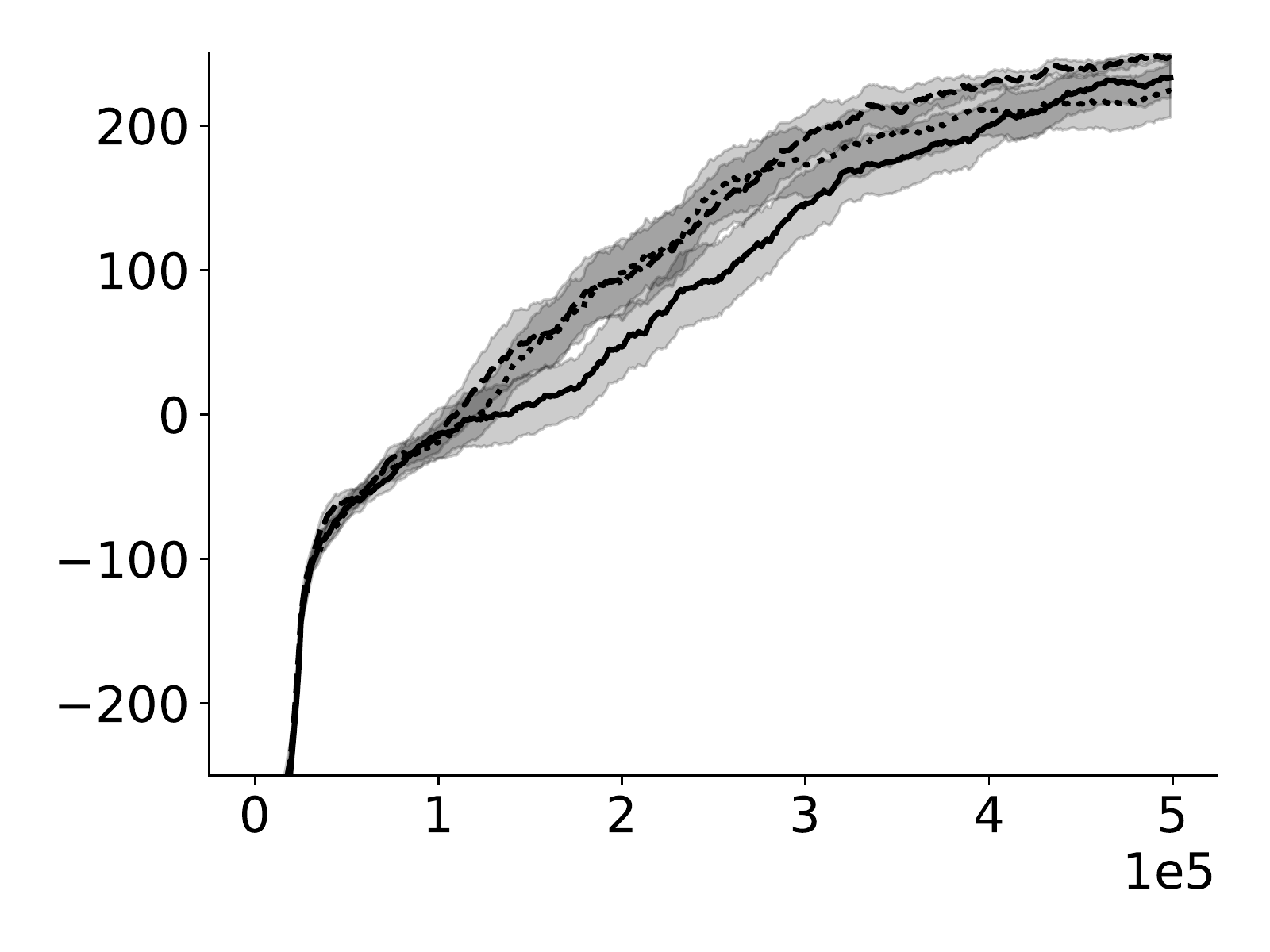}}
	\caption{
		DQN-FTA trained with linear activation function with different regularization weight (-reg-$1.0$ means regularization is set as $1.0$). One can see that our FTA is \emph{not sensitive to this choice} as those learning curves are almost overlapping with each other. Evaluation learning curves are averaged over $10$ random seeds and the shade indicates standard error. All variants are trained without target networks. 
	}\label{fig:discrete-control-outofbound-reg}
\end{figure*}

\subsubsection{FTA's Strong Performance with different Tile Widths and Sparsity Control Parameters}\label{sec:experiment-sensitivity}

The purpose of this section is to show that given an appropriate bound of the tiling vector $\cvec$, our approach is insensitive to tile width $\delta$ and sparsity control parameter $\eta$. Since we already showed in Section~\ref{sec:rlexp} that DQN-FTA works well on LunarLander with tiling vector bound $[-1,1]$, we fix using this setting in this section. For both $\eta$ and $\delta$, we sweep over $0.8/2^i, i\in\{0, 1, 2, ..., 8\}$. Note that this range is extreme ($\delta = 0.8$ gives two bins, and $\eta = 0.8$ significantly increases overlap); we do so to see a clear pattern. 

Figure~\ref{fig:lunar-eta-delta-sensi} shows the early learning performance of all possible $9\times 9 = 81$ combinations of $\eta, \delta$. We report the average episodic return (rounded to integers) within the first $300$k time steps, for each combination. The results are averaged over $5$ runs. 
The pattern is as expected. 1) The algorithms performs best with a reasonably small $\delta$ and $\eta$. 2) For extremely small $\eta$ and $\delta$, performance is poor, as expected because the gradient is very small and so learning is slow. 3) Given a fixed tile width $\delta$, the performance degrades as $\eta$ becomes smaller than $\delta$, since again the activation has many regions with zero derivative. 4) If $\eta$ gets too large, the sparsity level increases and again performance degrades, though performance remained quite good for a broad range between $0.025$ and $0.2$. 5) If $\delta$ gets large (big bins), performance degrades more so that with larger $\eta$, which matches the intuition that $\eta$ provides overlap rather than just increasing bin size and so losing precision. 
\begin{figure}[!htbp]
	\centering
	\includegraphics[width=0.6\textwidth]{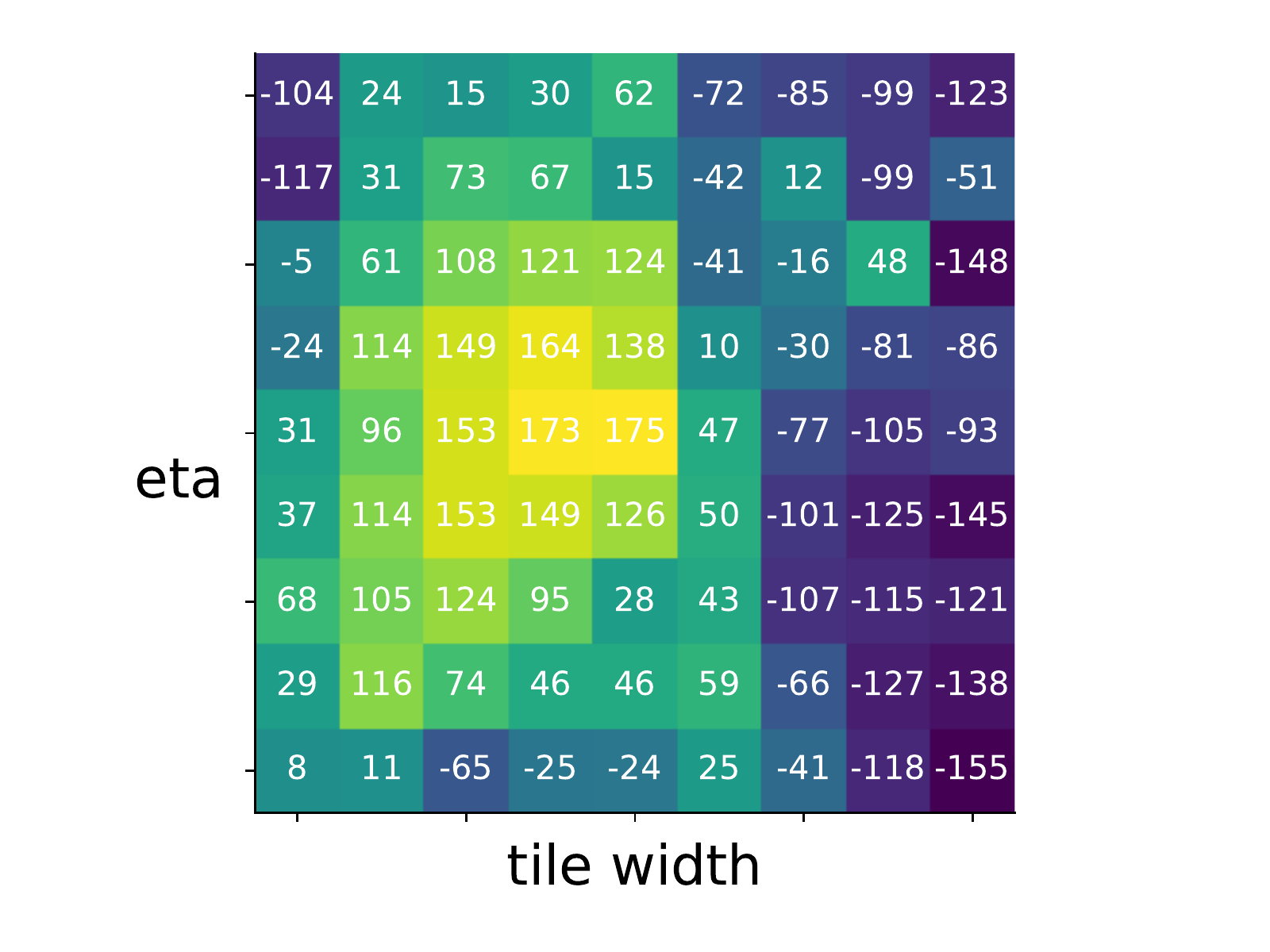}
	\caption{
		Sensitivity of $\eta, \delta$ on LunarLander-v2.
	}\label{fig:lunar-eta-delta-sensi}
\end{figure}

\subsubsection{Empirical results when using FTA to both hidden layers}\label{sec:lta-both-rlexp}

As an activation function, FTA can be naturally applied in any hidden layer in a NN. As a convention, we typically use the same activation functions in a fully connected NN. In this section, we present such results on the reinforcement learning domains, as a supplement to the results in Section~\ref{sec:rlexp}. All setting is the same as those in Section~\ref{sec:rlexp}, except that we apply FTA to both hidden layers in DQN-FTA and DDPG-FTA. It should be noted that, in this case, DQN-FTA and DDPG-FTA have the same number of training parameters as their -Large correspondents respectively. Figure~\ref{fig:discrete-control-fulllta} shows the results on the discrete domains, and Figure~\ref{fig:continuous-control-fulllta} shows the results on the continuous control domains. It can be seen that, the FTA versions are better than ReLu versions in the case of not using a target network. This further validates the utility of our FTA in dealing with nonstationary problems.

\begin{figure*}[t]
	\vspace{-0.5cm}
	\centering
	\subfigure[Acrobot (v1)]{
		\includegraphics[width=\figwidthfour]{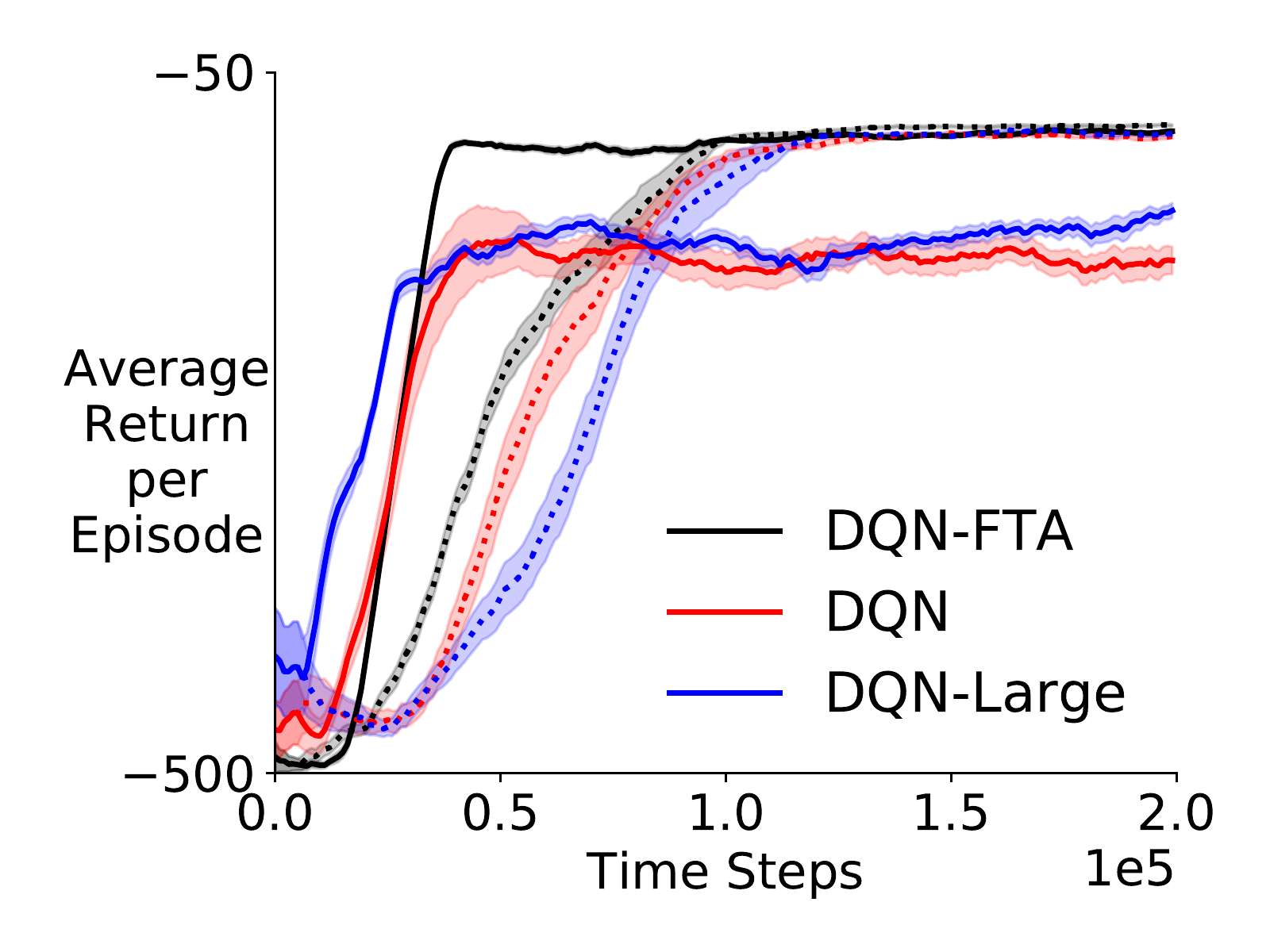}}
	\subfigure[MountainCar (v0)]{
		\includegraphics[width=\figwidthfour]{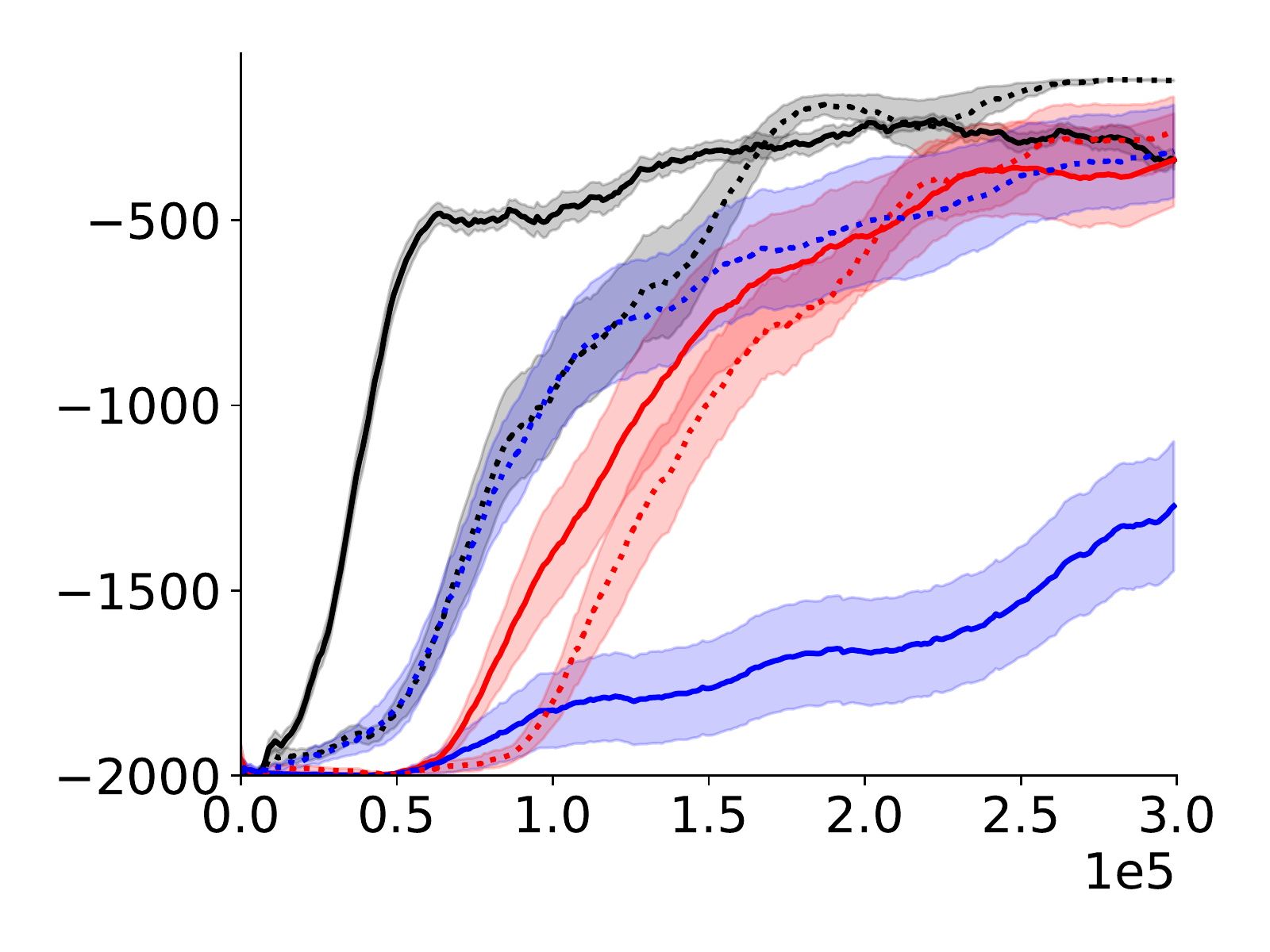}}
	\subfigure[CartPole (v1)]{
		\includegraphics[width=\figwidthfour]{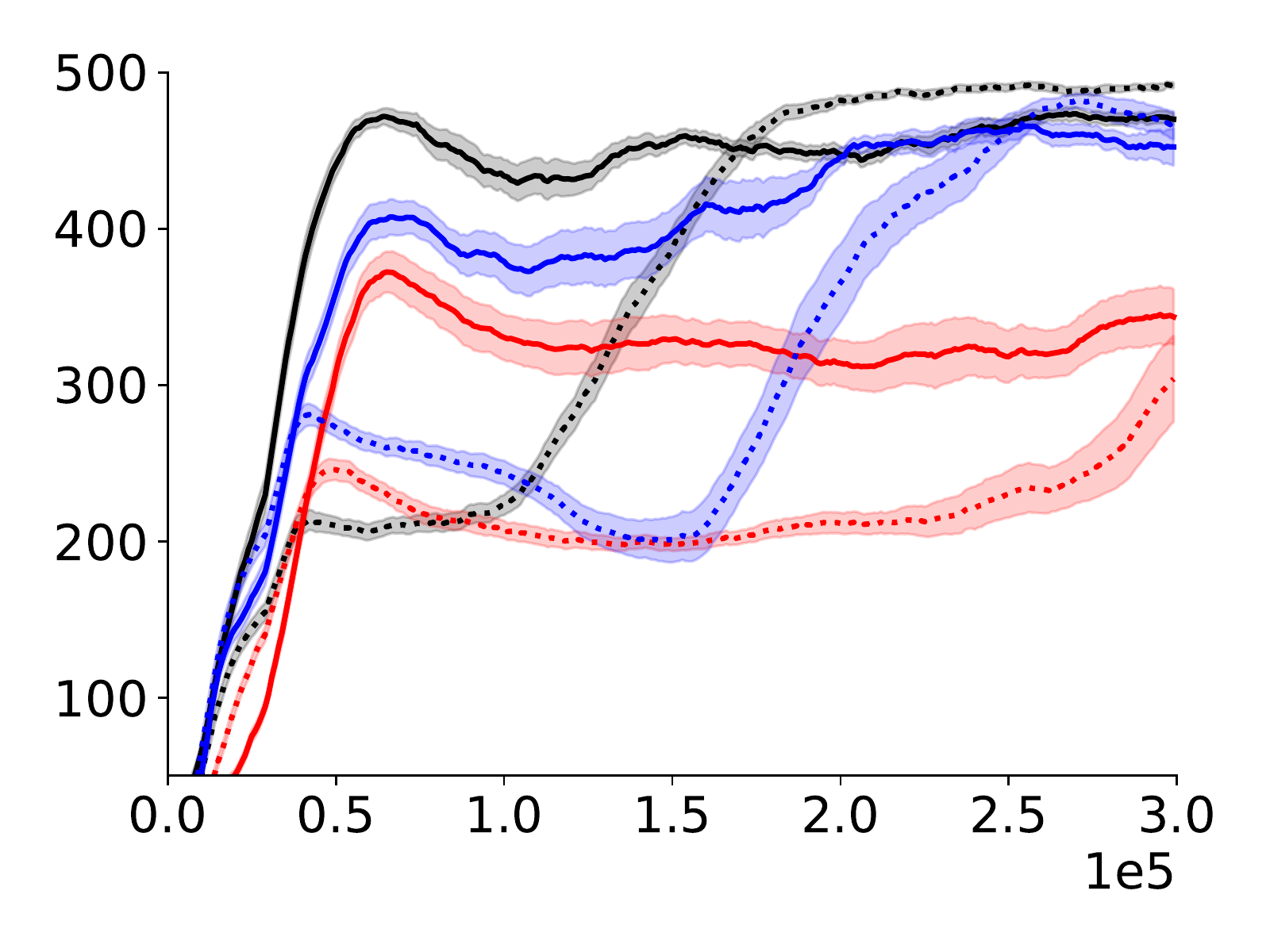}}
	\subfigure[LunarLander (v2)]{
		\includegraphics[width=\figwidthfour]{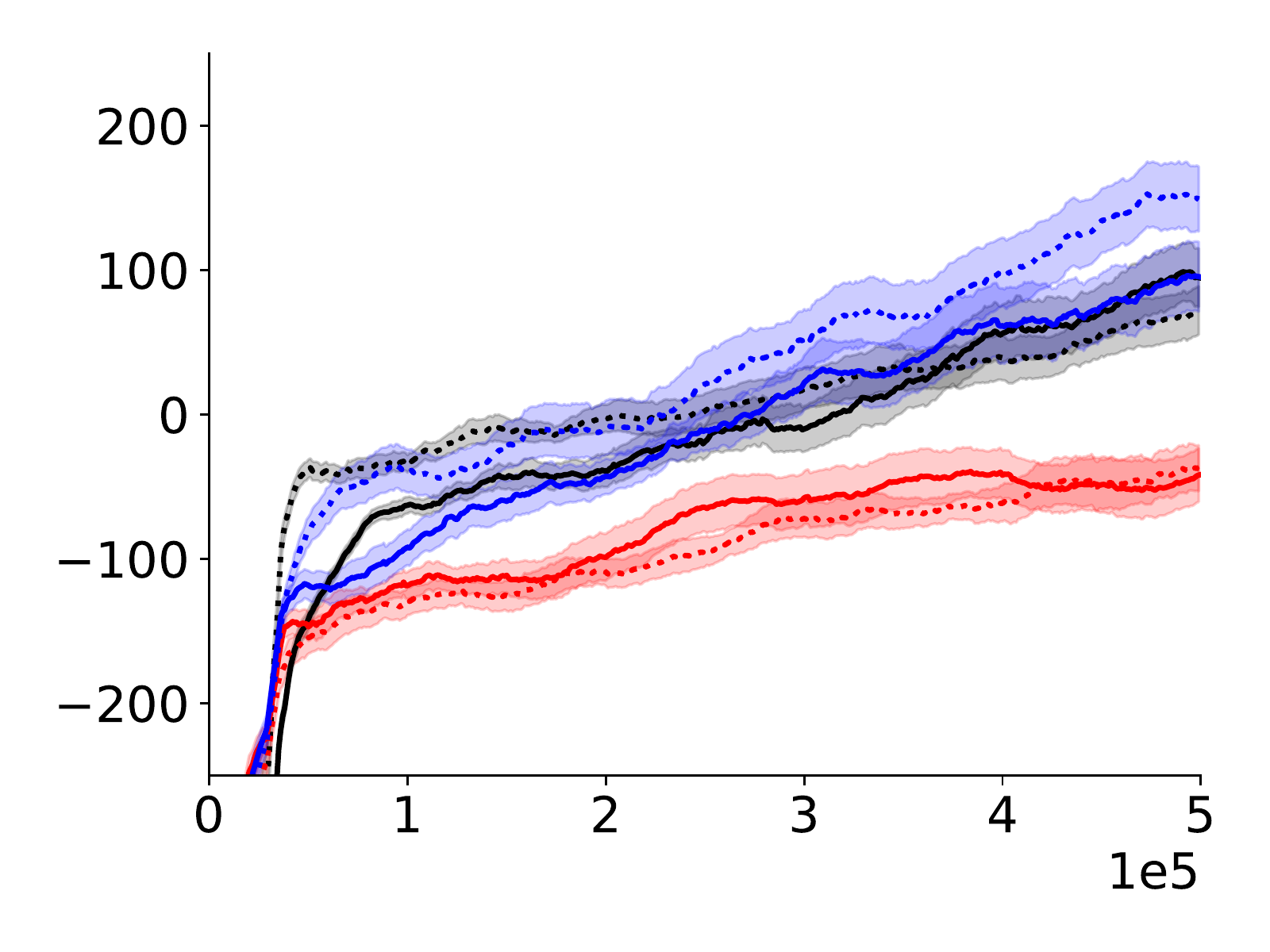}}
	\caption{
		\small
		Evaluation learning curves of of \textcolor{black}{\textbf{DQN-FTA(black)}}, \textcolor{red}{\textbf{DQN(red)}}, and \textcolor{blue}{\textbf{DQN-Large(blue)}}, showing episodic return versus environment time steps. The \textbf{dotted} line indicates  algorithms trained \emph{with} target networks. The results are averaged over $20$ runs and the shading indicates standard error. The learning curve is smoothed over a window of size $10$ before averaging across runs. 
	}\label{fig:discrete-control-fulllta}
	\vspace{-0.3cm}
\end{figure*}

\begin{figure*}[!htbp]
	\centering
	\subfigure[InvertedPendu]{
		\includegraphics[width=\figwidthfive]{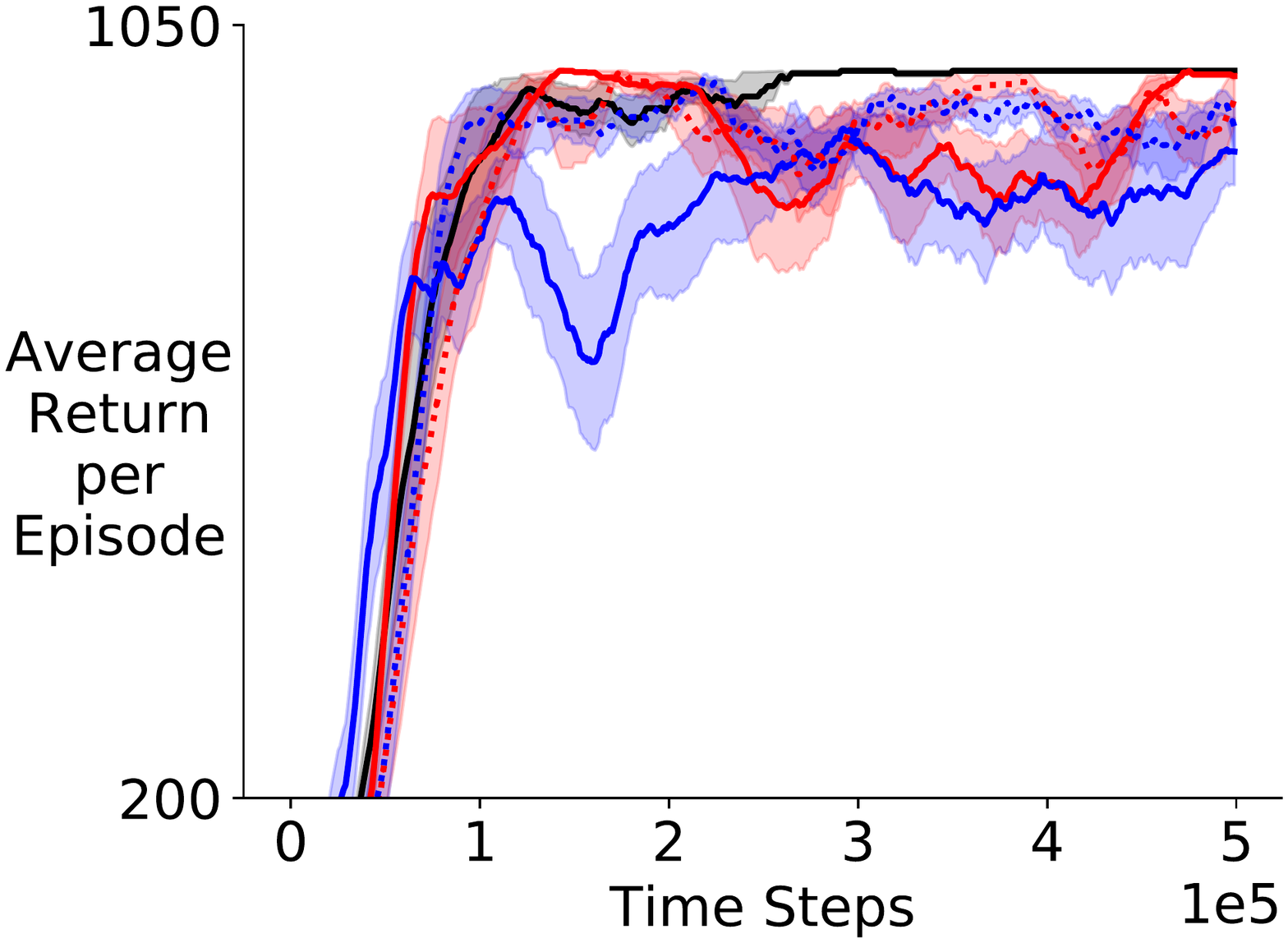}}
	\subfigure[Hopper]{
		\includegraphics[width=\figwidthfive]{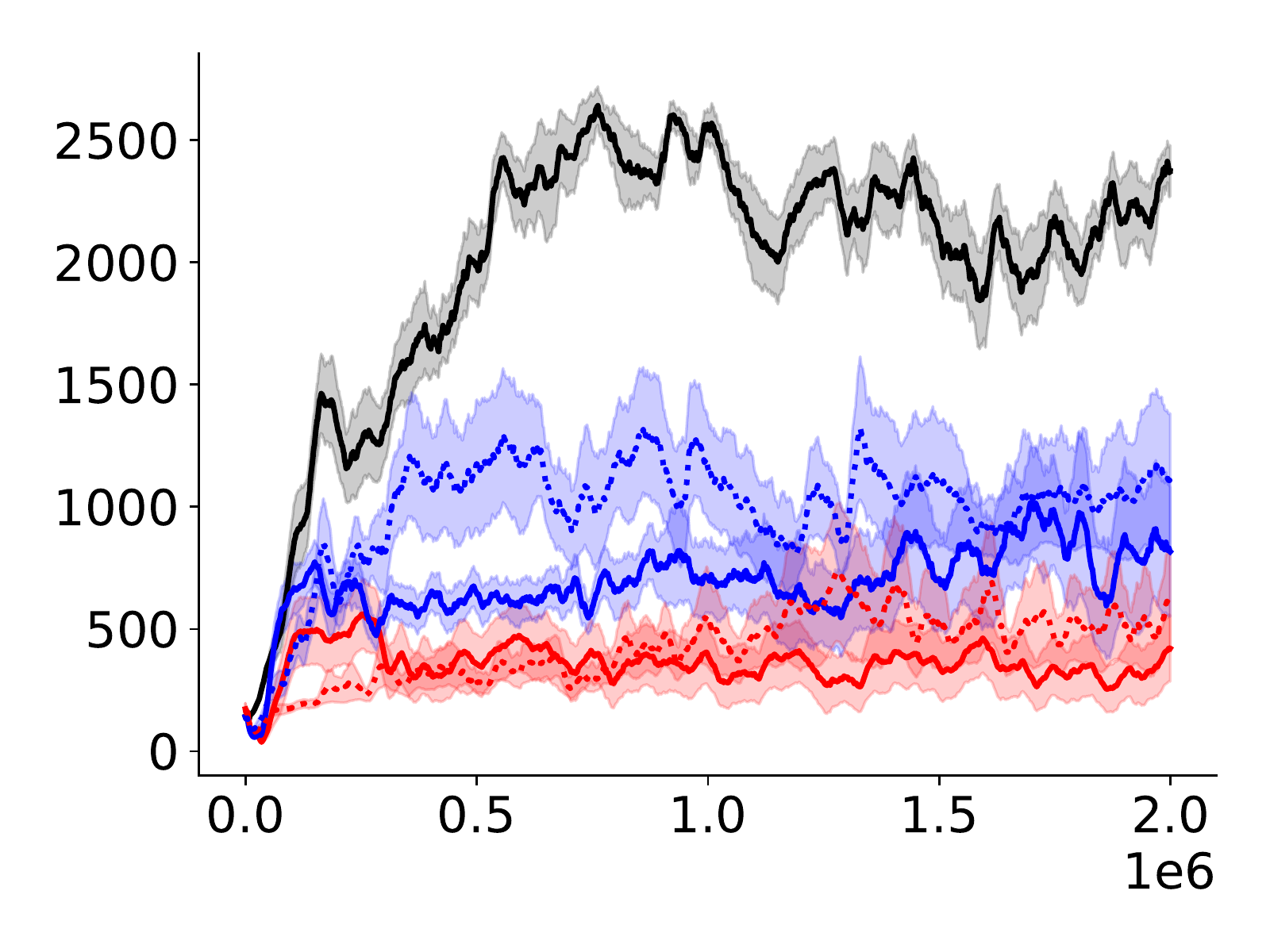}}
	\subfigure[Walker2d]{
		\includegraphics[width=\figwidthfive]{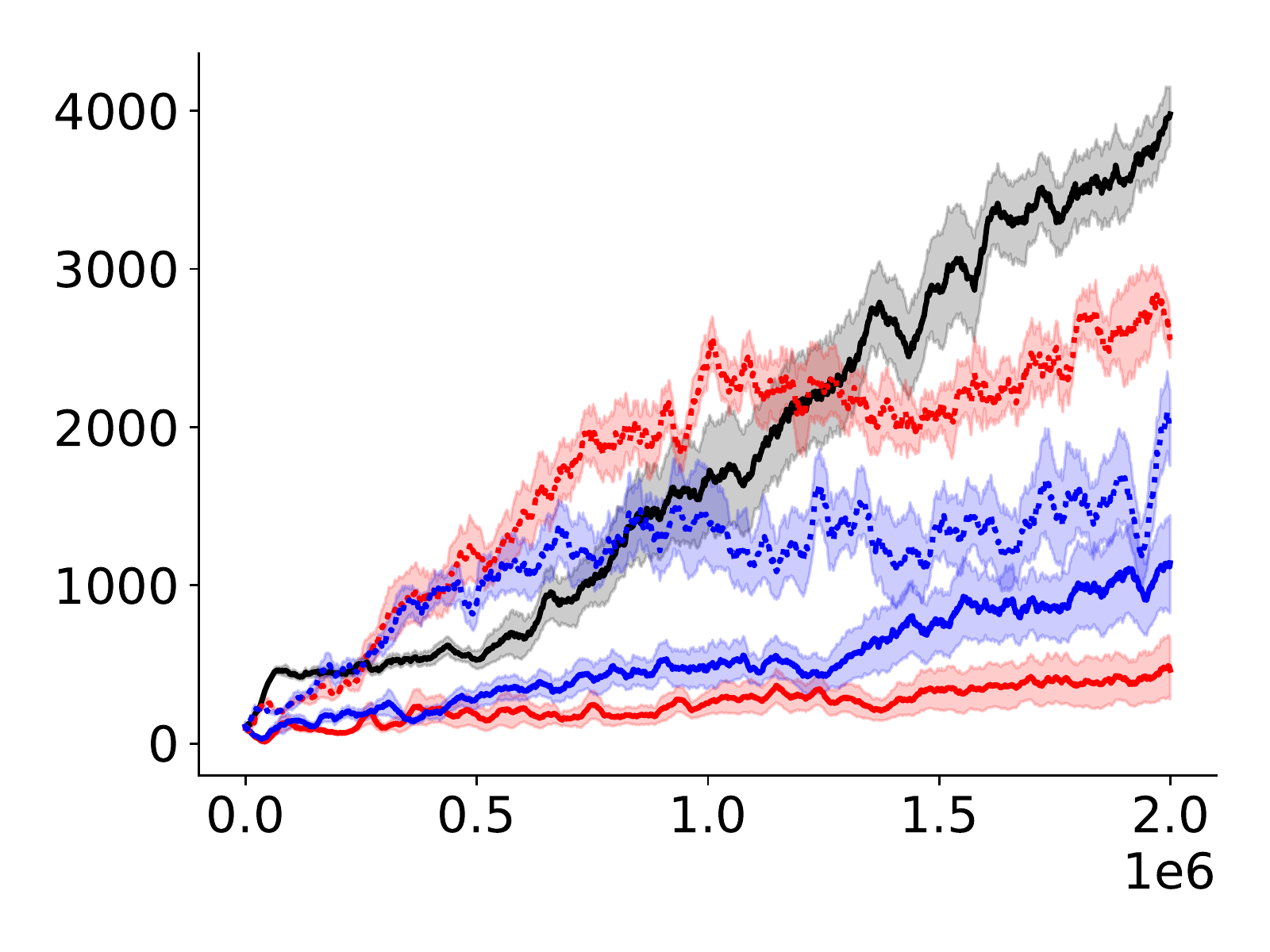}}
	\subfigure[InvertedDouble]{
		\includegraphics[width=\figwidthfive]{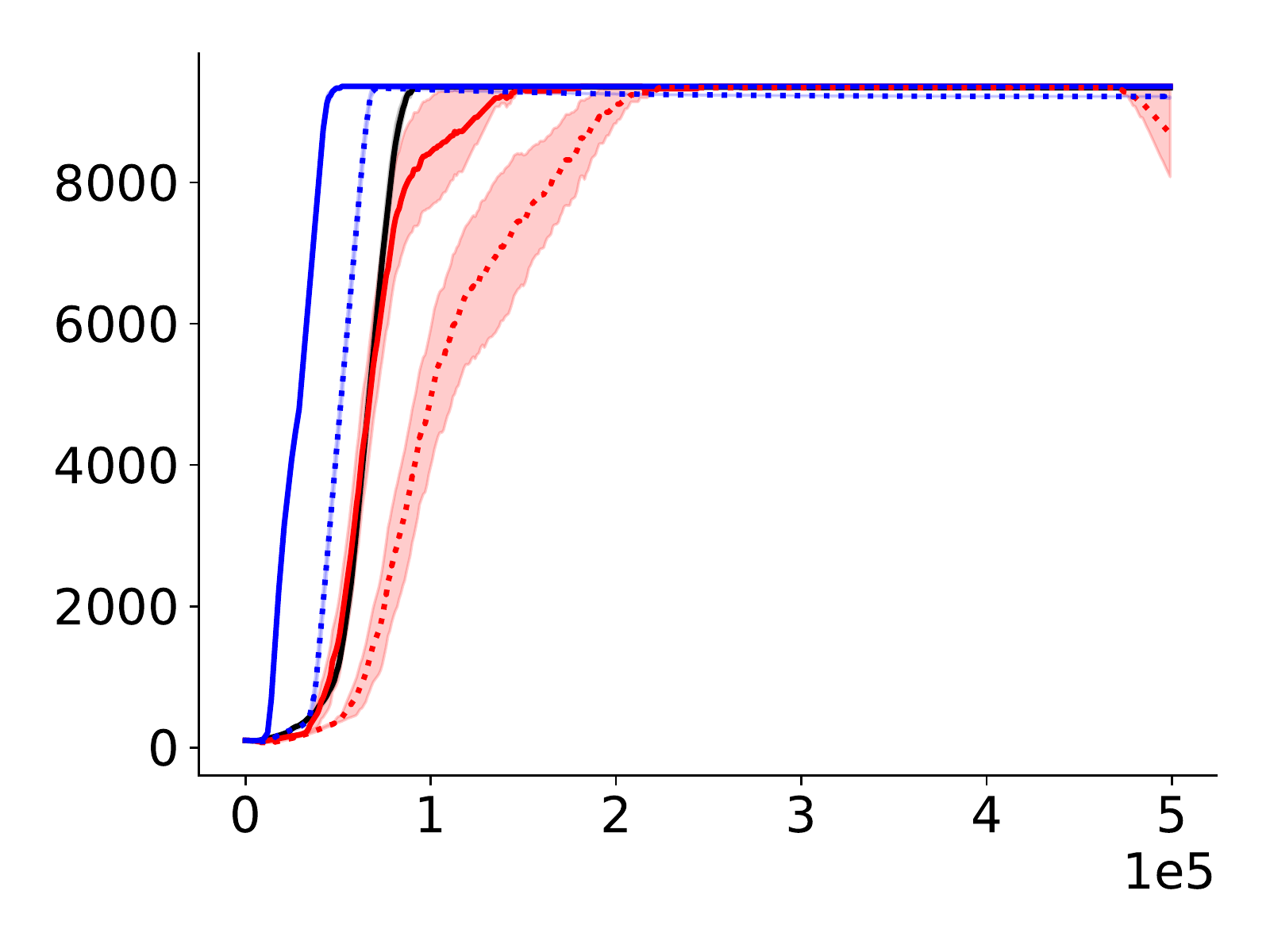}}
	\subfigure[Swimmer]{
		\includegraphics[width=\figwidthfive]{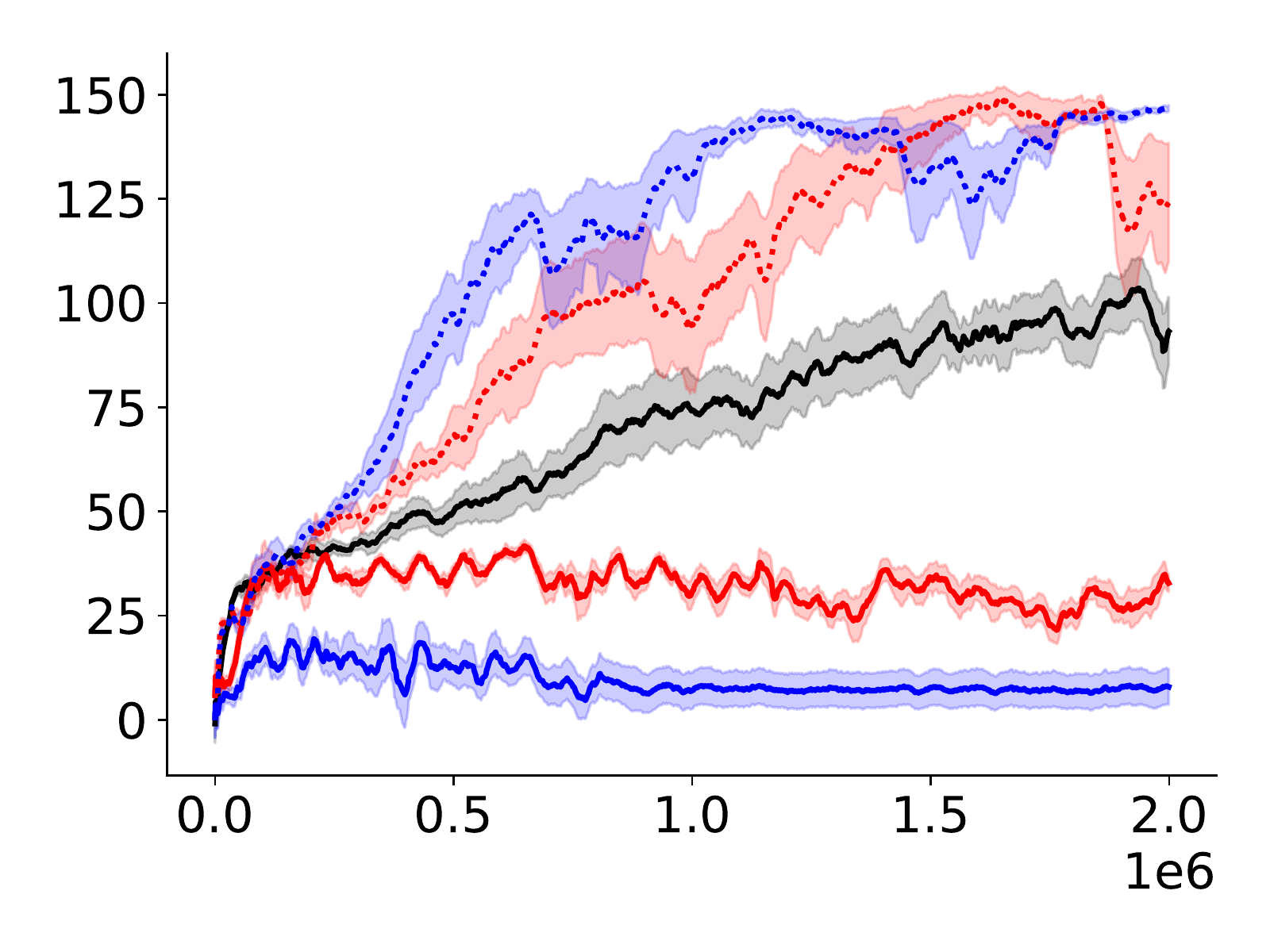}}
	\caption{
		\small
		Evaluation learning curves of \textcolor{black}{\textbf{DDPG-FTA(black)}}, \textcolor{red}{\textbf{DDPG(red)}}, and \textcolor{blue}{\textbf{DDPG-Large(blue)}} on Mujoco environments, averaged over $20$ runs with shading indicating standard error. All algorithms are trained without target networks. The learning curve is smoothed over a window of size $30$ before averaging across runs. 
	}\label{fig:continuous-control-fulllta}
\end{figure*}

\subsubsection{Gradient Interference Analysis}\label{sec:appendix-lta-gradient-interference}

We provide gradient interference analysis in this section. The results in this section are produced by using FTA with a single tiling: $[l,u] = [-20,20]$, $\delta=\eta=2.0$, and hence $\cvec = \{-20, -18, -16, ..., 18\}$, $k=40/2=20$. 

We consider three measures for gradient interference. Let $l_\theta$ be the DQN loss parameterized by $\theta$. Given two random samples (i.e. experiences) $X, X'$ from ER buffer, we estimate 

\begin{enumerate}
	\item m1: $\EE[\nabla_\theta l_\theta (X)^\top \nabla_\theta l_\theta (X')]$, this is to measure on average, whether the algorithm generalizes positively or interfere. 
	\item m2: $\EE[\nabla_\theta l_\theta (X)^\top \nabla_\theta l_\theta (X')]$ ONLY for those pairs of gradient vectors who have negative inner product. This is to check for those likely interfered gradient directions, how much they interfere with each other.
	\item m3: Within a minibatch of pairs of $(X, X')$, the proportion of negative gradient inner products (i.e. if $32$ out of $64$ pairs has negative inner products, then the proportion is $50\%$). This is to roughly check how likely it is to get conflict gradient directions by randomly drawing two experiences from ER buffer.
\end{enumerate}

It should be noted that we normalize the gradient vectors to unit length before computing inner product. For each point in the learning curve, we randomly draw $64$ samples from the ER buffer to estimate the interference. 

\begin{figure}[t]
	\centering
	\subfigure[m1]{
		\includegraphics[width=\figwidththree]{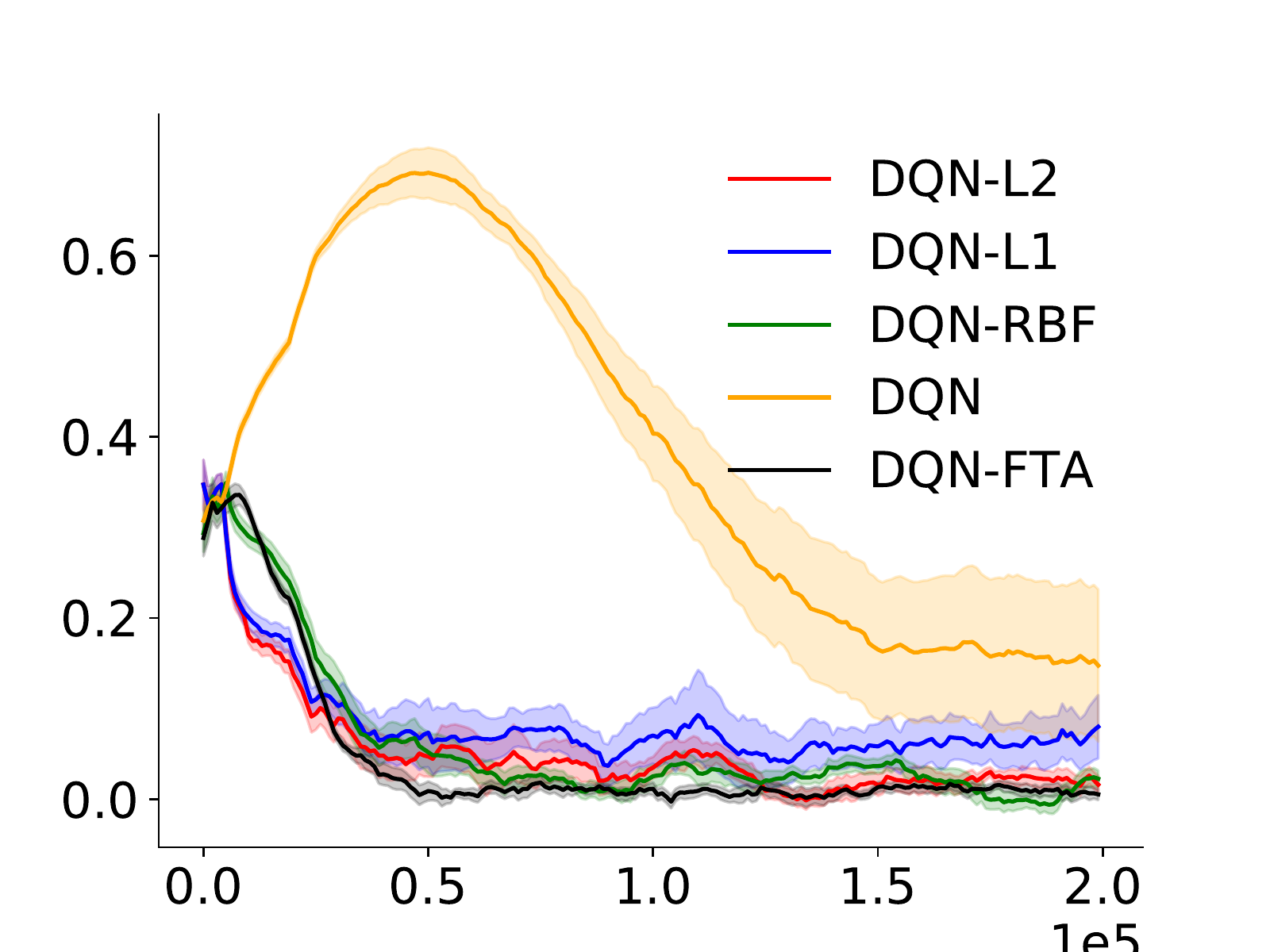}}
	\subfigure[m2]{
		\includegraphics[width=\figwidththree]{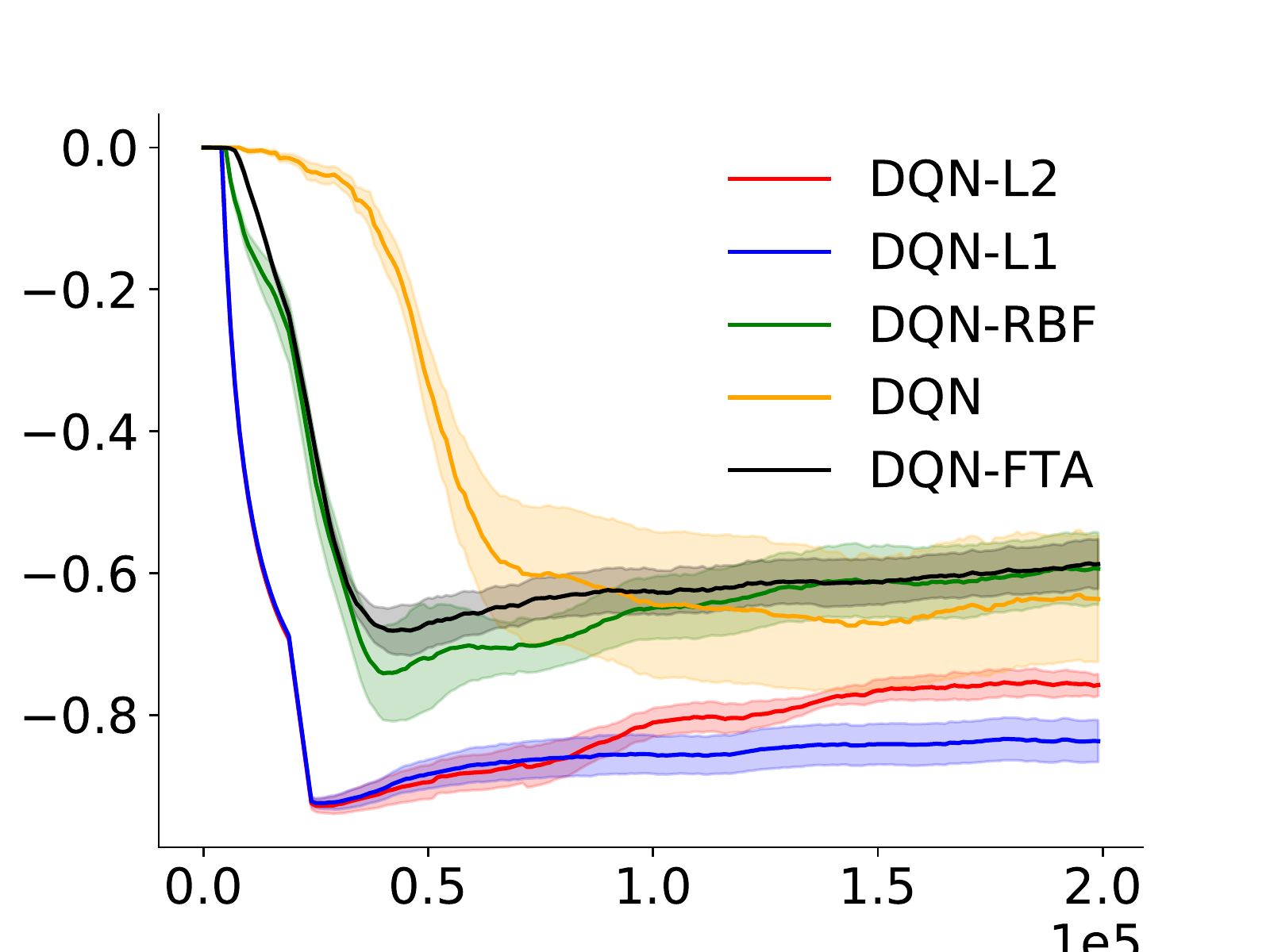}}
	\subfigure[m3]{
		\includegraphics[width=\figwidththree]{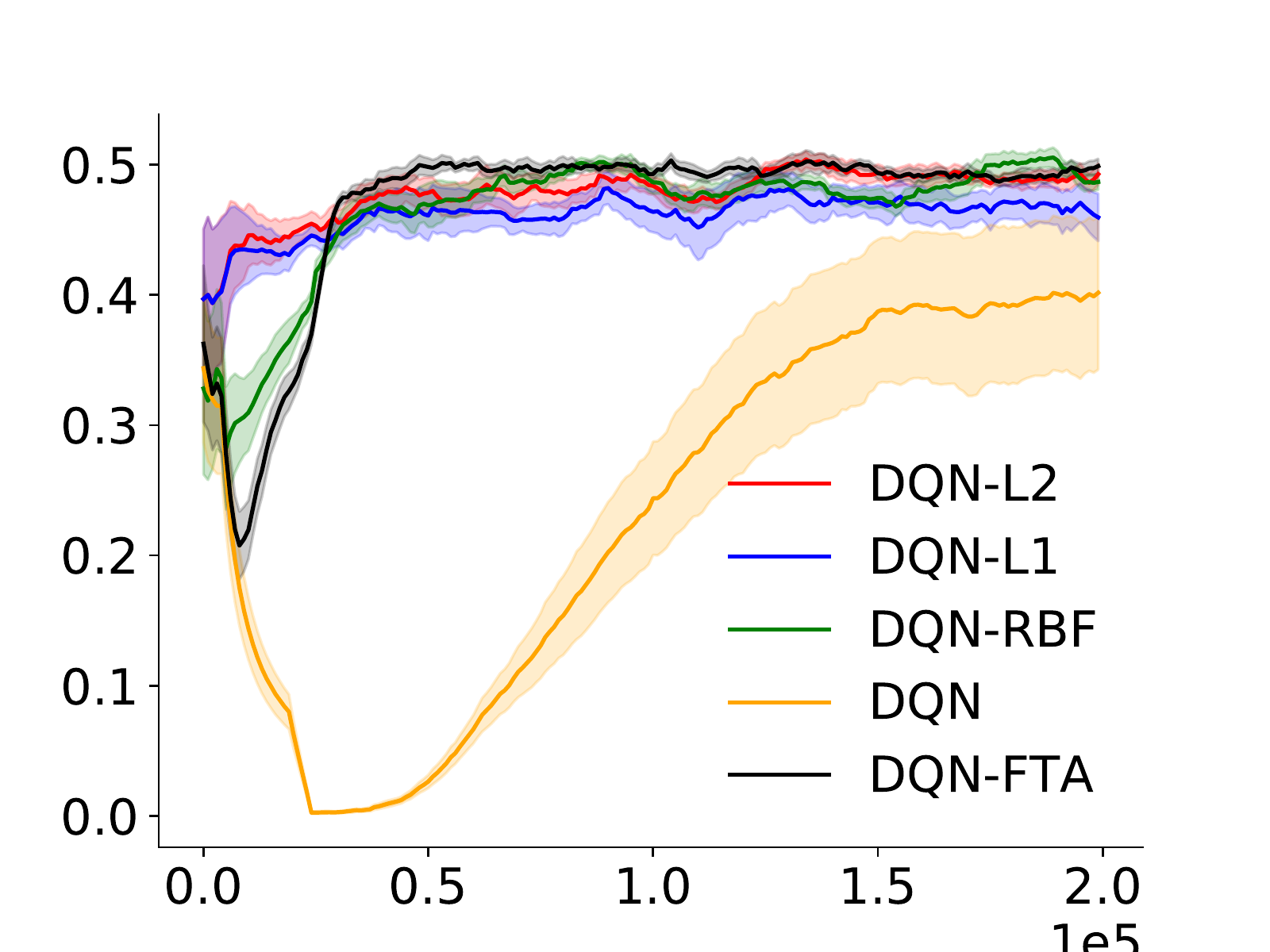}}
	\\
	\subfigure[m1 in 2nd]{
		\includegraphics[width=\figwidththree]{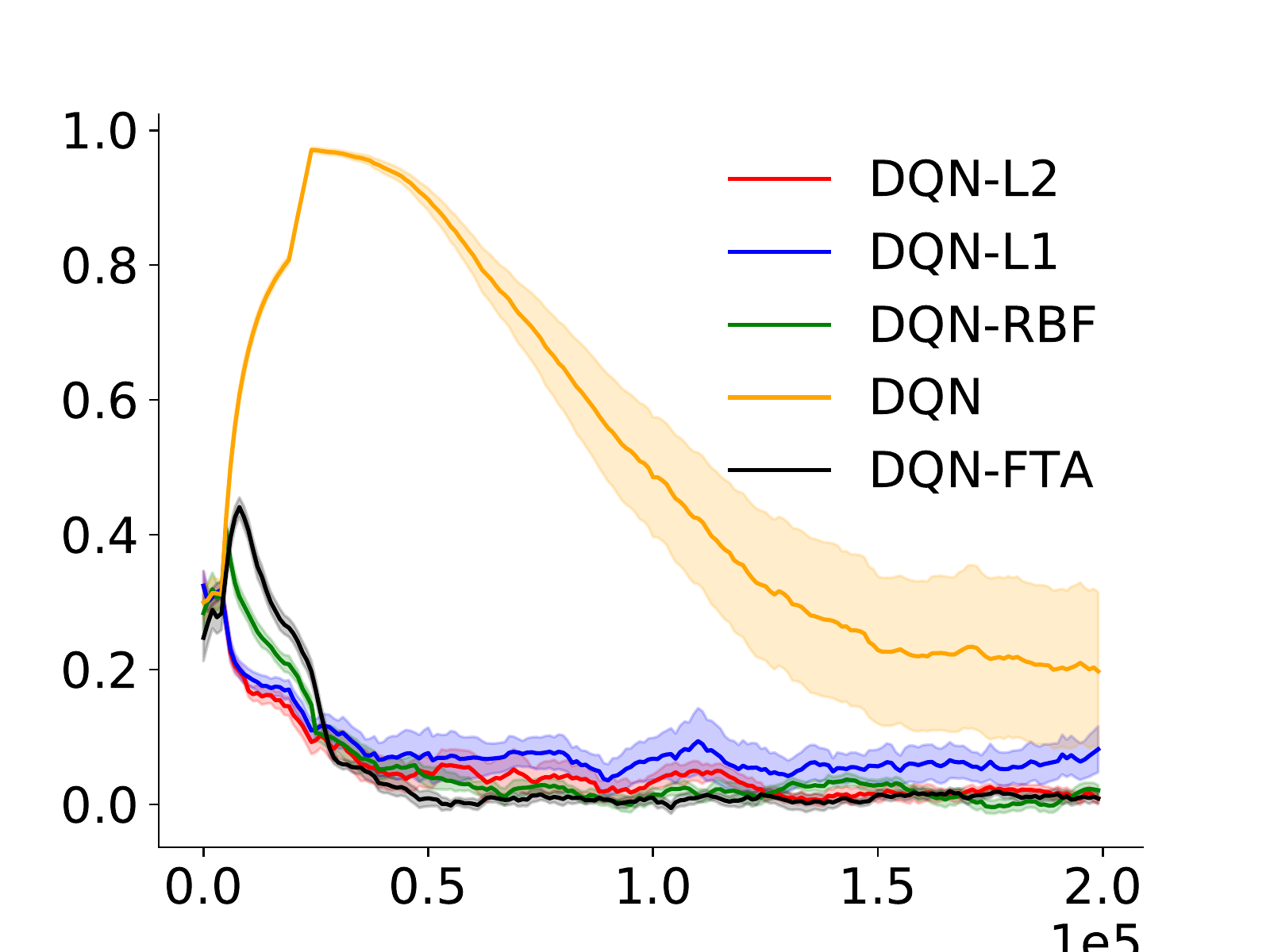}}
	\subfigure[m2 in 2nd]{
		\includegraphics[width=\figwidththree]{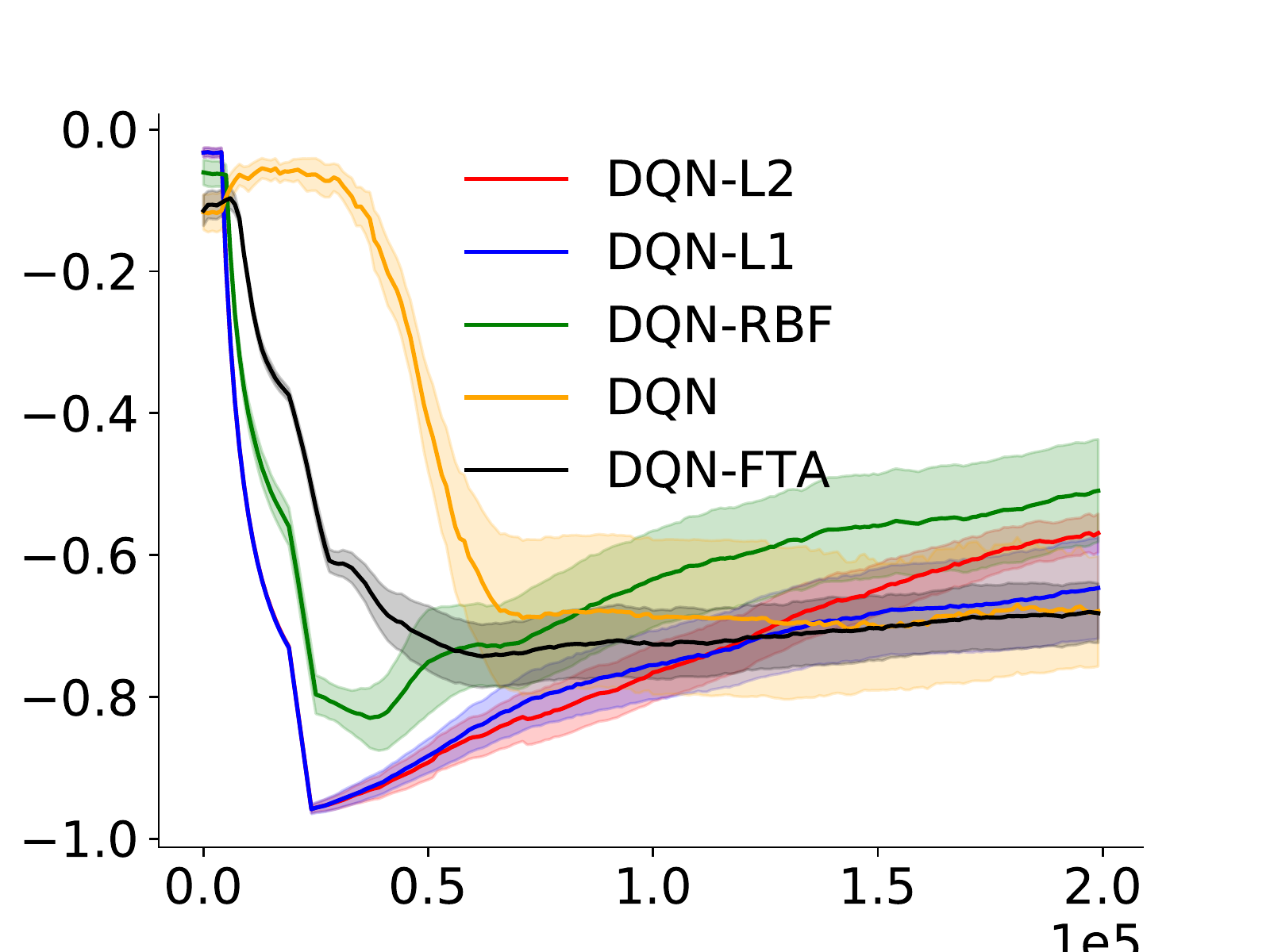}}
	\subfigure[m3 in 2nd]{
		\includegraphics[width=\figwidththree]{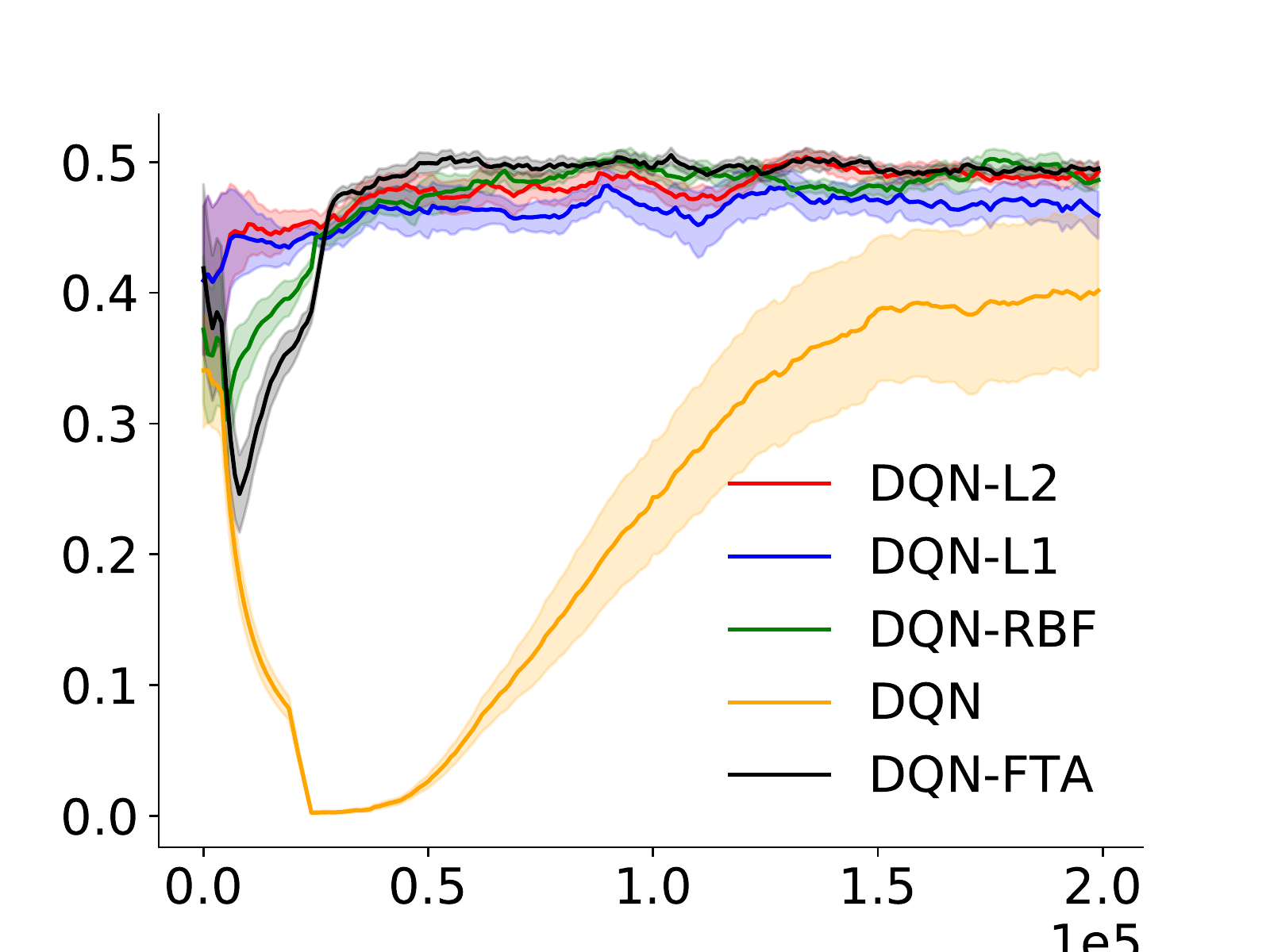}}
	\caption{
		Gradient inference measurements on mountain car domains, averaging over $10$ runs with the shade indicating standard error. (d)(e)(f) are measured for those parameters in the second hidden layer only (which are supposed to directly affect representation). All algorithms are trained without using target networks except DQN. The curve is smoothed over a window of size $20$ before averaging across runs. 
	}\label{fig:discrete-control-mcargrad}
\end{figure}

\begin{figure}[t]
	\centering
	\includegraphics[width=\figwidthtwo]{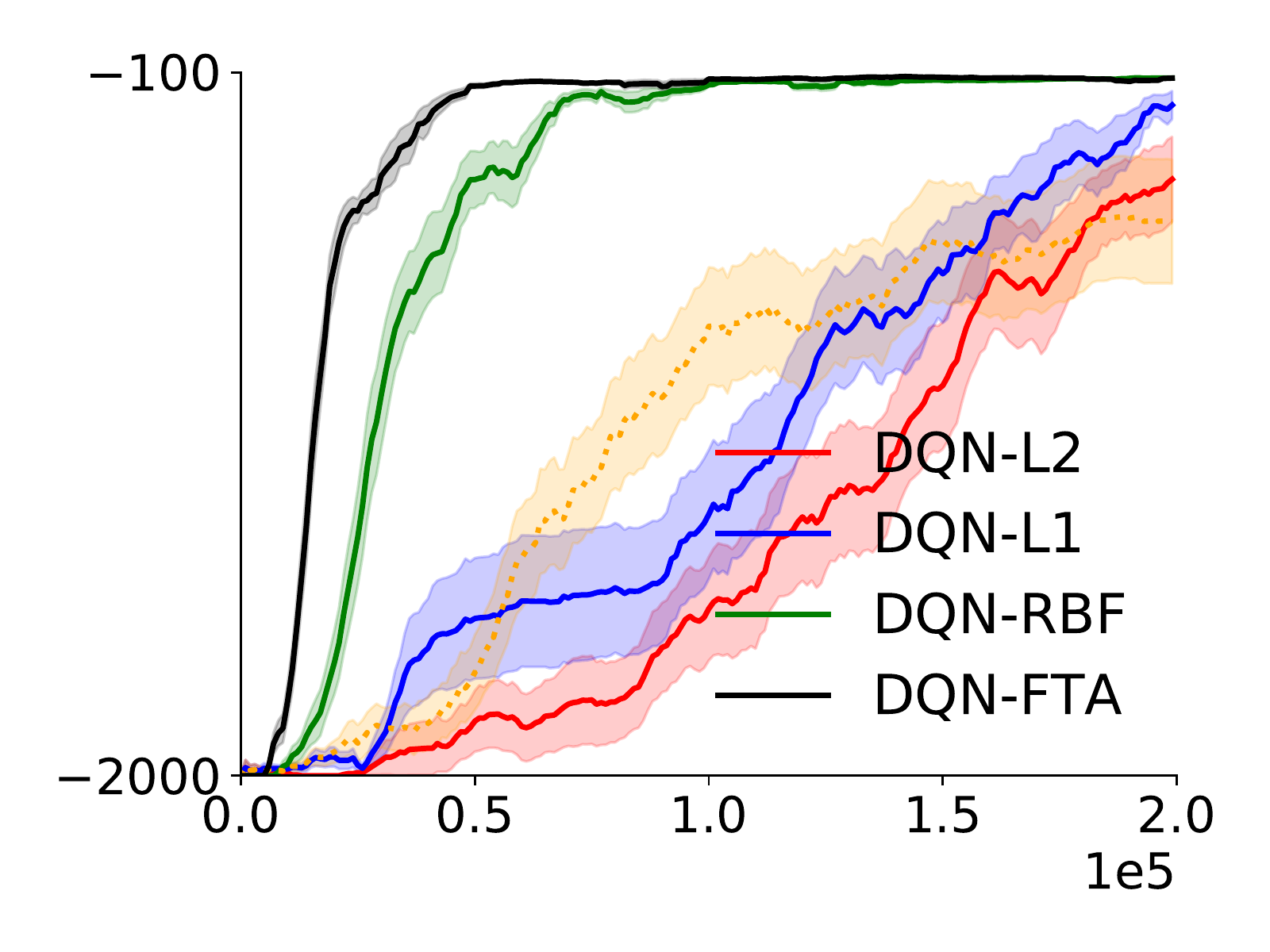}
	\caption{
		Evaluation learning curves on mountain car.  The results are averaged over $20$ runs with the shade indicating standard error. All algorithms are trained without using target networks. 
	}\label{fig:mcar-compare}
\end{figure}

Figure~\ref{fig:mcar-compare} shows the algorithms' performances in terms of number of time steps taken to reach a near-optimal policy: DQN-FTA $>$ DQN-RBF > DQN-L1 $>$ DQN-L2 $\ge$ DQN (i.e. $>$ means better (use fewer time steps) and $\ge$ means slightly better). 

There are several interesting observations. First, in Figure~\ref{fig:discrete-control-mcargrad}(b)(e), for the weights in the second hidden layer (m2 measurement), DQN-RBF has lower interference strength than DQN-FTA, but DQN-FTA performs better. This discrepancy indicates that it is not necessarily true that the lower interference, the better. Second, DQN tends to over generalize during early learning, which may hurt and result in bad performance. Third, figure (e) shows that DQN-L1 is similar to DQN-FTA during late learning stage in terms of m2. But DQN-L1 seems to have extremely interfered gradient directions during early learning, which may explain why it learns slower than DQN-FTA. 

These observations may indicate that DQN-FTA helps generalize \emph{appropriately}, but not overly generalize or highly interfered. We believe it is worth a separate work to thoroughly study the gradient interference issue to draw some interesting conclusions.

\subsubsection{Testing Stability in a Simulated Autonomous Driving Domain}\label{sec:appendix-lta-rlexp-autodriving}

Our results have shown improved stability with FTA. In this section, we test FTA in an environment focused on stability, namely an autonomous driving task~\citep{highway-env}. In the real world, a stable policy is of vital importance to ensure safety in autonomous driving. In this simulated task, the goal is not only to obtain high return, but also keep the number of car crashes as low as possible. FTA setting is the same as those in Section~\ref{sec:rlexp} on discrete control domains.
Figure~\ref{fig:highway}(a) shows the domain, where the agent---the green car---has to learn to switch lanes, avoid car crashes, and go as fast as possible. The observations are $25$-dimensional, with vehicle dynamics that follow the Kinematic Bicycle Model. The action space is discrete. 

FTA learns faster, with significantly fewer car crashes incurred during the evaluation time steps as shown in Figure~\ref{fig:highway}(c). Target networks are harmful in this environment, potentially because they slow early learning; the agent to accumulate a significant number of crashes before improving.
\begin{figure*}[!htbp]
	\centering
	\subfigure[Highway]{
		\includegraphics[width=\figwidththree]{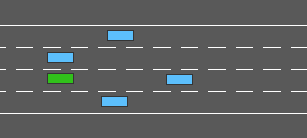}}
	\subfigure[Evaluation LC]{
		\includegraphics[width=\figwidthfour]{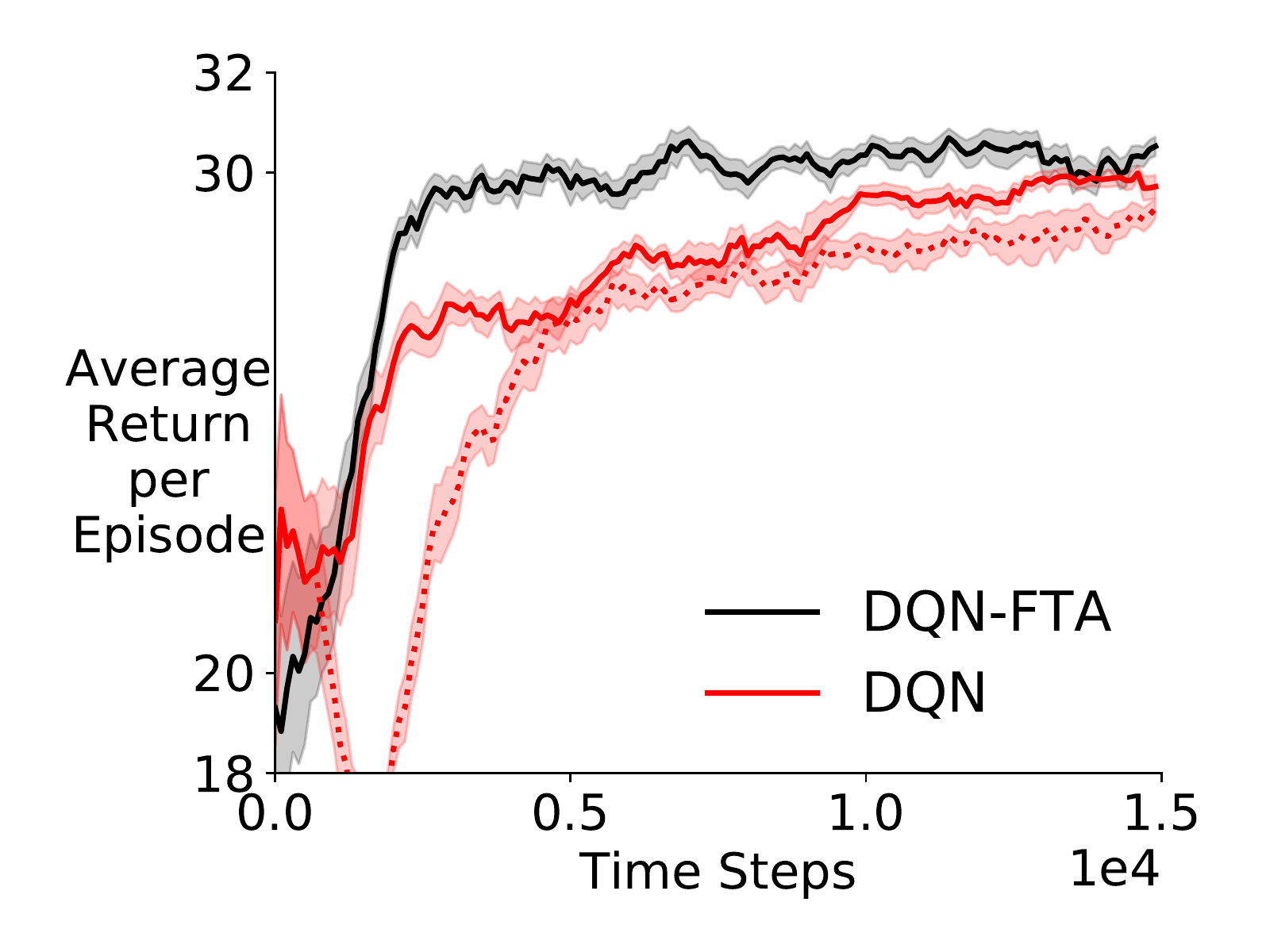}}
	\subfigure[Car Crashes LC]{
		\includegraphics[width=\figwidthfour]{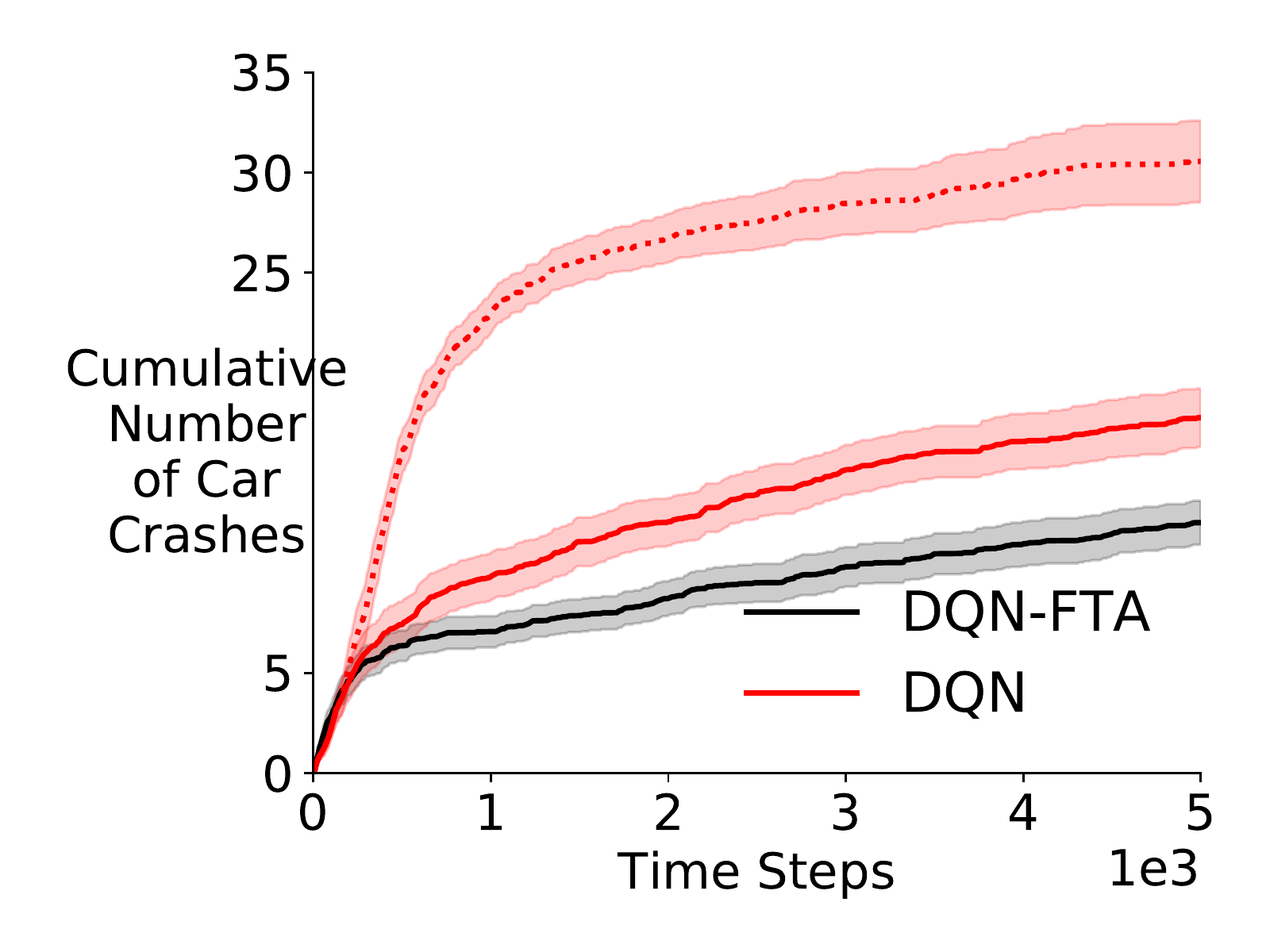}}
	\caption{
		\small
		(a) The Highway environment. (b) The evaluation learning curve. (c) The cumulative number of car crashes as a function of driving time steps. Results are averaged over $30$ runs with shading indicating standard error. The learning curve is smoothed over a window of size $10$ before averaging across runs. 
	}\label{fig:highway}
\end{figure*}

\subsection{Results on image classification tasks}\label{sec-image-results}

We now report the empirical results on two popular image classification tasks: MNIST~\citep{lecun2010mnist} and Mnistfashion~\citep{xiao2017mnistfashion}. We found that FTA does not present any clear advantage or disadvantage in such a conventional supervised learning setting. 

On MNIST, FTA achieves testing error $1.22\%$, and ReLu achieves $1.38\%$. On Mnistfashion, FTA achieves testing error $10.67\%$, and ReLu achieves $10.87\%$. 

\textbf{Details.} The NN architecture we use is two convolution layer followed by two fully connected $32\times 32$ layers. The first convolutional layer has $6$ filters with size $5\times 5$ and is followed by a max pooling operation. The second convolutional layer has $16$ filters with the same size followed by a max pooling. FTA is applied to the second hidden layer and the setting is exactly the same as we used in the RL experiment~\ref{sec-reproduce}. To optimize learning rate, we optimize over the range $\{0.00003, 0.00001, 0.0003, 0.0001, 0.003, 0.001\}$. We use $10$-fold cross validation to choose the best learning rate and the above error rate is reported on testing set by using the optimal learning rate at the end of learning. The standard deviation of the testing error (randomness comes from NN initialization and random shuffling) is sufficiently small to get ignored.

\subsection{Construction of Piecewise Random Walk Problem} \label{sec-supervised-synthetic-construction}

In Section ~\ref{sec:slexp}, recall that we train with data generating process $\{(X_t, Y_t)\}_{t\in\mathbb{N}}$ 
so that interference difficulty is controllable, while permitting fair comparison via a fixed equilibrium distribution.

The temporal correlation in high difficulty $\{X_t\}_{t\in \mathbb{N}}$ is designed to mimic state space trajectories through MDPs with a high degree of local state space connectedness. For example, an agent can only move to adjacent cells in GridWorld, the paddle and ball in Atari Breakout can only move so far in a single frame, and most successor states in Go only differ by a few pieces.  Figure~\ref{fig:diff-range-samples} depicts sample trajectories across a range of difficulties, alongside the fixed equilibrium distribution.

\begin{figure*}[htbp]
    \centering
    \subfigure[$d=0$]{%
        \includegraphics[width=\figwidththree]{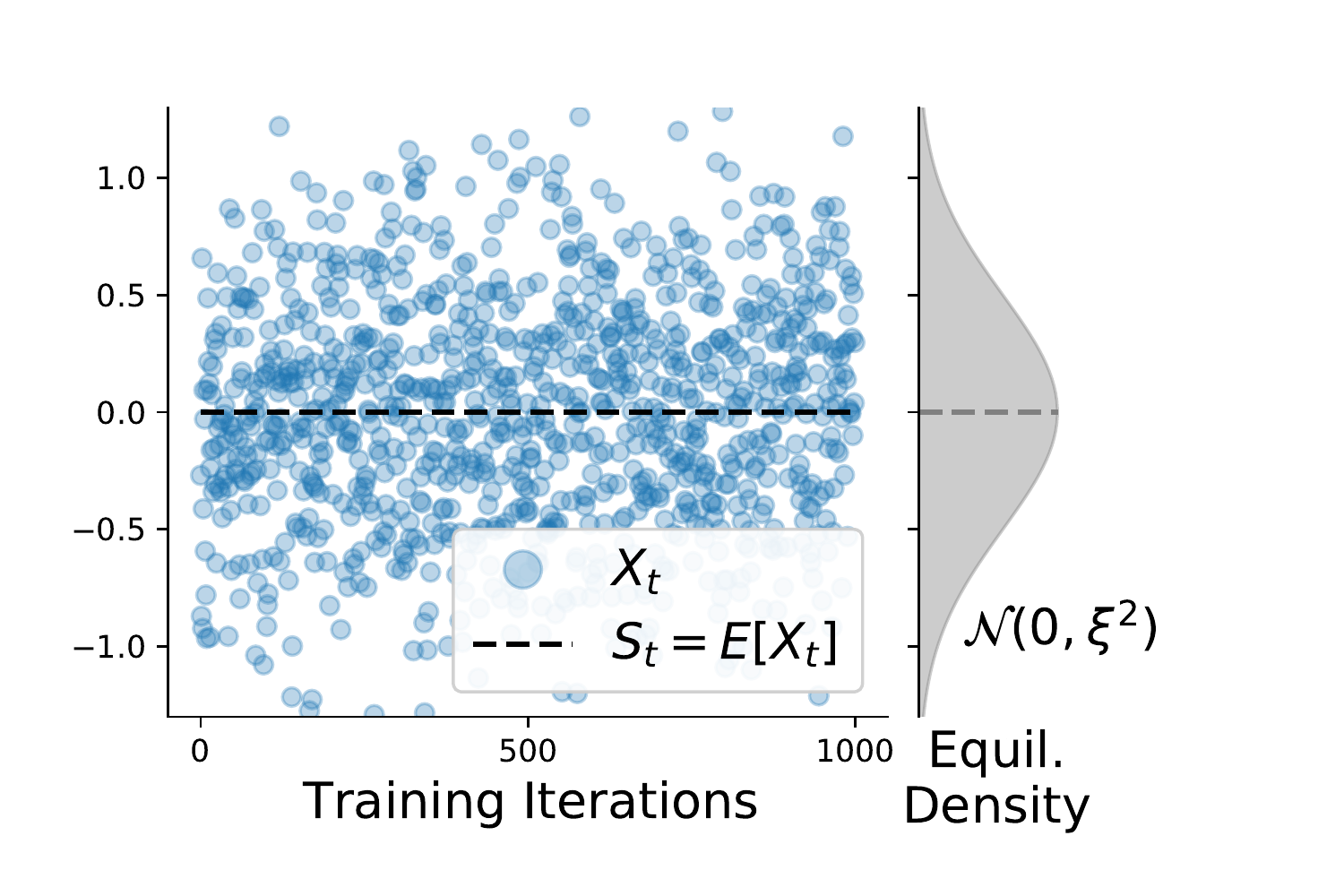}%
        \label{fig:d0}%
    }
    \subfigure[$d=0.21$]{%
        \includegraphics[width=\figwidththree]{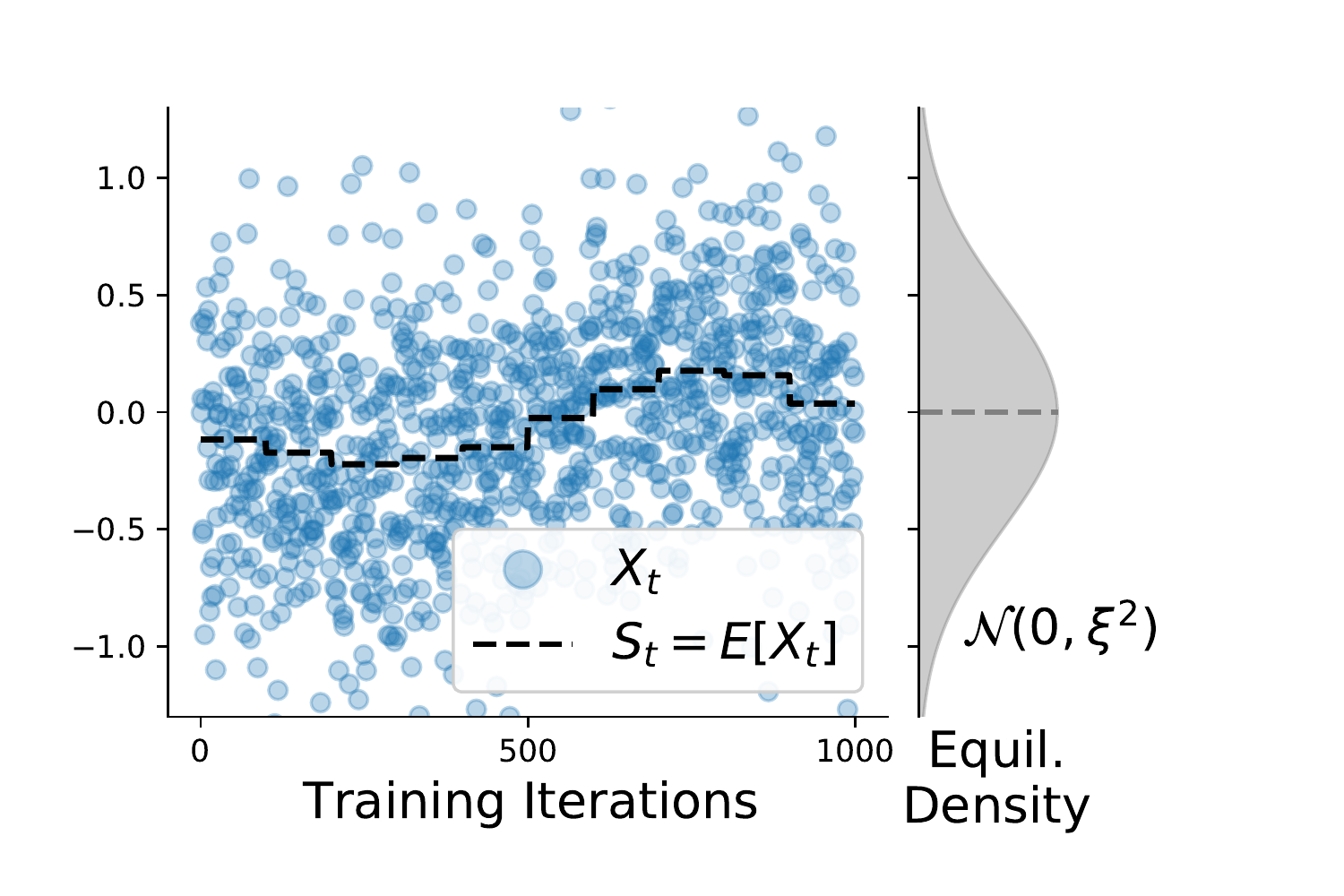}%
        \label{fig:d1}%
    }
    \subfigure[$d=0.41$]{%
        \includegraphics[width=\figwidththree]{figures/d_p41_trajectory}%
        \label{fig:d2}%
    }
    \subfigure[$d=0.62$]{%
        \includegraphics[width=\figwidththree]{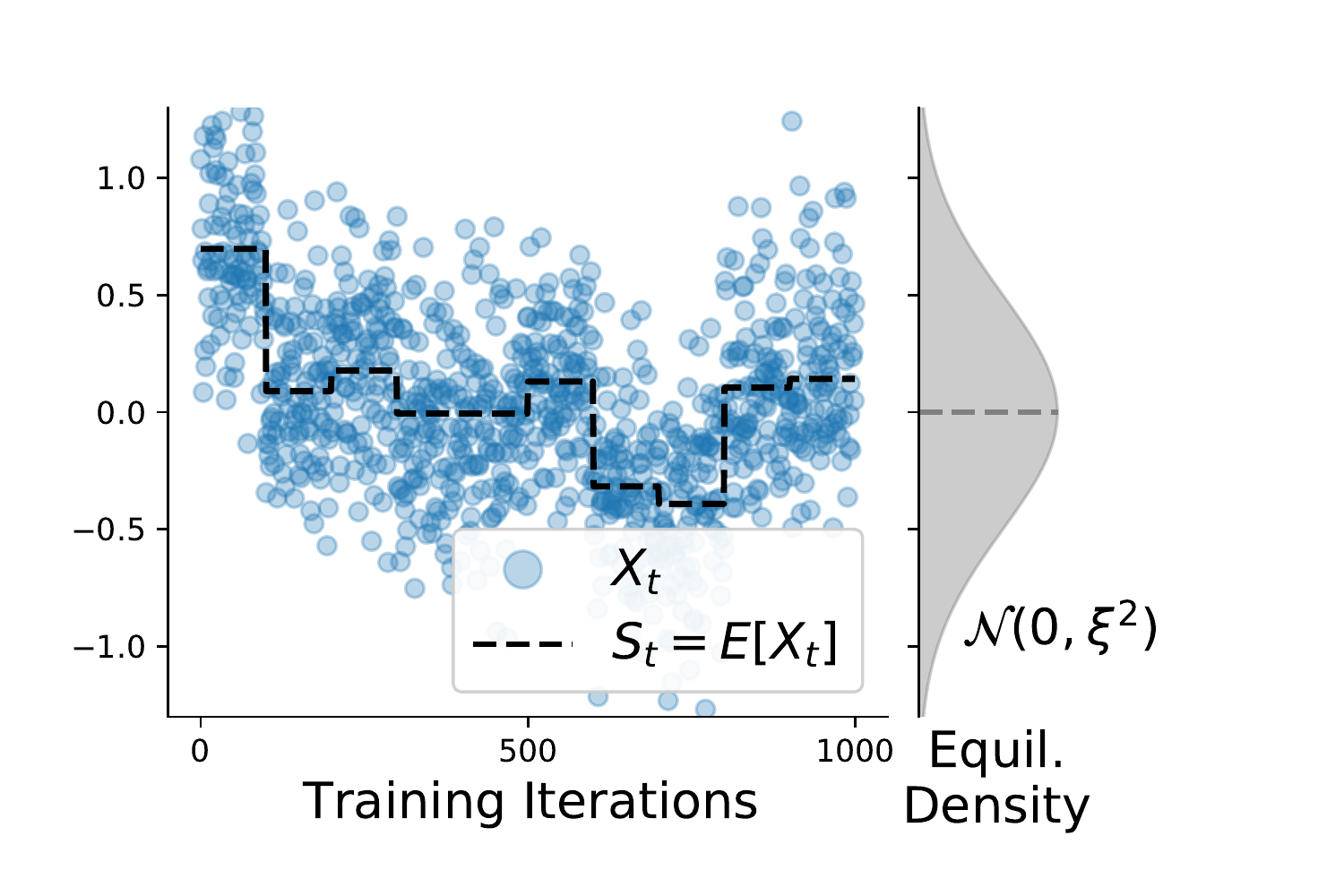}%
        \label{fig:d3}%
    }
    \subfigure[$d=0.83$]{%
        \includegraphics[clip,width=\figwidththree]{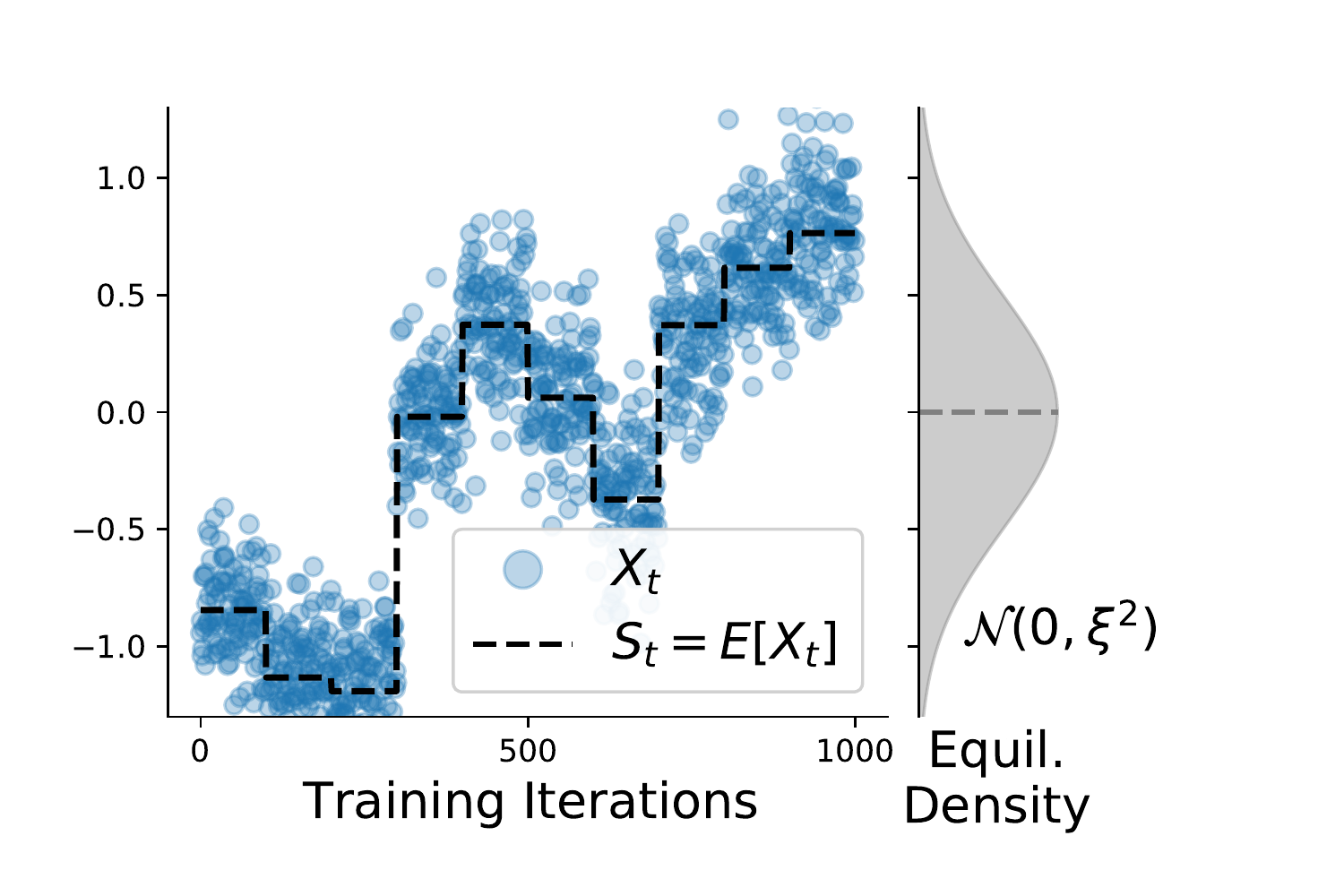}%
        \label{fig:d4}%
    }
    \subfigure[$d=0.98$]{%
        \includegraphics[clip,width=\figwidththree]{figures/d_p98_trajectory}%
        \label{fig:d5}%
    }
    \caption{Sample trajectories of $\{X_t\}_{t\in \mathbb{N}}$ (blue) and $\{S_t\}_{t\in \mathbb{N}}$ (black) across a range of difficulties.  Note in particular the fixed equilibrium distribution (gray) and i.i.d sampling for $d=0$ (top left).
    }
    \label{fig:diff-range-samples}
\end{figure*}

Here we rigorously construct $\{X_t\}_{t\in \mathbb{N}}$ and $\{S_t\}_{t\in \mathbb{N}}$, and show they have the claimed properties.

Let random variable $E_t \sim \mathcal{N}(\epsilon; 0, \sigma^2)$ be a source of Gaussian noise for the recursive random variable $S_t$

\begin{equation} \label{eq:walk-of-means}
    S_{t+1} = (1-c)S_t + E_t
\end{equation} 

If $c \in (0, 1]$, then $\{S_t\}_{t\in \mathbb{N}}$ is a linear Gaussian first order autoregressive process, denoted AR(1).\footnote{If $c=0$, then $S_t$ is a simple Gaussian random walk.  We do not use such a process, because sample paths are unbounded, hitting $\pm \sqrt{t}$ infinitely often as $t\to \infty$}  Then the {\em equilibrium distribution} of $\{S_t\}_{t\in \mathbb{N}}$ is also Gaussian.  If $S_0$ is sampled from the equilibrium distribution, then the process $\{S_t\}_{t\in \mathbb{N}}$ is stationary with $E[S_t] = 0$ and variance

\begin{equation} \label{eq:walk-stationary-variance}
    \nu^2 \doteq E[S_t^2] = \frac{\sigma^2}{2c - c^2}
\end{equation} 

It also follows that $\{S_t\}_{t\in \mathbb{N}}$ is ergodic.  See~\cite{grunwald1995unified} for reference on AR(1) processes, especially the discussion of equation (3.3) for linear Gaussian AR(1) processes, and section 5.3 for their stationarity and ergodicity.

Let $s_t$ be a realization of $S_t$.  We define $X_t|S_t=s_t$ as a Gaussian r.v. with mean $s_t$ and variance $\beta^2$:

\begin{equation} \label{eq:domain-process-gaussian}
    X_t|S_t=s_t \sim \mathcal{N}(x; s_t, \beta^2)
\end{equation} 

The process $\{X_t\}_{t\in \mathbb{N}}$ is also a linear Gaussian AR(1) process, so we can again rely on established AR(1) process theory to conclude that the {\em equilibrium distribution} of $\{X_t\}_{t\in \mathbb{N}}$ is Gaussian.  Moreover, if $X_0$ is sampled from the equilibrium distribution, then the process $\{X_t\}_{t\in \mathbb{N}}$ is stationary with $E[X_t] = 0$ and variance $\xi^2$
\begin{equation} \label{eq:overall-stationary-variance}
    \xi^2 \doteq E[X_t^2] = \beta^2 + \nu^2 = \beta^2 + \frac{\sigma^2}{2c - c^2}
\end{equation} 

Refer to Theorem~\ref{thm:xt-equilibrium} in Section~\ref{sec:covariate-shift-props} for proof that $\{X_t\}_{t\in \mathbb{N}}$ is a linear Gaussian AR(1) process, and that the resulting equilibrium distribution has variance specified by \eqref{eq:overall-stationary-variance}.  Note there are several variance terms in the above construction.

\begin{itemize}
    \item $\sigma^2$ is the noise variance in the random walk, i.e. the source of jumps in $\{S_t\}_{t\in \mathbb{N}}$
    \item $\nu^2$ is the equilibrium distributon variance of $\{S_t\}_{t\in \mathbb{N}}$
    \item $\beta^2$ is the variance in any particular $X_t$ given $s_t$ at time step $t$
    \item $\xi^2$ is the equilibrium distribution variance of $\{X_t\}_{t\in \mathbb{N}}$
\end{itemize}

The processes $\{S_t\}_{t\in \mathbb{N}}$ and $\{X_t\}_{t\in \mathbb{N}}$ are fully determined by the quantities $c, \sigma^2, \beta^2$.  For our experiments we wish to have a single parameter $d$ such that the equilibrium distribution of $\{X_t\}_{t\in \mathbb{N}}$ is fixed for all $d\in[0,1)$, but temporal correlation is variable.  Also, $d=0$ should correspond to zero temporal correlation. The following two theorems specify this behaviour more rigorously.

\begin{thm} \label{thm:covariate-shift-params}
    Let difficulty parameter $d\in[0,1)$ and $\xi^2 = (\frac{B}{2})^2$ for some $B \in \mathbb{R}^+$. Then $\{X_t\}_{t\in \mathbb{N}}$ will have fixed equilibrium distribution $\mathcal{N}(0, \xi^2)$, invariant to $d$, if parameters $c, \sigma^2, \beta^2$ are set as follows
    \begin{align*}
        c &= 1 - \sqrt{1 - d} \\
        \sigma^2 &= d^2 (\frac{B}{2})^2 \\
        \beta^2 &= (1-d)(\frac{B}{2})^2
    \end{align*}
\end{thm}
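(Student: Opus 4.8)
The plan is to verify directly that the three substitutions collapse the general equilibrium-variance expression $\xi^2 = \beta^2 + \sigma^2/(2c-c^2)$ — established for the AR(1) process $\{X_t\}_{t\in\mathbb{N}}$ in \eqref{eq:overall-stationary-variance} and Theorem~\ref{thm:xt-equilibrium} — to the constant $(B/2)^2$, independent of $d$. Writing $A \defeq (B/2)^2$ for brevity, the target reduces to showing $\beta^2 + \nu^2 = A$, where $\nu^2 = \sigma^2/(2c-c^2)$ and the substitutions are $\beta^2 = (1-d)A$ and $\sigma^2 = d^2 A$. Since the mean of the claimed distribution is $0$, I would also note that the zero-mean noise forces $E[S_t] = 0$, hence $E[X_t] = 0$, so only the variance needs work.

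First I would establish the single algebraic identity that drives everything. Because $c = 1 - \sqrt{1-d}$, we have $1 - c = \sqrt{1-d}$ and therefore $(1-c)^2 = 1-d$. Using $2c - c^2 = 1 - (1-c)^2$, this immediately yields $2c - c^2 = 1 - (1-d) = d$. This is the heart of the argument: the denominator of the stationary-variance formula equals $d$ exactly, which is precisely what makes the $d$-dependence cancel.

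Next, for $d \in (0,1)$ I would substitute to get $\nu^2 = \sigma^2/(2c-c^2) = d^2 A / d = dA$, so that $\xi^2 = \beta^2 + \nu^2 = (1-d)A + dA = A = (B/2)^2$, manifestly free of $d$. I would also check the admissibility condition the AR(1) equilibrium formula tacitly requires: for $d \in (0,1)$, $\sqrt{1-d} \in (0,1)$, so $c = 1 - \sqrt{1-d} \in (0,1)$, which lies in the valid range $(0,1]$ for the stationarity/ergodicity results being invoked.

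The only genuine subtlety — and the step I would treat most carefully — is the boundary case $d = 0$, where $\nu^2 = \sigma^2/(2c-c^2)$ degenerates into $0/0$ because $c = 0$ and $\sigma^2 = 0$ simultaneously. Here I would argue directly rather than through the formula: with $c=0$ and $\sigma^2=0$, the recursion \eqref{eq:walk-of-means} gives $S_{t+1} = S_t$, so $S_t$ is constant (equal to $0$ under equilibrium initialization), and $X_t \mid S_t \sim \mathcal{N}(0,\beta^2)$ with $\beta^2 = A$ becomes an i.i.d.\ sequence drawn from $\mathcal{N}(0, A)$. This recovers the same equilibrium $\mathcal{N}(0,(B/2)^2)$ and matches the intended zero-correlation reading of $d=0$, completing the case analysis and the proof.
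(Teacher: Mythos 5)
Your proof is correct and takes essentially the same route as the paper's: substitute into $\xi^2 = \beta^2 + \sigma^2/(2c-c^2)$ from Theorem~\ref{thm:xt-equilibrium}, observe that $2c-c^2 = 1-(1-c)^2 = d$, and conclude $(1-d)\left(\tfrac{B}{2}\right)^2 + d\left(\tfrac{B}{2}\right)^2 = \left(\tfrac{B}{2}\right)^2$. Your explicit handling of the degenerate boundary case $d=0$ (where the formula reads $0/0$ since $c=0$ falls outside the range assumed by the AR(1) theory) is a point of extra care that the paper's one-line proof silently skips; the paper instead covers that case separately in Theorem~\ref{thm:d-zero-iid}, by essentially the same direct argument you give.
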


$\xi^2$ is defined in terms of $B\in \mathbb{R}^{+}$ simply because $B$ is a more intuitive design parameter.  In particular, $B$ is a high probability bound with $P(X_t \in [-B, B]) \leq 0.95$ for all $t$, since $\{X_t\}_{t\in \mathbb{N}}$ is ergodic.

\begin{thm} \label{thm:d-zero-iid}
$d=0$ and $S_0 = 0$ induces i.i.d $X_t$ from $\mathcal{N}(x; 0, \xi^2)$, the equilibrium distribution of $\{X_t\}_{t\in \mathbb{N}}$. 
\end{thm}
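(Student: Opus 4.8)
The plan is to reduce everything to the degenerate parameter values that Theorem~\ref{thm:covariate-shift-params} assigns at $d=0$, show that the latent drift $\{S_t\}_{t\in\mathbb{N}}$ then collapses to the constant sequence $0$, and finally read off the i.i.d.\ Gaussian claim directly from the observation model~\eqref{eq:domain-process-gaussian}. First I would substitute $d=0$ into the parameter formulas of Theorem~\ref{thm:covariate-shift-params}, obtaining
\[
c = 1 - \sqrt{1-0} = 0,\qquad \sigma^2 = 0^2\Big(\tfrac{B}{2}\Big)^2 = 0,\qquad \beta^2 = (1-0)\Big(\tfrac{B}{2}\Big)^2 = \Big(\tfrac{B}{2}\Big)^2 = \xi^2 .
\]
Since $\sigma^2=0$, each noise term $E_t$ is $0$ almost surely, so the recursion~\eqref{eq:walk-of-means} becomes $S_{t+1} = (1-c)S_t + E_t = S_t$. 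With the initial condition $S_0=0$ this forces $S_t = 0$ for every $t$, almost surely, so the entire latent sequence is (a.s.) the deterministic constant $0$.

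Conditioning on this constant sequence, the observation model~\eqref{eq:domain-process-gaussian} gives $X_t\mid S_t=0 \sim \mathcal{N}(x;0,\beta^2)=\mathcal{N}(x;0,\xi^2)$ for each $t$, so every $X_t$ has the same marginal $\mathcal{N}(0,\xi^2)$. For independence I would invoke the conditional-independence structure implicit in the state-space construction: given the latent path $\{S_t\}_{t\in\mathbb{N}}$, the observations $\{X_t\}_{t\in\mathbb{N}}$ are drawn independently, each $X_t$ depending on the path only through its contemporaneous state $S_t$. Because the conditioning path is here a.s.\ a fixed constant, conditioning is vacuous, and the $X_t$ are therefore mutually (unconditionally) independent. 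Combined with identical marginals this is exactly the i.i.d.\ claim; and since any i.i.d.\ process is stationary with invariant law equal to its common marginal, $\mathcal{N}(0,\xi^2)$ is indeed the equilibrium distribution of $\{X_t\}_{t\in\mathbb{N}}$, consistent with Theorem~\ref{thm:covariate-shift-params}.

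The one point that requires care --- the main, though mild, obstacle --- is that the variance decomposition $\xi^2=\beta^2+\nu^2$ with $\nu^2=\sigma^2/(2c-c^2)$ degenerates to the indeterminate form $0/0$ at $d=0$, so one cannot certify the target variance by way of $\nu^2$. I would sidestep this by computing $\xi^2$ directly through $\beta^2$, which is well defined and equals $(B/2)^2=\xi^2$; intuitively, the collapsed latent process simply contributes no variance, and all of $\xi^2$ comes from the emission noise $\beta^2$. The only other thing to make explicit is that the emission model is read in the usual state-space sense (observations conditionally independent given the latent chain, each depending on the chain only through its current state), which is precisely what licenses the independence step above.
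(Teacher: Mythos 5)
Your proof is correct and follows essentially the same route as the paper's: substitute the $d=0$ parameter values from Theorem~\ref{thm:covariate-shift-params} to get $c=0$, $\sigma^2=0$, collapse $S_t$ to the constant $0$, and read off $X_t \sim \mathcal{N}(0,\beta^2)$ with $\beta^2 = \xi^2$ i.i.d.\ from the emission model. Your added remarks---that $\nu^2 = \sigma^2/(2c-c^2)$ is indeterminate at $d=0$ so $\xi^2$ must be computed via $\beta^2$ directly, and that independence rests on the i.i.d.\ emission noise in the state-space construction---are accurate clarifications of points the paper leaves implicit.
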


We defer proof of both theorems to Section~\ref{sec:covariate-shift-props}.

Correlation difficulty $d$ is intuitively characterized as follows.

\begin{itemize}
    \item As $d$ increases, $\beta^2$ decreases, so a smaller portion of the overall state space is supported at any given time.  i.e. $P(X_t=x|S_t=s_t)$ becomes increasingly narrow, inducing higher temporal correlation.
    \item As $d$ increases, the noise in random walk $\{S_t\}_{t\in \mathbb{N}}$ increases in amplitude, so that larger jumps in the mean of any particular $X_t$ are more likely.
    \item At $d=0$, all correlation difficulty from $\{X_t\}_{t\in \mathbb{N}}$ has been removed, in that we recover iid sampling.
    \item As $d \rightarrow 1$, $\{X_t\}_{t\in \mathbb{N}}$ converges towards pathological correlation, with $X_t \xrightarrow{d} \delta(s_t)$, where $\delta(\cdot)$ is the Dirac delta distribution. i.e. $X_t$ becomes constant everywhere except the jumps in realization $s_t$ of $S_t$. 
\end{itemize}

Despite the above relationships between $d$ and the other process parameters, the equilibrium distribution $\mathcal{N}(x; 0, \xi^2)$ is identical for all difficulty settings $d\in[0, 1)$, because the increase (or decrease) in parameters $\sigma^2, c$ is tuned specifically to counteract the decrease (or increase) in $\beta^2$.  Having identical equilibrium distributions means that a hypothetical ideal algorithm, capable of perfectly mitigating interference, would train the identical approximator $f_\theta$ for any correlation difficulty $d\in[0, 1)$.  Hence, we use approximation loss on the equilibrium distribution as a measure of robustness to interference.

To summarize, correlation difficulty $d\in[0, 1)$ controls the likelihood of interference-inducing samples throughout training, but also permits fair comparison between values of $d$ by the (fixed) equilibrium distribution of $\{X_t\}_{t\in \mathbb{N}}$.

\subsubsection{Piecewise Random Walk Additional Plots}
\label{sec-supervised-learning-curves-sensitivity}

Figure~\ref{fig:diff-sweep-with-learning-curves} depicts learning curves of loss over training time at different levels of correlation difficulty.   Figure~\ref{fig:difficulty-lr-sensitivity} shows learning rate sensitivity curves for the FTA and ReLU networks for three different difficulty settings.  The ADAM optimizer \cite{kingma2015adam} was used in all experiments.

For the experiment in Figure~\ref{fig:iid_vs_corr_trajectories}, where one sample from each $X_t$ is used for each weight update, FTA outperforms ReLU even on i.i.d data ($d=0$).  In order to find the conditions under which ReLU and FTA both perform equally well, we repeat the same experiment, but with 50 samples drawn from each $X_t$.  Figure~\ref{fig:diff-sweep-with-batching} depicts the results, where the performance is indeed equalized for iid sampling ($d=0$), but ReLU's performance still diverges similarly to other experiments as correlation difficulty is increased.  

The size of the FTA and ReLU neural networks used on the Piecewise Random Walk Problem were chosen based on best performance on iid data, and the best performers do not have the same number of parameters. (67K vs 5.2K learnable parameters.)  In order to verify that the difference in parameter count was not a significant factor in handling high correlation difficulty, we additionally include results for a ReLU network with wider hidden layers totalling 81K learnable parameters.  See Figure \ref{fig:diff-sweep-with-large}.  Both ReLU networks perform very similarly---in most cases agreeing within $p=0.05$---with loss steeply running away as correlation difficulty increases.  So we conclude FTA's robustness to high correlation difficulties is not explained simply by parameter count.

\begin{figure*}[p]
    \centering
	\subfigure[FTA]{%
	  \includegraphics[width=\figwidththree]{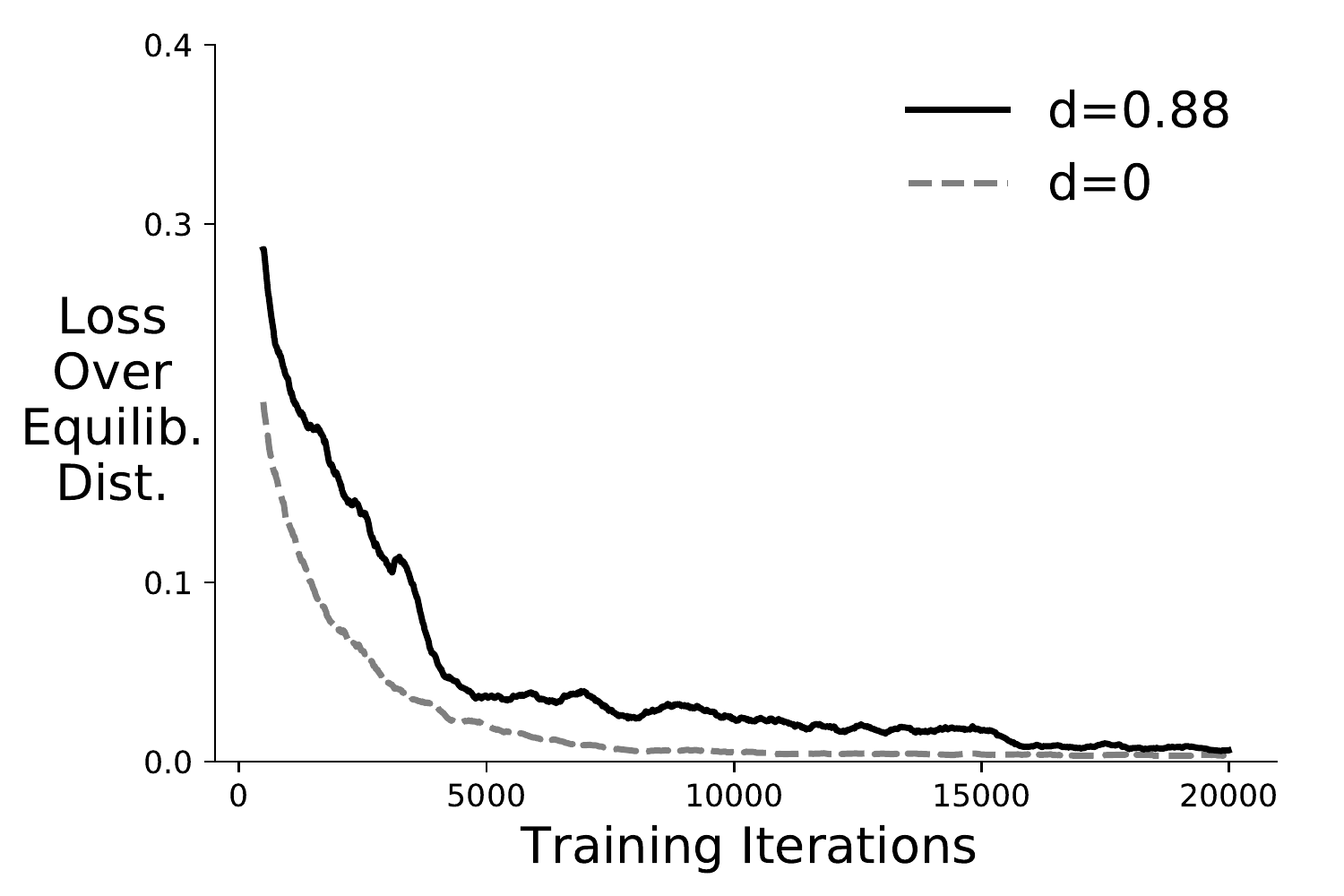}%
	  \label{fig:diff_sweep_lta}%
	}
	\subfigure[ReLU]{%
	  \includegraphics[width=\figwidththree]{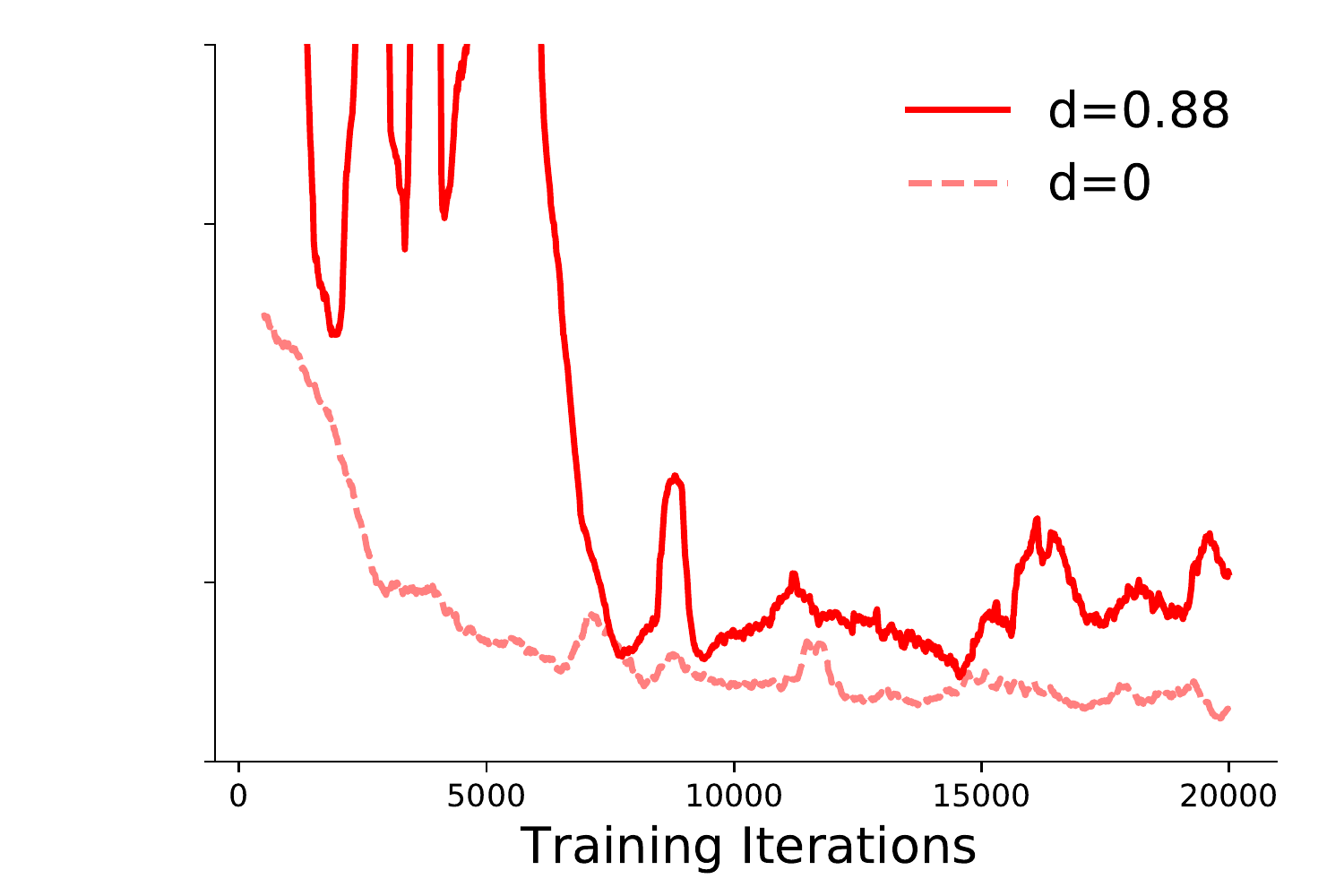}%
	  \label{fig:diff_sweep_relu}%
	}
	\subfigure[ReLU and FTA, $d\in[0, 1)$]{%
	  \includegraphics[width=\figwidththree]{figures/difficulty_sweep_2_layer.pdf}%
	  \label{fig:diff_sweep_secondary}%
	}
    
    \caption{Left, Middle: Learning curve of loss over stationary distribution during training on low difficulty (dotted) and high difficulty (solid) settings for two layer neural nets.  The curves are smoothed over a window of 50.  Right: The final loss over stationary distribution after 20K training iterations across a range of difficulty settings, shown as the mean of 30 runs with the shaded region corresponding to $p=0.001$.  The final loss per run is computed as the mean over the final 2.5K iterations, with the iid setting $d=0$ (dotted) shown for baseline comparison.}
    \label{fig:diff-sweep-with-learning-curves}
\end{figure*}

\begin{figure*}[p]
    \centering
    
	\subfigure[Lowest Difficulty]{%
        \includegraphics[width=\figwidththree]{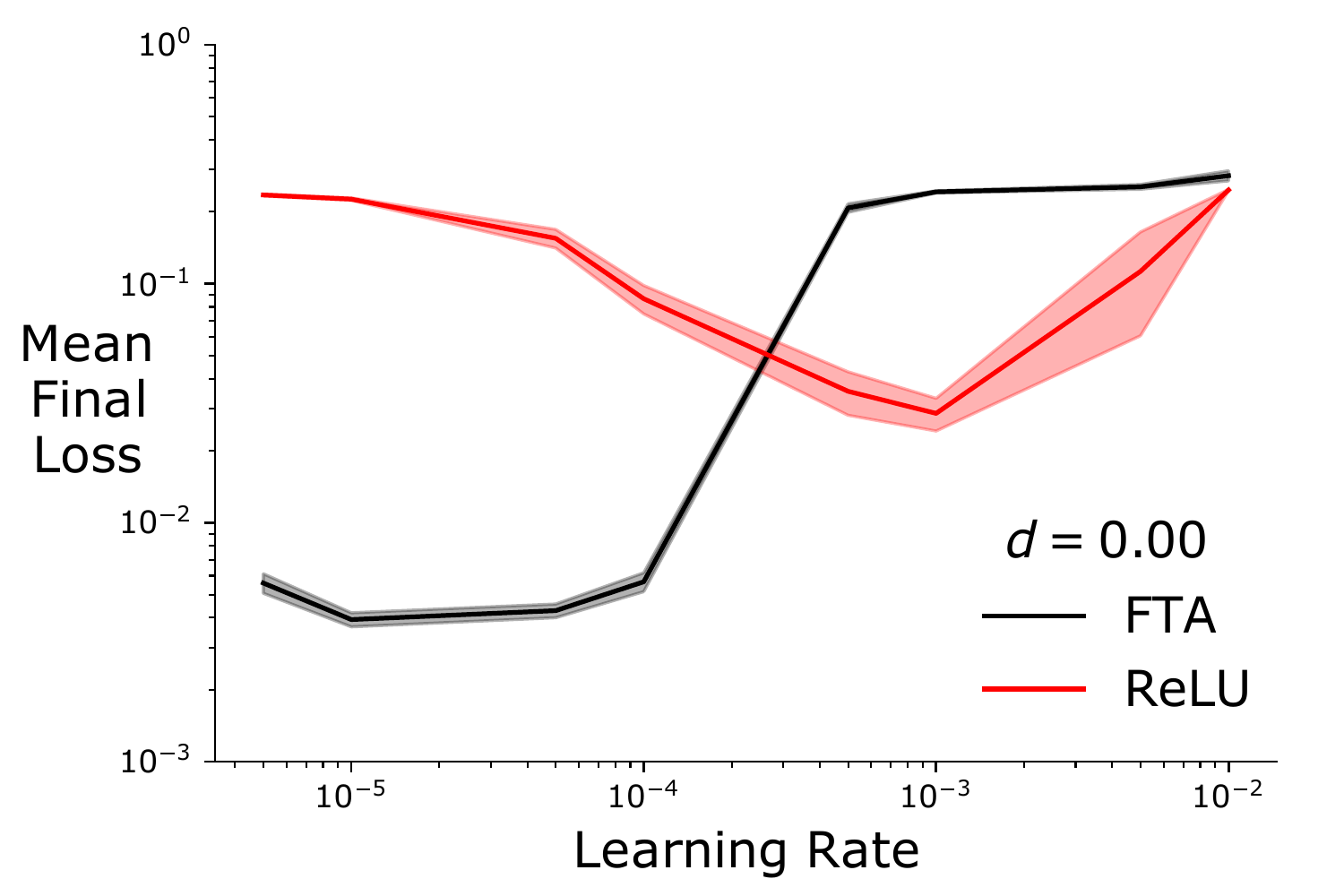}%
        \label{fig:corr_sensitivity_easy}%
    }
    \subfigure[Moderate Difficulty]{%
        \includegraphics[width=\figwidththree]{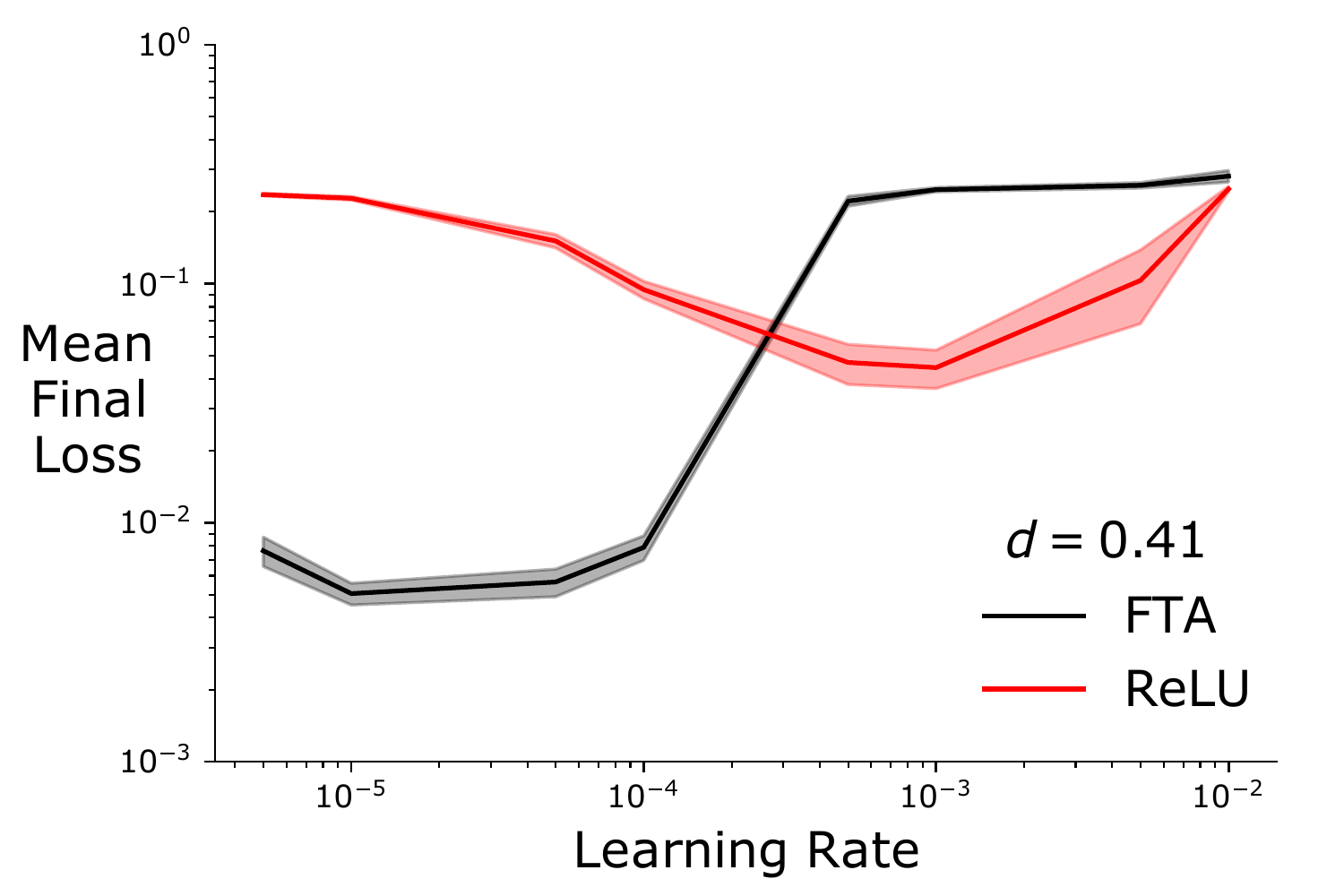}%
        \label{fig:corr_sensitivity_med}%
    }
    \subfigure[High Difficulty]{%
        \includegraphics[width=\figwidththree]{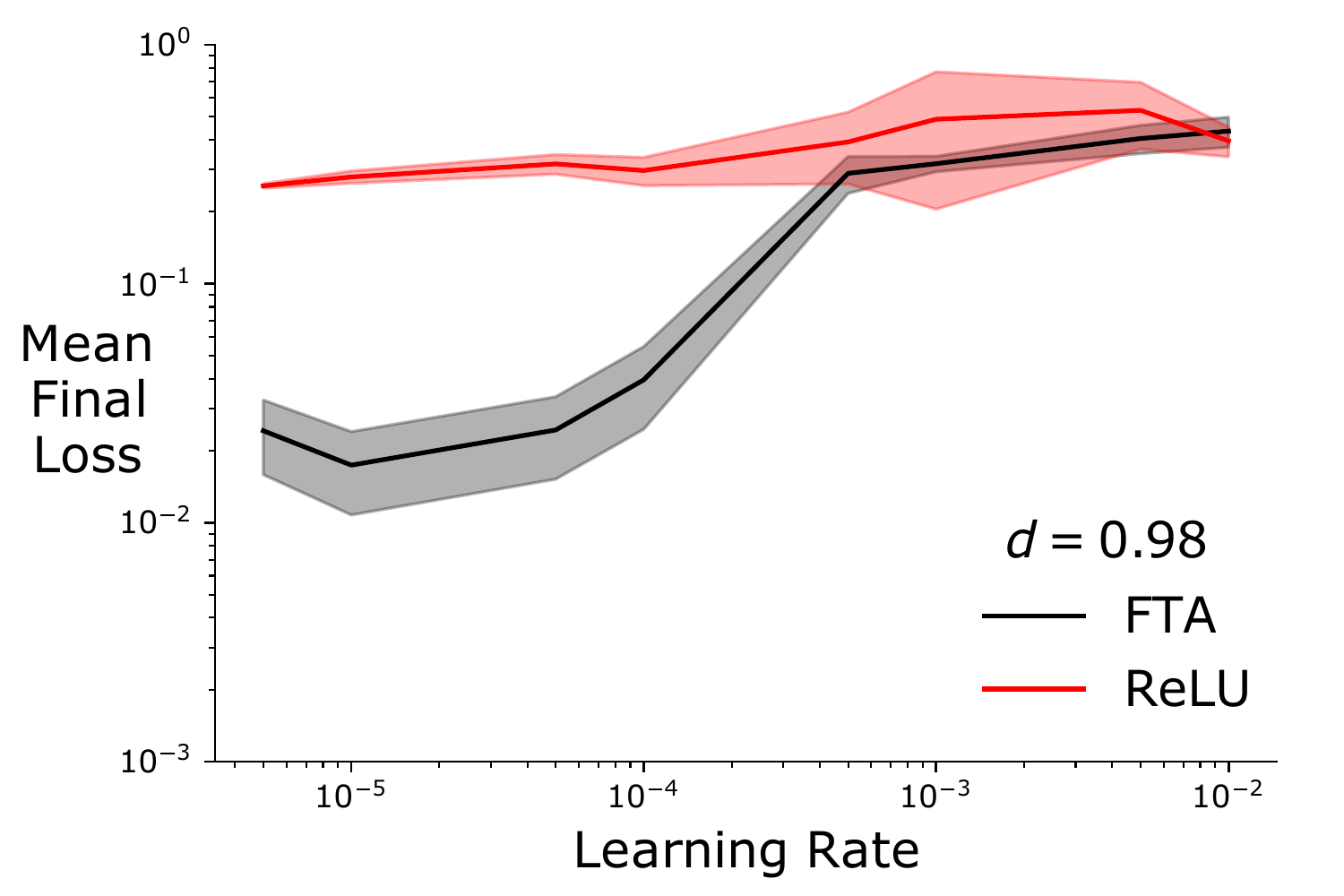}%
        \label{fig:corr_sensitivity_hard}%
    }
    \caption{Learning rate sensitivity of FTA and ReLU for iid, mildly correlated, and severely correlated $X_t$ (left, middle, right, respectively.) Final loss performance is shown as the mean of 30 runs with the shaded region corresponding $p=0.001$.  These curves corroborate our findings that, in general, FTA prefers lower learning rates (but converges more quickly nonetheless.)}
    \label{fig:difficulty-lr-sensitivity}
\end{figure*}

\begin{figure*}[p]
    \centering
	\subfigure[FTA]{%
	  \includegraphics[width=\figwidththree]{figures/diff_sweep_learning_curve_lta_results.pdf}%
	  \label{fig:diff_sweep_lta_for_relu_large}%
	}
	\subfigure[ReLU-large]{%
	  \includegraphics[width=\figwidththree]{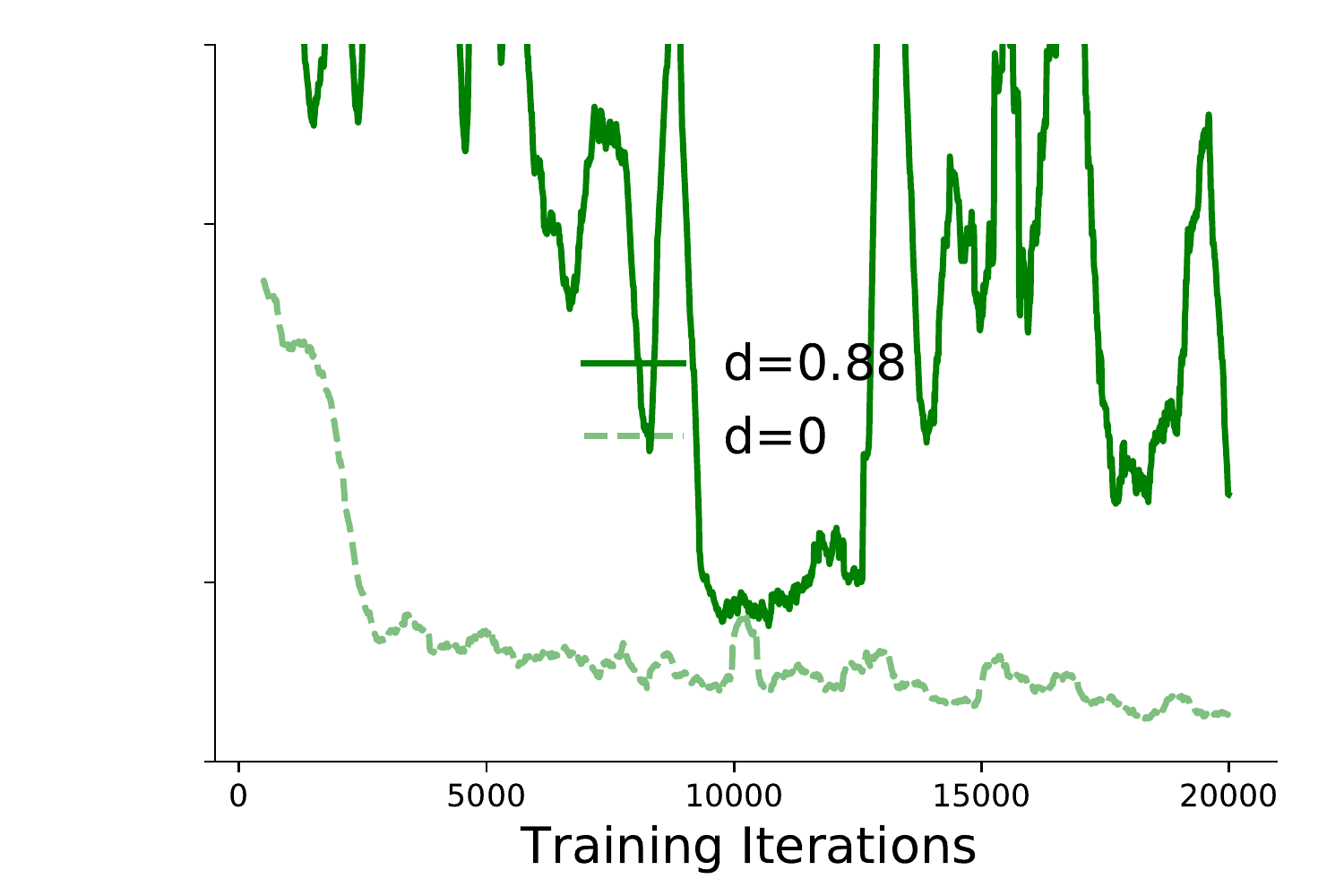}%
	  \label{fig:diff_sweep_relu_large}%
	}
	\subfigure[ReLU-large and FTA, $d\in[0, 1)$]{%
	  \includegraphics[width=\figwidththree]{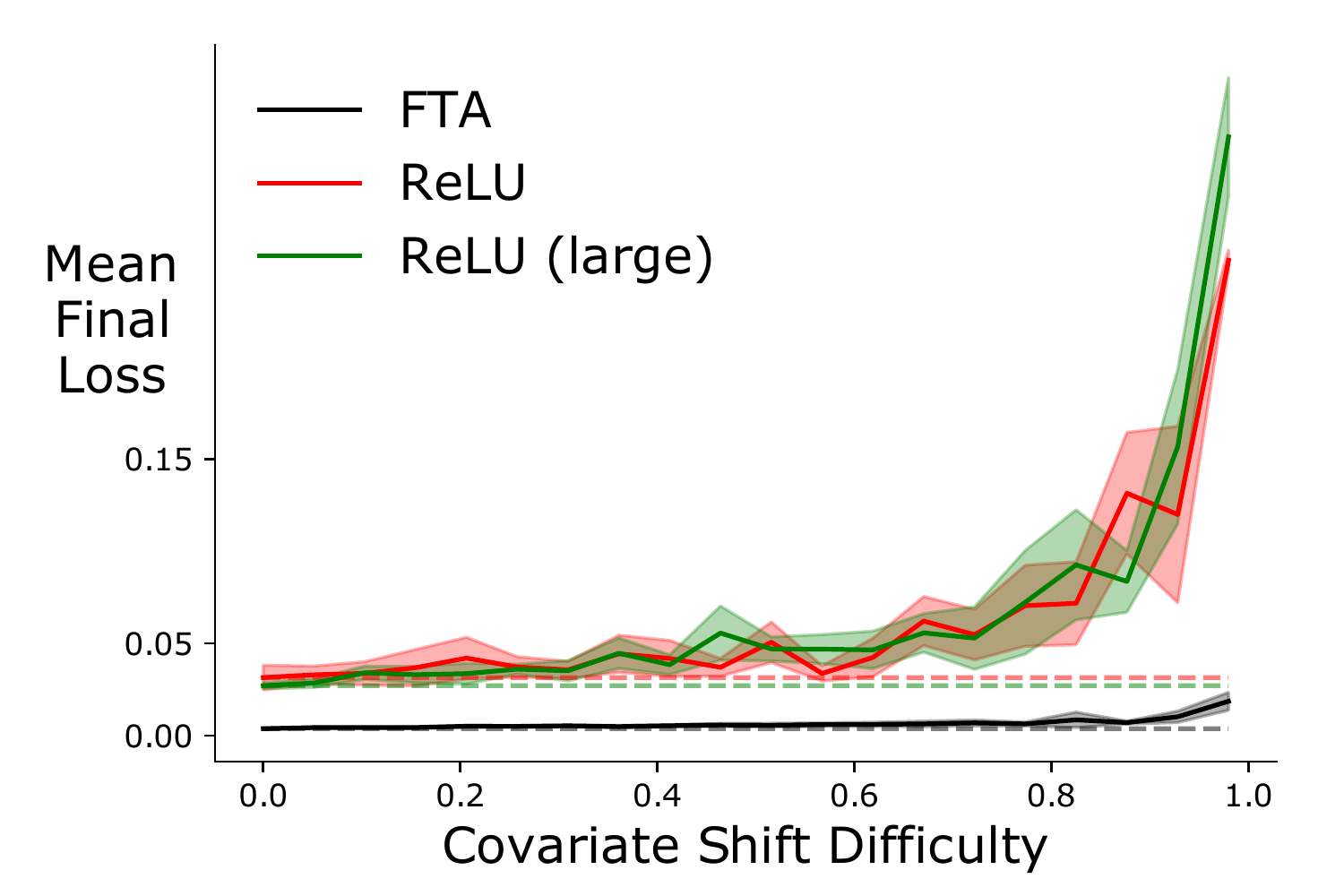}%
	  \label{fig:diff_sweep_secondary_large}%
	}
    
    \caption{Left, Middle: Learning curve for single run using loss over stationary distribution during training on low difficulty (dotted) and high difficulty (solid) settings for two layer neural nets, both having similar numbers of learnable parameters.  The curves are smoothed over a window of 50.  Right: The final loss over stationary distribution after 20K training iterations across a range of difficulty settings, shown as the mean of 10 runs with the shaded region corresponding $p=0.05$.  The final loss per run is computed as the mean over the final 2.5K iterations, with the iid setting $d=0$ (dotted) shown for baseline comparison.}
    \label{fig:diff-sweep-with-large}
\end{figure*}

\begin{figure*}[ht]
    \centering
	  \includegraphics[clip,width=\figwidthtwo]{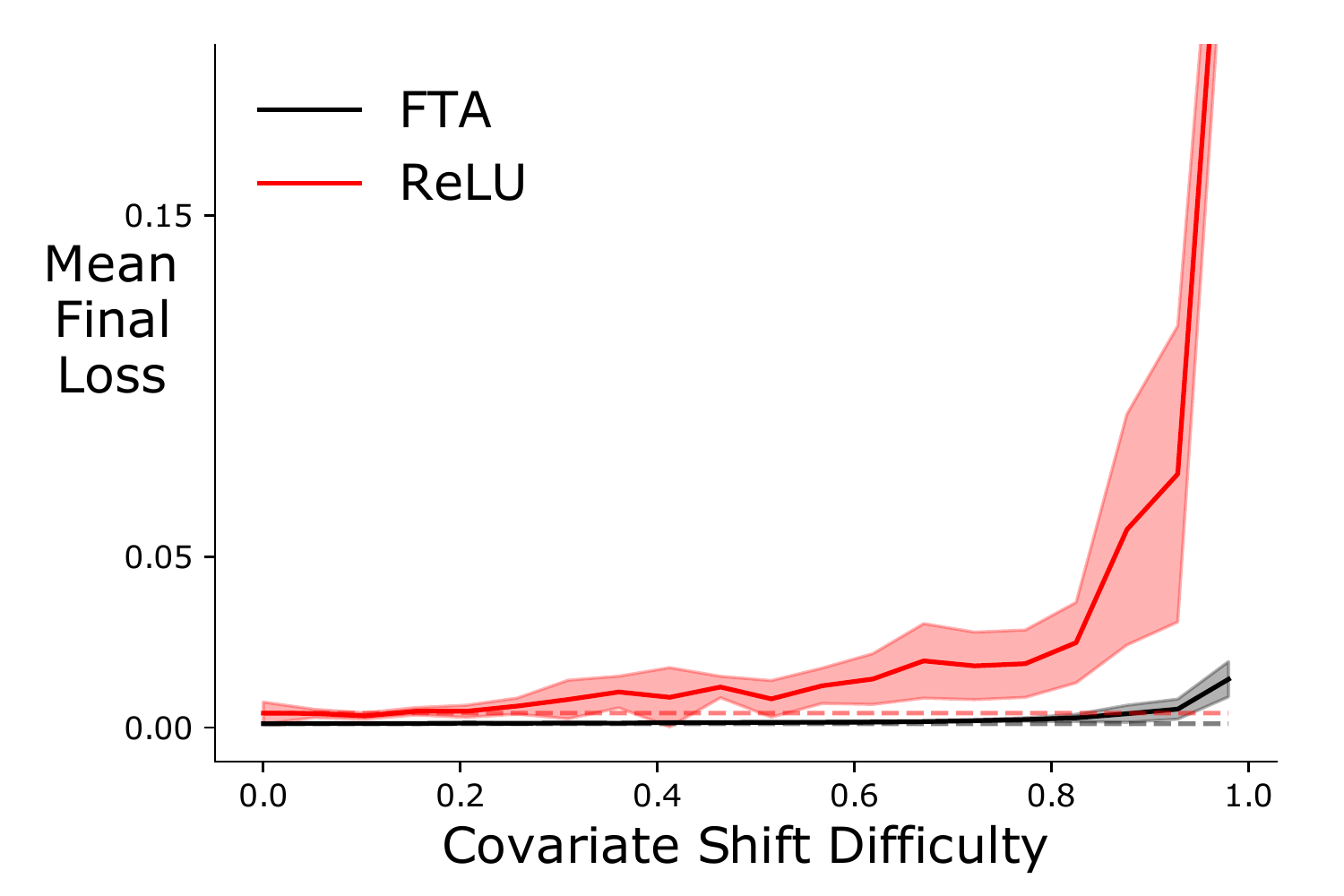}%
    
    \caption{The final loss over stationary distribution after 20K training iterations across a range of difficulty settings, shown as the mean of 10 runs with the shaded region corresponding $p=0.001$. For each weight update, a batch of 50 samples are drawn from each $X_t$, rather than a single sample as was done in other similar figures.}
    \label{fig:diff-sweep-with-batching}
\end{figure*}

\subsubsection{Piecewise Random Walk Problem Hyperparameter Selection} \label{sec-supervised-synthetic-param-selection}

\textbf{Experimental Parameters}

Configuration for Piecewise Random Walk problem across all experiments:

\begin{tabular}{ll}
    Iterations per training run & 20K \\
    Test batch size per iteration & 100 \\
    Train batch size per iteration & 1 \\
    Min, max difficulty settings $d$ & (0.0, 0.98) \\ 
    Number of difficulty settings swept & 20 (linear range over (min, max), inclusive) \\
    Training runs per difficulty setting & 10 for ReLU Large (Figure ~\ref{fig:diff-sweep-with-large})\\
    & 10 for batched run (Figure ~\ref{fig:diff-sweep-with-batching})\\
    & 30 otherwise\\ 
    Number of piecewise stationary segments & 50 \\
    Target function & $Y_t = \sin(2 \pi X_t^2)$ \\
    High probability bound $B$ & $(-1, 1)$
\end{tabular}

Configuration for the neural networks used on the Piecewise Random Walk problem:

\begin{tabular}{ll}
    \textbf{FTA} & \\
    Hidden layers & 2 \\
    Layer width $w$ & 40\\
    Layer binning interval bounds $(l, u)$ & $(-1, 1)$ \\
    Layer bins $k$ & 40 \\
    Layer sparsity parameter $\eta$ & $\frac{1}{40}$ \\
    \textbf{ReLU} & \\
    Hidden layers & 2 \\
    Layer width $w$ & 50 \\
    \textbf{ReLU Large} & \\
    Hidden layers & 2 \\
    Layer width $w$ & 200 \\
\end{tabular}

Layer widths $w$, bin count $k$, and sparsity parameter $\eta$ were chosen by sweeping a range, and choosing the best performing configuration on the iid setting (i.e. difficulty $d=0$).  In the sweep, the same $w, k, \eta$ are used for each hidden layer of FTA, and the same $w$ is used for each hidden layer of ReLU.  See the Parameter Selection Details below.  For all three networks, the ADAM optimizer~\cite{kingma2015adam} was used with parameters $(\beta_1, \beta_2) = (0.9, 0.999)$, and the following learning rates were swept for each difficulty setting.  

\begin{tabular}{ll}
    Learning rates $\lambda$ & 0.01, 0.005, 0.001, 0.0005, 0.0001, 0.00005, 0.00001, 0.000005
\end{tabular}

The results from the best performing learning rate are reported in Figures \ref{fig:diff_sweep_main}, \ref{fig:diff_sweep_secondary}, \ref{fig:diff_sweep_secondary_large}, and \ref{fig:diff-sweep-with-batching}.  That is, the performing learning rate is determined separately for each difficulty setting, using the mean final loss across 30 runs for the FTA and ReLU networks. 10 runs were used for ReLU Large and the batching run.  Results from all learning rates are reported in Figure \ref{fig:difficulty-lr-sensitivity} for three different difficulty settings.

\textbf{Parameter Selection Details}

Before sweeping any difficulty settings, first the network architectures were established for the FTA and ReLU networks so that loss was minimized on the iid setting ($d=0$).  We avoid the overparametrized regime, so that standard bias-variance tradeoff applies here~\cite{belkin2019reconciling}.  (However, the ReLU Large network is overparametrized.)  Learning rate was also swept during the architecture sweep, to ensure that the chosen architecture performed best within the learning rate range chosen for the main experiment depicted in Figure \ref{fig:diff_sweep_main}.

For the 2 layer FTA network, each hidden layer uses the same width $w$, 
and sparsity parameter $\eta$.  Those parameters, along with learning rate $\lambda$ were optimized in the following ranges with grid search:

\begin{itemize}
    \item Layer width $w \in \{10, 15, 20, 30, 40\}$
    \item Sparsity parameter $\eta = \frac{1}{w}$
    \item Learning rate $\lambda \in \{ 0.0005, 0.0001, 0.00005, 0.00001, 0.000005 \}$
\end{itemize}

For the 2 layer ReLU network, fully connected hidden layers were used. Width $w$ and learning rate $\lambda$ were found from the following ranges, also with grid search:

\begin{itemize}
    \item Layer width $w \in \{5, 10, 20, 30, 40, 50, 60, 70, 80, 100, 120 \}$
    \item Learning rate $\lambda \in \{ 0.005, 0.001, 0.0005, 0.0001, 0.00005 \}$
\end{itemize}

The learning rates are of different magnitude between FTA and ReLU because they perform best in different ranges, as explained in \ref{sec-supervised-learning-curves-sensitivity}. For the main experiments, where a range of difficulties are swept, the range of tested learning rates is shared between FTA and ReLU, and is broad enough to cover the optimal range for both.  Also note that we set $\eta = \frac{1}{k}$, rather than separately sweeping it.  This is because FTA performs consistently well with $\eta$ in this order of magnitude.  
\subsubsection{Proofs for covariate shift properties} \label{sec:covariate-shift-props}
\begin{thm}\label{thm:xt-equilibrium}
Let $\{E_t\}_{t\in \mathbb{N}}, \{S_t\}_{t\in \mathbb{N}}, \{X_t\}_{t\in \mathbb{N}}$ be stochastic processes such that
\begin{align}
    &E_t \sim \mathcal{N}(\epsilon; 0, \sigma^2) \nonumber \\
    &S_{t+1} = (1-c)S_t + E_t \label{eq:st-recur} \\
    &X_t|S_t=s_t \sim \mathcal{N}(x; s_t, \beta^2) \label{eq:xt-given-st}
\end{align}
where $c\in(0,1]$ and $s_t$ is a realization of $S_t$.  Then $\{X_t\}_{t\in \mathbb{N}}$ is a linear Gaussian AR(1) process with equilibrium distribution $\mathcal{N}(x; 0, \beta^2 + \frac{\sigma^2}{2c - c^2})$.
\end{thm}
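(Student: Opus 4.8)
The plan is to exploit the additive representation of the observation process and reduce everything to the already-established properties of $\{S_t\}_{t\in\mathbb{N}}$. The conditional law \eqref{eq:xt-given-st} is equivalent to writing $X_t = S_t + W_t$, where $W_t \sim \mathcal{N}(0,\beta^2)$ is drawn independently of $S_t$ and of the driving noise $\{E_s\}_{s}$. I would take as given, from the cited AR(1) theory~\citep{grunwald1995unified} summarized around \eqref{eq:walk-stationary-variance}, that since $c\in(0,1]$ forces the autoregressive coefficient $1-c$ to satisfy $|1-c|<1$, the process $\{S_t\}_{t\in\mathbb{N}}$ is a stationary, ergodic linear Gaussian AR(1) process whose equilibrium law is $\mathcal{N}(0,\nu^2)$ with $\nu^2 = \sigma^2/(2c-c^2)$.

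First I would establish Gaussianity of the marginals. Unrolling \eqref{eq:st-recur} shows each $S_t$ is a linear combination of the jointly independent Gaussians $S_0, E_0, \dots, E_{t-1}$, and adding the independent Gaussian $W_t$ keeps $X_t$ a linear combination of jointly Gaussian variables; hence $\{X_t\}_{t\in\mathbb{N}}$ is a Gaussian process and in particular every marginal $X_t$ is Gaussian. Next, to exhibit the autoregressive structure, I would substitute $S_t = X_t - W_t$ into \eqref{eq:st-recur}, obtaining
\[
X_{t+1} = (1-c)X_t + U_t, \qquad U_t \defeq E_t + W_{t+1} - (1-c)W_t,
\]
where $U_t$ is a zero-mean Gaussian. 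This displays the same autoregressive coefficient $1-c$ that governs $\{S_t\}$, which is the sense in which $\{X_t\}$ is first-order autoregressive.

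It then remains to compute the two parameters of the equilibrium distribution. In equilibrium, linearity of expectation gives $\E[X_t] = \E[S_t] + \E[W_t] = 0$. For the variance, independence of $S_t$ and $W_t$ yields $\Var[X_t] = \Var[S_t] + \Var[W_t] = \nu^2 + \beta^2 = \beta^2 + \sigma^2/(2c-c^2)$. Because $X_t$ is Gaussian, these two moments pin down the equilibrium law as $\mathcal{N}(0,\,\beta^2 + \sigma^2/(2c-c^2))$, which is exactly \eqref{eq:overall-stationary-variance} and the claimed distribution. Existence of this limiting marginal is inherited from the ergodicity of $\{S_t\}$: as $t\to\infty$ the law of $S_t$ converges to its equilibrium, and convolving with the fixed independent $\mathcal{N}(0,\beta^2)$ noise gives the stated limit for $X_t$.

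The step I expect to require the most care is the autoregressive characterization itself. The innovation $U_t$ is \emph{not} serially uncorrelated---$W_t$ appears in both $U_{t-1}$ and $U_t$---so strictly speaking $\{X_t\}$ is an ARMA$(1,1)$ process rather than an AR(1) process with white innovations. This is the one place where the clean ``AR(1)'' label is an idealization. It is, however, immaterial to the conclusion of the theorem: the equilibrium distribution depends only on the Gaussianity of $X_t$ and on the variance decomposition above, neither of which uses whiteness of $U_t$. I would therefore keep the proof's quantitative content in the mean and variance computation and treat the AR-coefficient derivation as structural context rather than as a load-bearing step.
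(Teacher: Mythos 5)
Your proof is correct on the substantive claim, and it diverges from the paper's argument at exactly the step where the paper is weakest. Both proofs start from the same decomposition $X_t = S_t + B_t$ (your $W_t$), and both observe the recursion $X_{t+1} = (1-c)X_t + \bigl(E_t + W_{t+1} - (1-c)W_t\bigr)$. The paper then makes a distributional replacement: it fixes $\alpha^2$ so that $E_{t-1}+B_t$ has the same law as $(1-c)B_{t-1}+\Theta_t$ with $\Theta_t \sim \mathcal{N}(0,\alpha^2)$ independent, declares $X_t = (1-c)X_{t-1}+\Theta_t$ to be AR(1), and reads off the equilibrium variance from the formula $\alpha^2/(1-(1-c)^2)$ with $\alpha^2 = \sigma^2 + (1-(1-c)^2)\beta^2$. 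You instead compute the equilibrium law directly---$\E[X_t]=0$ and $\Var(X_t)=\nu^2+\beta^2$ by independence of $S_t$ and the observation noise, with convergence inherited from the ergodicity of $\{S_t\}$ and convolution with the fixed $\mathcal{N}(0,\beta^2)$ noise. Your route is more elementary and, importantly, avoids a genuine flaw in the paper's step: the identity $E_{t-1}+B_t \overset{d}{=} (1-c)B_{t-1}+\Theta_t$ holds only as an equality of marginal laws, not jointly with $B_{t-1}$ (the left side is independent of $B_{t-1}$, the right side is not), and since $B_{t-1}$ is embedded in $X_{t-1}$ the paper never actually exhibits innovations that are independent of the past. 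Your ARMA$(1,1)$ diagnosis is exactly right and is confirmed by the autocovariances: in equilibrium $\Cov(X_t,X_{t+h})=(1-c)^h\nu^2$ for $h\ge 1$ while $\Var(X_t)=\nu^2+\beta^2$, so the lag-$2$ autocorrelation does not equal the square of the lag-$1$ autocorrelation whenever $\beta^2>0$ and $c<1$, which no AR(1) process can satisfy. Thus the theorem's literal ``AR(1)'' label is inaccurate (the paper's proof of that part does not go through), while the claimed equilibrium distribution---the only part used downstream, e.g.\ in Theorem~\ref{thm:covariate-shift-params}---is correct, and your argument establishes it cleanly.
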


\begin{proof}
    Begin by rewriting ~\eqref{eq:xt-given-st} to give $X_t$ as a sum of $S_t$ and a mean zero Gaussian r.v. $B_t$

    \begin{align}
        X_t &= S_t + B_t 
        &&B_t \sim \mathcal{N}(0, \beta^2), \text{ iid} \label{eq:domain-process-rewrite}
    \end{align}

    Now substitute according to ~\eqref{eq:st-recur}

    \begin{align}
        X_t &= (1-c)S_{t-1} + E_{t-1} + B_t && \label{eq:xtsteb}
    \end{align}

    Let Gaussian r.v. $\Theta \sim \mathcal{N}(0, \alpha^2)$ with some variance $\alpha^2$.

    $E_{t-1}, B_t$ are independent Gaussian, so $E_{t-1} + B_t$ can be written as a linear combination of $B_{t-1},\Theta_t$, since all are independent.  Specifically, fix $\alpha^2$ so that the following holds
    
    \begin{equation}
        E_{t-1} + B_t = (1-c)B_{t-1} + \Theta_t \label{eq:need-alpha}    
    \end{equation}
    
    and substitute into ~\eqref{eq:xtsteb}

    \begin{align*}
        X_t &= (1-c)S_{t-1} + (1-c)B_{t-1} + \Theta_t
        &&  \\
        &= (1-c)(S_{t-1} + B_{t-1}) + \Theta_t
        && \\
        &= (1-c)X_{t-1} + \Theta_t
    \end{align*}
    
    By inspection, $\{X_t\}_{t\in \mathbb{N}}$ is a linear Gaussian AR(1) process with coefficient $(1-c)$ and noise variance $\alpha^2$.  Elementary AR process theory gives the equilibrium distribution as

    \begin{align}
        &\mathcal{N} \left( 0, \frac{\alpha^2}{1-(1-c)^2} \right) && \label{eq:equilibrium-dist}
    \end{align}

    For ~\eqref{eq:need-alpha} to hold, we need the first and second moments of LHS and RHS to be equal.  All terms are mean zero Gaussian, so it suffices to show when the LHS and RHS have equal variance:

    \begin{align*}
        \Var(E_{t-1} + B_t) &= \Var((1-c)B_{t-1} + \Theta_t) \\
        \sigma^2 + \beta^2  &= (1-c)^2\beta^2 + \alpha^2 \\
        \alpha^2 &= \sigma^2 + \beta^2 - (1-c)^2\beta^2 \\
        \alpha^2 &= \sigma^2 + (1 - (1-c)^2)\beta^2
    \end{align*}

    Substituting into ~\eqref{eq:equilibrium-dist}

    \begin{align*}
        &\mathcal{N} \left( 0, \frac{\alpha^2}{1-(1-c)^2} \right) \\
        =\, &\mathcal{N} \left( 0, \frac{\sigma^2 + (1 - (1-c)^2)\beta^2}{1-(1-c)^2} \right) \\
        =\, &\mathcal{N} \left( 0, \frac{\sigma^2}{1-(1-c)^2} + \beta^2 \right) \\
        =\, &\mathcal{N} \left( 0, \frac{\sigma^2}{2c-c^2} + \beta^2 \right)
    \end{align*}
    
    So $\{X_t\}_{t\in \mathbb{N}}$ is linear Gaussian AR(1) with the desired equilibrium distribution.
    
\end{proof}

\textbf{Theorem ~\ref{thm:covariate-shift-params}}

Let difficulty parameter $d\in[0,1)$ and $\xi^2 = (\frac{B}{2})^2$ for some $B \in \mathbb{R}^+$. Then $\{X_t\}_{t\in \mathbb{N}}$ will have fixed equilibrium distribution $\mathcal{N}(0, \xi^2)$, invariant to $d$, if parameters $c, \sigma^2, \beta^2$ are set as follows
\begin{align*}
    c &= 1 - \sqrt{1 - d} \\
    \sigma^2 &= d^2 (\frac{B}{2})^2 \\
    \beta^2 &= (1-d)(\frac{B}{2})^2
\end{align*}

\begin{proof}
By the above theorem~\ref{thm:xt-equilibrium}, $\xi^2 = \beta^2 + \frac{\sigma^2}{2c-c^2} = (1-d)(\frac{B}{2})^2 + d^2 (\frac{B}{2})^2 / d = (\frac{B}{2})^2 $. This completes the proof. 
\end{proof}

\textbf{Theorem ~\ref{thm:d-zero-iid}}

$d=0$ and $S_0 = \delta(0)$ induces iid $X_t$ from $\mathcal{N}(x; 0, \xi^2)$, the equilibrium distribution of $\{X_t\}_{t\in \mathbb{N}}$. 

\begin{proof}
\begin{align*}
    S_{t+1} &= (1-c)S_t + E_t && (by ~\ref{eq:walk-of-means})\\ 
    &= S_t + E_t &&(d=0 \implies c=0)\\
    &= S_t &&(d=0 \implies \sigma^2=0 \implies E_t \sim \delta(0))\\
\end{align*}

Hence, $d=0$ implies $S_t$ is constant over all time $t$, and $S_0 = \delta(0)$ gives $S_t = \delta(0)$ for all time $t$. 

Let r.v. $B_t \sim \mathcal{N}(0, \beta^2)$ iid, then $X_t = S_t + B_t = B_t$ when $S_t$ is Dirac delta concentrating at zero. $\beta^2 = (1-d)(\frac{B}{2})^2 = (\frac{B}{2})^2$ by the setting from Theorem~\ref{thm:covariate-shift-params} and setting $d=0$. Hence, $d=0$ induces iid sampling from the equilibrium distribution of $X_t$. This completes the proof. 
\end{proof}

\fi
\end{document}